\documentclass[11pt]{article}

\usepackage[utf8]{inputenc}
\usepackage[T1]{fontenc}
\usepackage[a4paper,left=2cm,right=2cm,top=3cm]{geometry}
\usepackage{indentfirst}
\usepackage{amsmath,amssymb,amsthm,amsfonts,amsbsy}
\usepackage{mathtools}
\usepackage{nicefrac}
\usepackage{dsfont}

\usepackage{bm}
\usepackage{physics}
\usepackage{xfrac}
\usepackage{framed}

\usepackage[cal=euler]{mathalfa}
\usepackage{libertine}
\usepackage[libertine,smallerops]{newtxmath}

\usepackage{graphicx}
\usepackage{wrapfig}
\usepackage[font=small]{caption}
\usepackage{subcaption}
\usepackage{booktabs}
\usepackage{multirow}
\usepackage{float}
\usepackage{placeins}
\usepackage{enumitem}
\usepackage{algorithm}
\usepackage{algpseudocode}

\usepackage{authblk}
\usepackage[
  backend=biber,
  style=numeric-comp,
  sorting=none,
  giveninits=true,
  natbib=true
]{biblatex}
\DeclareFieldFormat{doi}{%
  \mkbibacro{DOI}\addcolon\space\nolinkurl{#1}%
}
\AtEveryBibitem{%
  \clearfield{url}%
  \clearfield{month}%
  \clearfield{eprint}%
  \clearfield{address}%
  \clearfield{location}%
  \clearfield{editor}%
}
\defbibheading{largebibliography}[\refname]{%
  \section*{\centerline{\huge{#1}}\vspace*{0.5cm}}%
  \addcontentsline{toc}{section}{#1}%
}
\usepackage[dvipsnames]{xcolor}
\usepackage{url}
\usepackage{doi}
\usepackage{hyperref}
\hypersetup{
  pdfauthor={Yordan Raykov and Rodrigo Veiga},
  pdftitle={Information-Geometric Forward Policy Training in GFlowNets},
  colorlinks,
  linktocpage=true,
  pdfstartpage=1,
  pdfstartview=FitV,
  breaklinks=true,
  pdfpagemode=UseOutlines,
  pageanchor=true,
  plainpages=false,
  bookmarksnumbered,
  bookmarksopen=true,
  bookmarksopenlevel=1,
  hypertexnames=true,
  pdfhighlight=/O,
  urlcolor=teal,
  linkcolor=Blue,
  citecolor=NavyBlue
}

\usepackage{etoolbox}
\usepackage[toc,page]{appendix}
\appto\appendix{\counterwithin{equation}{section}}

\newtheorem{prop}{Proposition}
\newtheorem{thrm}{Theorem}
\newtheorem{cor}{Corollary}
\DeclareMathOperator{\Diag}{Diag}
\DeclareMathOperator{\Cov}{Cov}

\newcommand{\KL}{\mathrm{KL}}
\DeclareMathOperator{\softmax}{softmax}

\title{Information-Geometric Forward Policy Training in GFlowNets\vspace*{0.8em}}

\author{Yordan Raykov}
\author{Rodrigo Veiga}
\affil{\small School of Mathematical Sciences, University of Nottingham,
Nottingham, UK}

\date{}

\begin{document}

\maketitle

\vspace{-2.5ex}
\begin{abstract}
Generative Flow Networks (GFlowNets) have emerged as a flexible framework for
amortised inference over discrete and mixed discrete-continuous objects,
requiring only an unnormalised target density specified through a reward. In
this work, we formulate forward-policy training in GFlowNets through the
information geometry of the induced trajectory sampler. Treating the forward
policy as an induced trajectory sampler, we show that its intrinsic first-order
geometry is given by the Fisher-Rao metric of the trajectory family, and that
the associated natural gradient provides the canonical local update whenever
the corresponding Fisher information is computable or accurately
approximable. We derive an exact decomposition of the trajectory Fisher into
per-step conditional second moments, which clarifies when temporal score
interactions vanish and when dense couplings remain under shared
parameterisation. This leads to three computational regimes: settings with
tractable exact Fisher information, settings where Monte Carlo estimators of
the expected Fisher are sufficient, and structure-exploitable settings in
which target locality or factorisation yields accurate approximations of the
Fisher expectation. In the latter case, graphical-model tools such as exact
marginalisation, separator methods, and belief propagation provide principled
surrogates for natural-gradient updates. The resulting framework turns target
structure into optimisation geometry and yields a tractable route to
structure-aware forward-policy training in GFlowNets. We illustrate the
framework empirically through examples comparing convergence and exploration
behaviour under Riemannian and Euclidean optimisation.
\end{abstract}

\section{Introduction}

GFlowNets are amortised samplers particularly suitable for distributions over discrete or compositional objects~\cite{bengio2021flow,lahlou2023_continuous}. Given an unnormalised target density specified through a reward \(R(x)\), a GFlowNet learns a stochastic construction policy whose terminal distribution is proportional to \(R\) \cite{bengio2023gflownetfoundations}. This is particularly appealing in discrete, discrete-continuous or ordinal inference problems, where samples from the target distribution are expensive or unavailable due to an intractable normalisation constant; however unnormalised reward evaluations are feasible~\cite{zhang2022discreteprobabilistic,
deleu2022bayesian,hu2023gflownetem}. Sequential construction is attractive because it turns sampling from a complex combinatorial distribution into a sequence of local decisions, yielding expressive
multimodal samplers without requiring a single global parametric family over the full object space. The difficulty is that GFlowNets learn from trajectories generated by their current forward policy while the target is observed only through unnormalised rewards.
Unlike MCMC, where iterative proposals are continually corrected by target-density evaluations, a GFlowNet seeks to amortise exploration into a forward policy (i.e. direct sampler). Regions that the current policy assigns negligible probability are therefore rarely sampled, even if they have high reward, so they contribute little training signal.

GFlowNets approach this amortised sampling problem by imposing flow-consistency constraints on the construction process rather than requiring samples from the target distribution. Flow matching and detailed balance enforce local conservation constraints whose solutions produce terminal probabilities proportional to reward \cite{bengio2023gflownetfoundations}. Trajectory balance moves this consistency condition to complete trajectories, improving long-horizon credit assignment without changing the target fixed point \cite{malkin2022trajectory}, while subtrajectory balance interpolates between local and global constraints to improve stability \cite{madan2023learning}. These objectives define correct fixed points, but training still depends on which trajectories are observed \cite{shen2023towards}.

This places the distribution induced by the forward policy at the centre of the training
problem. Let \(\theta\) denote the parameters of the forward policy. In a tabular
parameterisation, \(\theta\) contains one action logit for each admissible state-action
pair; in a neural parameterisation, it contains the shared network weights. The policy \(\pi_\theta\) induces a trajectory law
\(  q_\theta(\tau)=\prod_{t=0}^{T-1}\pi_\theta(a_t\mid s_t) \),
where \(\tau\) denotes a trajectory, \(s_t\) the state at step \(t\), and \(a_t\) the corresponding action. The policy hence induces a terminal law \(q_\theta(x)\), which is the amortised approximation to the
reward-proportional target distribution. Thus the object being optimised is not merely
the coordinate vector \(\theta\), but a parameterised family of probability laws. Although GFlowNet objectives characterise the desired terminal law at exact optima,
practical training is mediated by finite-capacity forward, backward, and flow
parameterisations. When balance constraints are only approximately realisable over the
state directed acyclic graph (DAG), the learned terminal law may deviate from the target
\cite{shen2023towards,silva2025gflownets}. This motivates asking how to update the induced
distribution itself, rather than only how to update its coordinates.

Although GFlowNet trajectories are discrete, a differentiable forward policy traces a smooth family \(\{q_\theta\}\) of discrete probability laws as its parameters vary. The interior of the probability simplex carries the Fisher-Rao geometry \cite{amari1982differential}, which
describes the second-order local change in KL divergence between nearby probability laws. Pulling this geometry back through the map
\(\theta\mapsto q_\theta\) gives the Fisher information \(F(\theta)\) on policy
parameters. The natural gradient is consequently the steepest descent direction per unit local change in the induced trajectory law, rather than per unit Euclidean change in \(\theta\); its induced tangent direction
on the statistical model is invariant under smooth full-rank reparameterisations
\cite{amari1998natural}. Applying this principle to GFlowNets is not automatic: state-dependent action masks produce different local probability simplices; weakly visited states and nearly deterministic decisions yield small or zero Fisher curvature; and shared neural parameters couple decisions across states and construction steps. The computational question is therefore when the trajectory Fisher can
be formed, solved, or approximated reliably. It may be diagonal or block-diagonal in simple tabular cases, consistently estimable from
trajectories in Monte Carlo regimes, or approximable through graphical-model machinery when the target factorises.

\paragraph{Related work.} 

A complementary line of GFlowNet work addresses poor coverage by changing data collection or introducing additional samplers. Adaptive Teachers train an
auxiliary behaviour model to prioritise regions where the primary sampler has high error \cite{kim2025adaptive}; SA-GFN~\cite{madan2025towards} and ACE~\cite{dallantonia2026avoid} likewise use separate exploration policies. Boosted GFlowNets instead train a sequence of samplers on residual rewards \cite{dallantonia2026boosted}. These approaches alter where training data come from or how multiple samplers are combined. They are complementary to our question: holding the canonical forward sampler and balance objective fixed, what geometry should govern its parameter update?

Natural gradients are classical in statistical learning and policy optimisation \cite{amari1998natural,kakade2001natural,martens_2020}, and scalable Fisher approximations such as K-FAC are well established \cite{martens2015kfac}. Our contribution is therefore not a generic Fisher-based optimiser. Unlike natural policy gradients for expected-return objectives, we retain the GFlowNet balance objective and use the trajectory law induced by its forward policy to define the local metric. We derive the
resulting Fisher structure under sequential construction and analyse when sampled or structure-informed approximations preserve its natural direction. This also distinguishes the model Fisher used here from arbitrary
outer products of training gradients, whose use to form an empirical Fisher can be misleading \cite{kunstner2019limitations}.

Our use of Fisher-Rao geometry is also distinct both from continuous relaxations and from geometric flow matching for discrete data. Concrete and Gumbel-Softmax replace categorical samples by differentiable continuous
surrogates \cite{maddison2017the,jang2017categorical}, whereas
Fisher-Flow and categorical flow matching transport probability mass on categorical statistical manifolds
\cite{davis2024fisher,cheng2024categorical}. We retain the validity-preserving discrete construction process and use Fisher-Rao geometry to update its trajectory sampler.

\paragraph{Main contributions.}
\begin{itemize}[leftmargin=*,nosep]
	\item We formulate forward-policy training on the statistical manifold of induced trajectory laws and identify the corresponding trajectory Fisher on its identifiable tangent space.
	\item We derive an exact decomposition of the trajectory Fisher into per-step conditional second moments, and make explicit how this yields occupancy-weighted block structure in tabular policies but dense couplings under shared parameterisation.
	\item We organise natural-gradient GFlowNet training into exact, sampled, and structure-exploitable regimes, and give relative-error guarantees, with additive sufficient conditions, under which target- or construction-informed Fisher surrogates approximate the damped natural-gradient direction.
	\item We show empirically, across tabular and neural discrete benchmarks and on Bayesian DAG learning with the real Sachs protein-signalling data, that Fisher-Rao preconditioning can improve convergence and distribution matching relative to Euclidean training using exact, sampled, and structured Fisher operators.

\end{itemize}

\paragraph{Outline.} Section~\ref{sec:preliminaries} reviews standard background on natural gradients and Fisher-Rao geometry. Sections~\ref{sec:fisher_rao_structure}-\ref{sec:examples} present our contributions: Section~\ref{sec:fisher_rao_structure} derives the Fisher-Rao geometry
 of policy-induced samplers, Section~\ref{sec:natgrad_gflownet_training} develops the three training regimes and approximation guarantees, and Section~\ref{sec:examples} reports numerical examples. The supplement contains
proofs and additional experiments.

 \paragraph{Reproducibility.} Code in \url{https://github.com/rodsveiga/infogeometric_gflows}.

\section{Preliminaries: natural gradients and Fisher-Rao geometry}
\label{sec:preliminaries}

Let \(\mathcal M=\{q_\theta:\theta\in\Theta\}\) be a differentiable statistical model, with open parameter space \(\Theta\subseteq\mathbb R^d\). In statistical optimisation the object of interest is the probability law \(q_\theta\), not the particular coordinate vector \(\theta\). A Euclidean update is therefore coordinate-dependent: two smooth parameterisations of the same family can induce different Euclidean gradient directions on the same statistical model.

The Fisher information matrix
\begin{equation}
\label{eq:prelim_fisher}
F(\theta)
\coloneqq
\mathbb E_{z\sim q_\theta}
\Big[
\nabla_\theta \log q_\theta(z)\,
\nabla_\theta \log q_\theta(z)^\top
\Big]
\end{equation}
defines the Fisher-Rao Riemannian metric on the identifiable tangent space of \(\mathcal M\), under standard regularity conditions \cite{amari2000methods}. Its role is characterised by the local KL divergence: for a small parameter displacement $\Delta\theta\in \mathbb R^d$ with $\theta+\Delta\theta\in \Theta$,
\begin{equation}
\label{eq:prelim_kl_expansion}
\KL(q_\theta\|q_{\theta+\Delta\theta})
=
\tfrac{1}{2}\,\Delta\theta^\top F(\theta)\Delta\theta
+
o(\|\Delta\theta\|^2),
\end{equation}
so the Fisher metric measures local displacement in distribution space rather than in ambient parameter space. For a differentiable objective \(\mathcal J:\Theta\to\mathbb R\), the identifiable natural gradient is
\begin{equation}
\label{eq:prelim_natgrad}
\widetilde\nabla_\theta \mathcal J(\theta)
\coloneqq
F(\theta)^\dagger\nabla_\theta \mathcal J(\theta),
\end{equation}
where \(F^\dagger\) denotes an inverse, pseudo-inverse, or damped inverse, depending on identifiability and conditioning. Equivalently, up to positive scaling, the natural-gradient direction is characterised
by the local KL trust-region problem:
\begin{equation}
\label{eq:prelim_trust_region}
\max_{\Delta\theta}\;
\nabla_\theta\mathcal J(\theta)^\top\Delta\theta
\quad\text{s.t.}\quad
\tfrac12\Delta\theta^\top F(\theta)\Delta\theta\le\varepsilon ,
\end{equation}
for some $\varepsilon > 0$.Thus, the natural gradient is the steepest-ascent direction when distance is measured by local KL geometry. 
This yields two standard consequences: the induced natural-gradient tangent vector on
\(\mathcal M\) is invariant under smooth reparameterisations of identifiable
coordinates, and the Fisher metric rescales directions according to their local effect
on the distribution. Full derivations, including the KL expansion, trust-region characterisation, reparameterisation calculation, and local descent properties, are given in Appendix~\ref{app:prelim_natgrad_properties}.

Focusing on finite discrete distributions, let \(\mathcal X\) be finite and let \(\mathcal M=\{p_\theta:\theta\in\Theta\}\) be a differentiable family of strictly positive probability mass functions on \(\mathcal X\). Then
\begin{equation}
\label{eq:prelim_discrete_fisher}
F(\theta)
=
\sum_{x\in\mathcal X}
p_\theta(x)\,
\nabla_\theta\log p_\theta(x)\,
\nabla_\theta\log p_\theta(x)^\top.
\end{equation}
Hence finite discrete models are Fisher-Rao manifolds in the same
information-geometric sense as continuous models. Under the square-root embedding \(u_i=2\sqrt{p_i}\), the simplex interior is isometric to the positive orthant of the sphere of radius \(2\), making this continuous geometry explicit; the precise statement and boundary caveats are given in Appendix~\ref{app:prelim_discrete_geometry}.

For policy-induced samplers, the relevant discrete building block is the categorical transition rule. At a state \(s\), a forward policy chooses an action \(a\in\mathcal A(s)\) from a categorical distribution \(\pi_\theta(\cdot\mid s)\). If this categorical law is parameterised locally by logits \(\ell\in\mathbb R^{|\mathcal A(s)|}\), then \(\nabla_\ell\log \pi(a\mid s)=\mathrm{e}_a-\pi(\cdot\mid s)\), where $\mathrm{e}_a$ is the standard basis vector corresponding to 
action $a$. The corresponding Fisher block is
\begin{equation}
\label{eq:prelim_cat_block}
C(\pi(\cdot\mid s))
=
\Diag(\pi(\cdot\mid s))
-
\pi(\cdot\mid s)\pi(\cdot\mid s)^\top ,
\end{equation}
where \(\Diag(v)\) denotes the diagonal matrix with diagonal entries given by
the vector \(v\). This block is singular in the all-ones direction, reflecting the invariance of softmax probabilities to additive shifts of the logits. Although the Fisher-Rao metric is always well defined on a fixed-support discrete family, its computability depends on structure. For an exponential family
\(p_\theta(x)=\exp\{\theta^\top T(x)-A(\theta)\}\), the Fisher is \(F(\theta)=\nabla_\theta^2 A(\theta)=\Cov_{p_\theta}[T(X)]\).
Thus Fisher computation is tractable when the required moments are tractable, as in factorised or low-treewidth models, but may be as hard as inference itself in globally coupled combinatorial models. Representative examples are collected in
Appendix~\ref{app:prelim_discrete_geometry}. 

\section{Fisher-Rao geometry of GFlowNets}
\label{sec:fisher_rao_structure}

A GFlowNet forward policy defines a stochastic construction process on a state
DAG: starting from a source state \(s_0\), actions extend a partial object until a terminal object \(x\in\mathcal X\) is produced.  For a trajectory
\(
\tau=(s_0,a_0,s_1,a_1,\dots,s_T)\) with \( s_{t+1}=f(s_t,a_t)\),
the induced trajectory law is
\begin{equation}
\label{eq:traj_law_main}
q_\theta(\tau)
=
\prod_{t=0}^{T-1}\pi_\theta(a_t\mid s_t).
\end{equation}
The terminal law is obtained by marginalisation,
\(q_\theta(x)
=
\sum_{\tau:\Phi(\tau)=x} q_\theta(\tau)
\), where \(\Phi(\tau)\) denotes the terminal object of trajectory \(\tau\).
GFlowNet objectives such as Trajectory Balance (TB) or Detailed Balance (DB) train the forward
policy together with auxiliary backward-policy or flow quantities so that, at a realisable optimum, the induced terminal law matches a target proportional to reward. We summarise the standard realisability assumptions in Appendix~\ref{app:gflownet_realizability}.

The sequential factorisation in \eqref{eq:traj_law_main} is available at the trajectory
level and here we study the corresponding trajectory Fisher:
\begin{equation}
\label{eq:exact_fisher_main}
F(\theta)
\coloneqq
\mathbb E_{\tau\sim q_\theta}
\Big[
\nabla_\theta \log q_\theta(\tau)\,
\nabla_\theta \log q_\theta(\tau)^\top
\Big].
\end{equation}
The induced terminal Fisher coincides with \eqref{eq:exact_fisher_main} in canonical
constructions with a unique trajectory per terminal object. In non-canonical
constructions the two differ by the conditional covariance of trajectory scores given
the terminal object; see Appendix~\ref{app:policy_induced_fisher}. We work with the trajectory Fisher because it is attached directly to the sequential forward policy and admits the per-step decomposition below. Let \(\mathcal J(\theta)\) denote any
differentiable objective for the forward
sampler, and write \(h(\theta)\coloneqq\nabla_\theta \mathcal J(\theta)\). The identifiable score tangent subspace of the trajectory model is
\begin{equation}
\label{eq:score_tangent_main}
\mathcal U_\theta
\coloneqq
\operatorname{range}F(\theta)
=
\operatorname{span}\!\left\{
\nabla_\theta\log q_\theta(\tau):q_\theta(\tau)>0
\right\}.
\end{equation} 
Because \(F\) may be singular, interpreting \(F^\dagger h\) as the solution of
\(Fv=h\) on the identifiable tangent space requires the objective gradient to have no
component in a score-null direction. This compatibility holds automatically for
score-form gradients.

\begin{prop}[Score-form gradients lie in the Fisher range]
\label{prop:score_compatible_gradient}
Let \(z(\tau)=\nabla_\theta\log q_\theta(\tau)\) satisfy
\(\mathbb E_{q_\theta}\|z(\tau)\|^2<\infty\), and let
\(r(\tau)\) be a measurable real-valued function of the trajectory satisfying
\(\mathbb E_{q_\theta}[r(\tau)^2]<\infty\). If
\(h=\mathbb E_{q_\theta}[z(\tau)r(\tau)]\), then
\[
h\in\mathcal U_\theta
\qquad\text{and}\qquad
h^\top F^\dagger h
\le
\mathbb E_{q_\theta}[r(\tau)^2].
\]
\end{prop}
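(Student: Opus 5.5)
The plan is to view everything as a Hilbert-space projection in \(L^2(q_\theta)\). Both claims follow from the orthogonal decomposition of \(\mathbb R^d\) into \(\operatorname{range}F\) and \(\ker F\), together with a Cauchy--Schwarz argument.

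\emph{Step 1: range membership.} I first show \(h\perp\ker F(\theta)\). Since \(F\) is symmetric, \(\operatorname{range}F=(\ker F)^\perp\), so this gives \(h\in\mathcal U_\theta\). Take \(v\in\ker F\). Then
\[
0=v^\top F v=\mathbb E_{q_\theta}\big[(v^\top z(\tau))^2\big],
\]
so \(v^\top z(\tau)=0\) for \(q_\theta\)-almost every \(\tau\); in the finite discrete setting this holds for every \(\tau\) with \(q_\theta(\tau)>0\). Hence
\[
v^\top h=\mathbb E_{q_\theta}\big[(v^\top z)\,r\big]=0 .
\]
This argument also confirms the span characterisation in \eqref{eq:score_tangent_main}. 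The finiteness assumptions guarantee that \(F\) and \(h\) are well defined: the bound \(\mathbb E|z r|\le(\mathbb E\|z\|^2\,\mathbb E r^2)^{1/2}\) ensures the expectation defining \(h\) exists.

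\emph{Step 2: variational form of \(h^\top F^\dagger h\).} Here I take \(F^\dagger\) to be the Moore--Penrose pseudo-inverse. For \(h\in\operatorname{range}F\), I would use the identity
\[
h^\top F^\dagger h=\sup_{v:\,v^\top F v>0}\frac{(v^\top h)^2}{v^\top F v}.
\]
To justify it, diagonalise \(F=\sum_i\lambda_i u_iu_i^\top\) over its positive eigenvalues. Then \(h=\sum_i c_iu_i\) with no kernel component, and Cauchy--Schwarz in the weighted inner product gives
\[
\Big(\sum_i v_ic_i\Big)^2\le\Big(\sum_i\lambda_iv_i^2\Big)\Big(\sum_i c_i^2/\lambda_i\Big),
\]
with equality at \(v=F^\dagger h\). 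This reduction is the main technical point. It is where Step 1 is essential: if \(h\) had a kernel component, the supremum would be infinite, whereas \(h^\top F^\dagger h\) would silently discard that component.

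\emph{Step 3: the bound.} For any \(v\) with \(v^\top F v>0\), apply Cauchy--Schwarz in \(L^2(q_\theta)\):
\[
(v^\top h)^2=\big(\mathbb E[(v^\top z)\,r]\big)^2\le\mathbb E\big[(v^\top z)^2\big]\;\mathbb E[r^2]=v^\top Fv\;\mathbb E[r^2].
\]
Dividing by \(v^\top F v\) and taking the supremum from Step 2 yields \(h^\top F^\dagger h\le\mathbb E_{q_\theta}[r(\tau)^2]\).

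Two interpretive remarks. First, equality holds exactly when \(r\) lies in the closed span of the score components; in general \(h^\top F^\dagger h\) is the squared norm of the \(L^2\) projection of \(r\) onto that span. Second, if \(F^\dagger\) is read as a damped inverse \((F+\lambda I)^{-1}\), the bound still holds, since \((F+\lambda I)^{-1}\preceq F^\dagger\) on \(\operatorname{range}F\) and \(h\) lies in that range by Step 1.
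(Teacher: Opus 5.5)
Your proof is correct. Its first half agrees in substance with the paper's, but the inequality is obtained by a different route.

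\textbf{Range membership.} The paper defines \(T:L^2(q_\theta)\to\mathbb R^{\dim(\theta)}\) by \(Tr=\mathbb E_{q_\theta}[z r]\), with adjoint \(T^\ast v=z^\top v\) and \(TT^\ast=F\), and reads off \(h=Tr\in\operatorname{range}T=\operatorname{range}F\). Your computation \(v^\top Fv=\mathbb E[(v^\top z)^2]=\|T^\ast v\|^2\) is exactly the identity \(\ker F=\ker T^\ast\) that underlies that range equality.

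\textbf{The inequality.} The paper uses the operator identity \(h^\top F^\dagger h=\langle r,T^\ast(TT^\ast)^\dagger Tr\rangle=\|\Pi_{\operatorname{range}T^\ast}r\|^2_{L^2(q_\theta)}\). This gives an exact formula and the equality case in one line. It relies on the standard fact that \(T^\ast(TT^\ast)^\dagger T\) is the orthogonal projector onto \(\operatorname{range}T^\ast\). You instead use the dual variational form of \(h^\top F^\dagger h\) together with Cauchy--Schwarz in \(L^2(q_\theta)\). Your argument stays entirely in finite dimensions and is self-contained; you recover the projection picture only as a remark.

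\textbf{A simplification.} For the bound itself you need neither the full supremum identity nor Step 1. Take \(v=F^\dagger h\) and use \(F^\dagger F F^\dagger=F^\dagger\). This gives \((h^\top F^\dagger h)^2\le (h^\top F^\dagger h)\,\mathbb E_{q_\theta}[r^2]\) directly, and the case \(h^\top F^\dagger h=0\) is trivial. This also handles the edge case \(h=0\), where your maximiser \(F^\dagger h=0\) does not satisfy \(v^\top Fv>0\). Step 1 is what makes the supremum identity meaningful, and hence the reading of \(F^\dagger h\) as the natural direction; it does not drive the inequality.
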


\noindent\emph{Proof.}
See Appendix~\ref{app:score_compatibility}.
\par\medskip
Section~\ref{sec:natgrad_gflownet_training} makes this representation explicit for
the TB forward-policy gradient. Thus score-form GFlowNet gradients do not request movement along
\(\ker(F)\), although they may still have components in directions of weak positive
curvature. This range property is distinct from the martingale-centering argument
below, which removes cross-time terms from the Fisher.

Let \(u_t(\theta;\tau)\coloneqq\nabla_\theta\log\pi_\theta(a_t\mid s_t)\), with
\(u_t=0\) after termination, so $\nabla_\theta\log q_\theta(\tau)=\sum_{t\ge 0}u_t(\theta;\tau)$. Whenever \(h(\theta)\in\mathcal U_\theta\), the preliminaries identify
\(F(\theta)^\dagger h(\theta)\) as the undamped natural-gradient direction. Sequential
construction gives the following additional structure.

\begin{thrm}[Trajectory Fisher decomposition]
\label{thm:riemannian_policy_induced}
Assume each local policy is differentiable and strictly positive on a valid-action
support independent of \(\theta\), and assume a bounded construction horizon or, more generally,
\(\sum_{t\ge0}\mathbb E\|u_t\|^2<\infty\). Let
\(\mathcal F_t\coloneqq\sigma(s_0,a_0,\dots,a_{t-1},s_t)\), where
\(\sigma(\cdot)\) denotes the sigma-algebra generated by the enclosed random variables.
Then
\(\mathbb E[u_t(\theta;\tau)\mid\mathcal F_t]=0\),
and
\begin{equation}
\label{eq:exact_temporal_fisher_main}
F(\theta)
=
\sum_{t\ge0}
\mathbb E_{\tau\sim q_\theta}
\big[
u_t(\theta;\tau)u_t(\theta;\tau)^\top
\big].
\end{equation}
\end{thrm}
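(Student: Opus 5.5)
The proof will have two parts: the conditional centring of each per-step score, and then the expansion of the outer product of the summed score, in which the centring kills every cross-time term. For the first part, fix \(t\) and condition on \(\mathcal F_t\). On the event that the trajectory has not terminated by time \(t\), the state \(s_t\) is \(\mathcal F_t\)-measurable, and \(a_t\) has conditional law \(\pi_\theta(\cdot\mid s_t)\) supported on the valid-action set \(\mathcal A(s_t)\), which does not depend on \(\theta\). Hence
\[
\mathbb E[u_t\mid\mathcal F_t]=\sum_{a\in\mathcal A(s_t)}\pi_\theta(a\mid s_t)\nabla_\theta\log\pi_\theta(a\mid s_t)=\nabla_\theta\sum_{a\in\mathcal A(s_t)}\pi_\theta(a\mid s_t)=\nabla_\theta 1=0 .
\]
Strict positivity makes the logarithm well defined. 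Because the action set is finite and fixed, the interchange of the derivative and the sum is legitimate. On the terminated event we have \(u_t=0\) by convention, so the identity holds trivially there.

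For the second part, write \(z=\sum_{t\ge0}u_t\) and first treat a bounded horizon \(T_{\max}\). Expanding gives
\[
zz^\top=\sum_t u_tu_t^\top+\sum_{s<t}\bigl(u_su_t^\top+u_tu_s^\top\bigr).
\]
For \(s<t\), the vector \(u_s\) depends only on \((s_s,a_s)\), both of which are determined by \(\mathcal F_t\), so \(u_s\) is \(\mathcal F_t\)-measurable. The tower property then gives
\[
\mathbb E[u_su_t^\top]=\mathbb E\bigl[u_s\,\mathbb E[u_t\mid\mathcal F_t]^\top\bigr]=0 .
\]
Integrability of these products follows from Cauchy–Schwarz and the square-integrability of each \(u_t\). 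Taking expectations of the expansion and using the definition \eqref{eq:exact_fisher_main} then yields \eqref{eq:exact_temporal_fisher_main}.

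The main obstacle is the unbounded-horizon case under the sole assumption \(\sum_t\mathbb E\|u_t\|^2<\infty\). Here I would treat the partial sums \(M_n=\sum_{t<n}u_t\) as a vector martingale with respect to the filtration \((\mathcal F_{n})\); more precisely, \(M_n\) is \(\mathcal F_n\)-measurable because \(u_t\) is determined by \((s_t,a_t)\). The finite-horizon computation shows that the increments are orthogonal in \(L^2\), so
\[
\mathbb E\|M_n-M_m\|^2=\sum_{m\le t<n}\mathbb E\|u_t\|^2 .
\]
The summability hypothesis therefore makes \((M_n)\) Cauchy in \(L^2\), and it converges in \(L^2\) to a limit. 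One must identify this limit with \(\nabla_\theta\log q_\theta(\tau)\). Since trajectories terminate almost surely, the sum defining \(z\) is pointwise finite for each realised trajectory, so \(M_n\to z\) almost surely; the almost-sure and \(L^2\) limits then coincide. Finally, \(L^2\) convergence gives \(\mathbb E[M_nM_n^\top]\to\mathbb E[zz^\top]\), and each \(\mathbb E[M_nM_n^\top]\) equals the finite sum \(\sum_{t<n}\mathbb E[u_tu_t^\top]\). Letting \(n\to\infty\) produces \eqref{eq:exact_temporal_fisher_main}, with the series converging absolutely in norm by the same summability assumption.
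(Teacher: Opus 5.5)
Your proposal is correct and follows the paper's own proof: you centre $u_t$ conditionally using $\sum_{a}\nabla_\theta\pi_\theta(a\mid s_t)=0$, and you kill the cross-time terms $\mathbb E[u_s u_t^\top]$ using $\mathcal F_t$-measurability of $u_s$ together with the tower property. Your $L^2$-martingale argument for the unbounded horizon adds rigour the paper lacks. It uses orthogonal increments and the Cauchy property, and identifies the limit with the score via almost-sure termination, which is implicit in $q_\theta$ being a law on complete trajectories; the paper only asserts $L^2$ convergence and the interchange of expectation with the infinite double sum.
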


\noindent\emph{Proof.}
See Appendix~\ref{app:policy_induced_fisher}.
\par\medskip
The per-step scores are therefore martingale differences, which removes temporal cross
terms from the trajectory Fisher. Combined with the preliminaries, this decomposition
shows that \(F^\dagger h\) represents the intrinsic natural-gradient tangent direction
whenever \(h\in\mathcal U_\theta\). It does not imply diagonal or block-diagonal structure in
parameter coordinates; that depends on the policy parameterisation.

In the tabular case this distinction becomes explicit. Suppose every non-terminal
state \(s\in\mathcal S\) has its own logit vector
\(\theta_s\in\mathbb R^{|\mathcal A(s)|}\), and write
\(\pi_s=\softmax(\theta_s)\) and
\(C(\pi_s)=\Diag(\pi_s)-\pi_s\pi_s^\top\). Define its expected occupancy of state \(s\) by
\begin{equation}
\label{eq:occupancy_main}
d_\theta(s)
\coloneqq
\mathbb E_{\tau\sim q_\theta}
\Big[
\sum_{t\ge 0}\mathds{1}\{s_t=s\}
\Big] ,
\end{equation}
where \(\mathds{1}\{\cdot\}\) denotes the indicator function.
\begin{cor}[Exact occupancy-weighted Fisher for tabular policies]
\label{cor:exact_tabular_fisher_main}
Under this tabular softmax parameterisation, the Fisher and corresponding
exact statewise natural-gradient component are
\[
F(\theta)
\!=\!
\bigoplus_{s\in\mathcal S} d_\theta(s)\,C(\pi_s),
\qquad
\widetilde\nabla_{\theta_s}\mathcal J(\theta)
\!=\!
\big(d_\theta(s)\,C(\pi_s)\big)^\dagger
\nabla_{\theta_s}\mathcal J(\theta).
\]
\end{cor}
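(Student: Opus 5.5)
The plan is to start from Theorem~\ref{thm:riemannian_policy_induced}, which already gives $F(\theta)=\sum_{t\ge0}\mathbb E[u_t u_t^\top]$, and to evaluate each summand in the tabular softmax coordinates. Write $\theta=(\theta_s)_{s\in\mathcal S}$. Since $\pi_\theta(a\mid s)=\softmax(\theta_s)_a$ depends only on the block $\theta_s$, the per-step score $u_t=\nabla_\theta\log\pi_\theta(a_t\mid s_t)$ is supported on the single block indexed by $s_t$. On that block it equals $\mathrm e_{a_t}-\pi_{s_t}$, by the categorical score formula recalled before \eqref{eq:prelim_cat_block}. Hence
\[
u_tu_t^\top=\sum_{s\in\mathcal S}\mathds 1\{s_t=s\}\,\iota_s(\mathrm e_{a_t}-\pi_s)(\mathrm e_{a_t}-\pi_s)^\top\iota_s^\top ,
\]
where $\iota_s$ denotes the embedding of the $s$-block. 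Every off-diagonal block $(s,s')$ with $s\neq s'$ therefore vanishes pathwise, already before taking expectations. Within-time products only ever couple a block to itself, and cross-time terms have been removed by the theorem.

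Next I compute the $s$-block by conditioning on $\mathcal F_t$. On the event $\{s_t=s\}$, the action $a_t$ is drawn from $\pi_s$, so
\[
\mathbb E\big[(\mathrm e_{a_t}-\pi_s)(\mathrm e_{a_t}-\pi_s)^\top\mid\mathcal F_t\big]=C(\pi_s).
\]
This identity is just the computation behind \eqref{eq:prelim_cat_block}. By the tower property,
\[
\mathbb E\big[\mathds 1\{s_t=s\}(\cdots)\big]=\Pr(s_t=s)\,C(\pi_s).
\]
Summing over $t$ gives $\sum_t\Pr(s_t=s)\,C(\pi_s)=d_\theta(s)\,C(\pi_s)$, using \eqref{eq:occupancy_main} and monotone convergence to swap the sum and the expectation. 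This establishes $F(\theta)=\bigoplus_s d_\theta(s)C(\pi_s)$. The swap of the infinite sum and the expectation is covered by the integrability hypothesis $\sum_t\mathbb E\|u_t\|^2<\infty$; a bounded horizon makes it trivial.

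For the natural-gradient statement, I use the fact that the Moore--Penrose pseudo-inverse of a block-diagonal matrix is the block-diagonal matrix of blockwise pseudo-inverses. This is checked directly from the four Penrose conditions. The same holds for damped inverses $(F+\lambda I)^{-1}$, and for any of the inverse choices allowed in \eqref{eq:prelim_natgrad}. Therefore $(F^\dagger h)_s=(d_\theta(s)C(\pi_s))^\dagger h_s$ with $h_s=\nabla_{\theta_s}\mathcal J$.

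Unvisited states need a short remark. If $d_\theta(s)=0$ the block is zero, its pseudo-inverse is zero, and the formula still holds. Otherwise the pseudo-inverse equals $d_\theta(s)^{-1}C(\pi_s)^\dagger$, acting on the complement of the all-ones direction. This is consistent with the range interpretation through Proposition~\ref{prop:score_compatible_gradient}.

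The main obstacle is modest. It is to make rigorous that the block support of $u_t$ is determined by the random state $s_t$. The key point is that the same state may recur at different times, or be reached along different paths. Both cases are absorbed by indexing with $\mathds 1\{s_t=s\}$ and summing occupancies, rather than assuming each state is visited once. The remaining care is in justifying the interchange of $\sum_t$, $\sum_s$ and $\mathbb E$; I would handle this by dominated convergence using the integrability assumed in Theorem~\ref{thm:riemannian_policy_induced}.
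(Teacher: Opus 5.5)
Your proposal is correct and follows the paper's proof. The paper likewise starts from Theorem~\ref{thm:riemannian_policy_induced}, uses that each tabular score increment $u_t$ is supported on the block of the visited state $s_t$ with conditional second moment $C(\pi_{s_t})$, groups the per-step terms by state to obtain the occupancy weights $d_\theta(s)$, and applies the natural-gradient formula blockwise; your handling of the indicators, zero-occupancy blocks and the blockwise Moore--Penrose inverse just spells out steps the paper leaves implicit.
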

\noindent\emph{Proof.}
See Appendix~\ref{app:tabular_shared_specializations}.
\par\medskip
For shared-parameter policies, the temporal decomposition remains exact but parameter
coordinates are coupled. If \(\pi_\theta(\cdot\mid s)=\softmax(\ell_\theta(s))\) and
\(J_\theta(s)\coloneqq \nabla_\theta \ell_\theta(s)
\in\mathbb R^{|\mathcal A(s)|\times \dim(\theta)}\)
is the logit Jacobian, then
\begin{equation}
\label{eq:shared_exact_fisher_main}
F(\theta)
=
\mathbb E_{\tau\sim q_\theta}
\left[
\sum_{t\ge 0}
J_\theta(s_t)^\top
C(\pi_\theta(\cdot\mid s_t))
J_\theta(s_t)
\right].
\end{equation}
The derivation is given in
Appendix~\ref{app:tabular_shared_specializations}. Thus the tabular parameterisation yields an exact block-diagonal Fisher, whereas shared representations generally induce dense curvature. Exact block structure can also arise
beyond state-tabular policies, but only when the conditional factorisation of the policy is aligned with its parameter blocks.

For a hierarchical action, conditional score centering eliminates cross moments
between factors; disjoint parameter blocks then yield an exact block-diagonal
Fisher. The derivation is given in Appendix~\ref{app:conditional_fisher_factorization}. Crucially, target
factorisation alone is insufficient. In Bayesian DAG learning, a decomposable
score such as Bayesian Gaussian equivalent (BGe) makes child-node parent sets natural target units, but an
order-free edge-insertion policy couples admissible actions through acyclicity,
and a shared Linear Transformer couples parameter blocks
\cite{deleu2022bayesian}. Child-node, output-head, or layerwise blocks are then
surrogates for a dense neural Fisher, not consequences of reward factorisation.

It is useful to separate three operations that are otherwise easy to conflate. The
trajectory law fixes the exact Fisher \(F\). Knowledge of the target and construction process is then used to build a surrogate \(\widetilde F\), for example from sampled
occupancies, nodewise blocks, or message-passing beliefs; gradient estimation may
separately produce \(\widetilde h\). Finally, these quantities are combined in a damped
linear solve. The Moore--Penrose pseudo-inverse does not approximate \(F\): when
\(h\in\mathcal U_\theta\), it solves the exact singular system on that identifiable
subspace. The errors \(\widetilde F-F\) and \(\widetilde h-h\) measure the two
approximations, while damping controls inverse stability.
\begin{thrm}[Target-informed Fisher approximation]
\label{thm:structure_dependent_approx}
For fix \(\theta\), let \(F\) be the exact trajectory Fisher,
\(\widetilde F\succeq0\) a target- or
construction-informed surrogate, and let \(h,\widetilde h\) be the exact and approximate
objective gradients, respectively. For \(\lambda>0\), define
\[
F_\lambda=F+\lambda I,
\qquad
\widetilde F_\lambda=\widetilde F+\lambda I,
\qquad
v_\lambda=F_\lambda^{-1}h,
\qquad
\widetilde v_\lambda=\widetilde F_\lambda^{-1}\widetilde h.
\]
Here \(v_\lambda\) is the damped exact-Fisher reference direction, whereas
\(\widetilde v_\lambda\) also uses the Fisher and gradient surrogates.
For \(A\succ0\), write \(\|x\|_A\coloneqq(x^\top A x)^{1/2}\).
If the damped relative Fisher error
\[
\rho_\lambda
\coloneqq
\left\|
F_\lambda^{-1/2}
(\widetilde F-F)
F_\lambda^{-1/2}
\right\|_{\mathrm{op}}
<1,
\]
then
\begin{equation}
\label{eq:structured_natdir_bound_main}
\|\widetilde v_\lambda-v_\lambda\|_{F_\lambda}
\le
\frac{\rho_\lambda}{1-\rho_\lambda}
\|v_\lambda\|_{F_\lambda}
+
\frac{1}{1-\rho_\lambda}
\|\widetilde h-h\|_{F_\lambda^{-1}}.
\end{equation}
If \(\widetilde h=h\), its pairing with the exact gradient also satisfies
\[
\frac{1}{1+\rho_\lambda}\,h^\top F_\lambda^{-1}h
\le
h^\top\widetilde F_\lambda^{-1}h
\le
\frac{1}{1-\rho_\lambda}\,h^\top F_\lambda^{-1}h.
\]
\end{thrm}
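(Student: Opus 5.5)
The proof is a perturbation argument in the whitened coordinates induced by $F_\lambda$. Since $\lambda>0$, $F_\lambda\succ0$, so $F_\lambda^{\pm1/2}$ exist. Write $E\coloneqq F_\lambda^{-1/2}(\widetilde F-F)F_\lambda^{-1/2}$, which is symmetric with $\|E\|_{\mathrm{op}}=\rho_\lambda<1$. Then
\[
\widetilde F_\lambda=F_\lambda+(\widetilde F-F)=F_\lambda^{1/2}(I+E)F_\lambda^{1/2}.
\]
Because $\rho_\lambda<1$, the matrix $I+E$ has spectrum in $[1-\rho_\lambda,1+\rho_\lambda]$. It is therefore invertible, with $\|(I+E)^{-1}\|_{\mathrm{op}}\le 1/(1-\rho_\lambda)$. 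This gives $\widetilde F_\lambda^{-1}=F_\lambda^{-1/2}(I+E)^{-1}F_\lambda^{-1/2}$. Positive definiteness of $\widetilde F_\lambda$ already follows from $\widetilde F\succeq0$, but the factorisation is what carries the quantitative control.

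For the direction bound, pass to whitened vectors $x\mapsto F_\lambda^{1/2}x$. This is an isometry from $\|\cdot\|_{F_\lambda}$ to the Euclidean norm. Set $g\coloneqq F_\lambda^{-1/2}h$ and $\widetilde g\coloneqq F_\lambda^{-1/2}\widetilde h$, so that $\|g\|_2=\|h\|_{F_\lambda^{-1}}=\|v_\lambda\|_{F_\lambda}$ and $\|\widetilde g-g\|_2=\|\widetilde h-h\|_{F_\lambda^{-1}}$. Then $F_\lambda^{1/2}v_\lambda=g$ and $F_\lambda^{1/2}\widetilde v_\lambda=(I+E)^{-1}\widetilde g$. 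Split
\[
(I+E)^{-1}\widetilde g-g=\big[(I+E)^{-1}-I\big]g+(I+E)^{-1}(\widetilde g-g).
\]
Use the identity $(I+E)^{-1}-I=-(I+E)^{-1}E$, whose operator norm is at most $\rho_\lambda/(1-\rho_\lambda)$. The triangle inequality then gives \eqref{eq:structured_natdir_bound_main} directly.

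For the pairing bound with $\widetilde h=h$, write $h^\top\widetilde F_\lambda^{-1}h=g^\top(I+E)^{-1}g$ and $h^\top F_\lambda^{-1}h=\|g\|_2^2$. Since $I+E$ is symmetric with eigenvalues in $[1-\rho_\lambda,1+\rho_\lambda]$, the eigenvalues of $(I+E)^{-1}$ lie in $[1/(1+\rho_\lambda),1/(1-\rho_\lambda)]$. The Rayleigh quotient bounds then give the two-sided inequality.

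The only step needing care is showing that $I+E$ is invertible with the stated spectral bounds. This follows from $E$ being symmetric, which holds because $F$ and $\widetilde F$ are symmetric. A Neumann series argument would also work, but the eigenvalue argument is cleaner and also supplies the lower bound needed in the pairing statement. No earlier result of the paper is needed; Theorem~\ref{thm:riemannian_policy_induced} only motivates what $F$ is, and the statement is purely linear-algebraic at fixed $\theta$.
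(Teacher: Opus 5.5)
Your proposal is correct and follows essentially the same route as the paper. The paper also whitens by \(F_\lambda^{1/2}\), writes \(\widetilde F_\lambda=F_\lambda^{1/2}(I+E)F_\lambda^{1/2}\), and applies \((I+E)^{-1}-I=-(I+E)^{-1}E\) with the triangle inequality for the direction bound; it then reads off the pairing bound from the spectrum of \((I+E)^{-1}\) lying in \([1/(1+\rho_\lambda),1/(1-\rho_\lambda)]\).
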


\noindent\emph{Proof.}
See Appendix~\ref{app:policy_induced_approximation}.
\par\medskip
Per-step surrogate errors control the additive operator error, but
Theorem~\ref{thm:structure_dependent_approx} additionally measures that error
relative to the curvature scale of \(F+\lambda I\); an additive bound alone is
not sufficient.

For a block surrogate \(B\succeq0\), the theorem has a direct coupling
interpretation. Define
\[
\gamma_{B,\lambda}
\coloneqq
\left\|
(B+\lambda I)^{-1/2}
(F-B)
(B+\lambda I)^{-1/2}
\right\|_{\mathrm{op}}.
\]

If \(\gamma_{B,\lambda}<1\), then
\[
(1-\gamma_{B,\lambda})(B+\lambda I)
\preceq F+\lambda I
\preceq
(1+\gamma_{B,\lambda})(B+\lambda I),
\]
and, for \(\widetilde F=B\), the relative error in Theorem~\ref{thm:structure_dependent_approx} is at most \(\gamma_{B,\lambda}/(1-\gamma_{B,\lambda})\). Thus \(\gamma_{B,\lambda}<1/2\) is a simple sufficient condition for \(\rho_\lambda<1\), and hence for the theorem's bound. Factorisation can therefore suggest a sparse block surrogate, but the normalised
coupling omitted by that surrogate determines whether the approximation is
guaranteed to preserve the damped Fisher geometry.

When \(\widetilde h=h\), the guarantee also yields strict local descent for the
step-size range and smoothness conditions in
Appendix~\ref{app:approximation_consequences}.  Let
\(\delta=\|\widetilde F-F\|_{\mathrm{op}}\) and
\(\mu_\lambda=\lambda_{\min}(F+\lambda I)\ge\lambda\). If
\(\delta<\mu_\lambda\), then
\(\|\widetilde F_\lambda^{-1}\|_{\mathrm{op}}
\le(\mu_\lambda-\delta)^{-1}\); hence \(\delta<\lambda\) is a conservative
sufficient condition. For tabular softmax policies, the minimum positive state
occupancy \(d_{\min}\) and minimum valid-action probability \(p_{\min}\) give the smallest positive-eigenvalue bound
\(\lambda_{\min}^{+}(F)\ge d_{\min}p_{\min}\). A useful global bound is
generally unrealistic for shared neural policies; curvature is directional and
weakens under poor visitation, action collapse, or softmax-invariant
perturbations. Precise bounds, including the relative-logit sensitivity, are
given in Appendix~\ref{app:fisher_conditioning}.

Theorem~\ref{thm:structure_dependent_approx} therefore replaces an unconditional
claim about approximate Fishers with an explicit relative-error criterion. The next
section shows how exact, sampled, and structure-informed Fisher operators enter
GFlowNet training.

\section{Natural-gradient GFlowNet training and computational regimes}
\label{sec:natgrad_gflownet_training}

We now instantiate the preceding geometry for TB and separate
gradient estimation from the computational route used to access the Fisher. Let
\(\phi\) denote backward-policy parameters and \(\xi\) a source-flow parameter. For a
trajectory \(\tau:s_0\to x\), the TB residual~\cite{malkin2022trajectory} is
\begin{equation}
\label{eq:tb_residual_training}
\delta_{\theta,\phi,\xi}(\tau)
=
\log Z_\xi
+
\sum_{t=0}^{T-1}\log \pi_\theta(a_t\mid s_t)
-
\log R(x)
-
\sum_{t=0}^{T-1}\log P_\phi(a_t\mid s_{t+1}).
\end{equation}
For a differentiable scalar loss \(\ell:\mathbb R\to\mathbb R_+\) and a per-update
training distribution \(\mu\), set
\begin{equation}
\label{eq:tb_loss_training}
\mathcal J_{\mathrm{TB}}(\theta,\phi,\xi)
=
\mathbb E_{\tau\sim \mu}
\big[
\ell(\delta_{\theta,\phi,\xi}(\tau))
\big],
\end{equation}
where \(\ell(u)=u^2\) gives the usual squared TB loss. Holding \(\mu\) fixed during
the update, assume \(\mu\ll q_\theta\), as holds for training distributions
supported on valid trajectories under the strictly positive policy. Writing
\(z(\tau)=\nabla_\theta\log q_\theta(\tau)=\sum_tu_t(\theta;\tau)\), the
forward-policy gradient satisfies
\[
\begin{aligned}
h(\theta)
&=
\mathbb E_{\tau\sim\mu}
\!\left[
\ell'(\delta_{\theta,\phi,\xi}(\tau))\,z(\tau)
\right] \\
&=
\mathbb E_{\tau\sim q_\theta}
\left[
z(\tau)\,
\frac{\mu(\tau)}{q_\theta(\tau)}
\ell'(\delta_{\theta,\phi,\xi}(\tau))
\right].
\end{aligned}
\]

Proposition~\ref{prop:score_compatible_gradient} therefore applies whenever the
scalar multiplier of \(z(\tau)\) is square-integrable. Importance-corrected variants
follow from the same change-of-measure argument.

With exact or sampled estimators \(\widehat h\) and \(\widehat F\), the practical
update is
\begin{equation}
\label{eq:gflownet_natgrad_update}
\theta^+
=
\theta-\eta\,
\bigl(\widehat F(\theta)+\lambda I\bigr)^{-1}
\widehat h(\theta),
\end{equation}
where \(\widehat F\) estimates or approximates the trajectory Fisher in
\eqref{eq:exact_temporal_fisher_main}. Backward-policy and flow parameters are updated
separately. A dense Fisher for \(p=\dim(\theta)\) requires \(O(p^2)\) storage and an
unstructured direct solve costs \(O(p^3)\). Neural implementations can instead use
automatic-differentiation Fisher--vector products and damped conjugate gradients;
K-FAC, low-rank, or layerwise blocks can serve as surrogates or preconditioners.

There are three computational routes to \(\widehat F\). In the \emph{exact} regime,
trajectory occupancies or marginals are available by dynamic programming, giving
\(\widehat F=F\); Corollary~\ref{cor:exact_tabular_fisher_main} is the tabular example.
In the \emph{sampled} regime, trajectories
\(\tau^{(1)},\dots,\tau^{(N)}\sim\mu\) estimate the decomposed Fisher by
\begin{equation}
\label{eq:sampled_fisher_training}
\widehat F_N(\theta)
=
\frac1N\sum_{i=1}^N
\sum_{t\ge0}
\omega_i\,
u_t(\theta;\tau^{(i)})u_t(\theta;\tau^{(i)})^\top,
\end{equation}
where \(\omega_i=1\) for on-policy and
\(\omega_i=q_\theta(\tau^{(i)})/\mu(\tau^{(i)})\) for importance-corrected off-policy
sampling.

In the optional \emph{structure-exploitable} regime, local target or construction
structure guides a surrogate when exact marginalisation is unavailable and naive
trajectory sampling is noisy. A canonical abstraction factorises the reward as  $R(x)=\prod_{\alpha\in\mathcal C}\psi_\alpha(x_\alpha)$, where \(\psi_\alpha\) are non-negative local factors on subsets \(x_\alpha\). Neither the Fisher decomposition, the approximation theorem, nor the sampled
regime assumes this factorisation, and it does not itself factorise the policy
Fisher. It can suggest policy-aligned blocks or enable tractable occupancy
summaries through dynamic programming or message passing; otherwise it defines a surrogate
whose quality is measured by \(\rho_\lambda\). This use of structure complements
target-side credit-assignment methods
\cite{bengio2023gflownetfoundations,pan2023better,jang2024learning,falet2024delta}.

All three regimes use the update \eqref{eq:gflownet_natgrad_update} and differ only in how the Fisher operator is accessed. Appendix
\ref{app:computational_regimes_detailed} gives consistency results,
problem-specific constructions, and pseudocode.

\section{Numerical Examples}
\label{sec:examples}

Our experiments test whether useful natural-gradient directions can be
obtained without forming the exact Fisher. We compare exact, sampled, and
structured approximations on controlled benchmarks and on Bayesian DAG
learning with the real Sachs protein-signalling data, evaluating convergence,
distribution matching, and mode discovery against Euclidean and exploration
baselines.

\subsection{Triangle}

The triangle-count ERGM serves as a pedagogical reference case: its reward
couples edges through higher-order interactions, yet exact occupancies remain
tractable. Let $n$ be the number of nodes,
$E=\{\{i,j\}:1\leq i<j\leq n\}$ the set of potential undirected edges, and
$G=(G_{ij})_{\{i,j\}\in E}\in\{0,1\}^{|E|}$ the binary edge-indicator vector.
The triangle count is $T(G)\coloneqq\sum_{1\le i<j<k\le n} G_{ij}G_{ik}G_{jk}$ and the corresponding target is
\(p_\beta(G)\propto\exp\{\beta T(G)\}\). Although the Fisher of this
one-parameter exponential family with respect to \(\beta\) is simply
\(\operatorname{Var}_{p_\beta}[T(G)]\), GFlowNet training instead depends on
the trajectory Fisher of the forward policy. Triangle interactions make this
Fisher intractable at moderate \(n\), so we use \(n=6\) and \(\beta=0.2\),
giving \(|E|=15\) edge decisions and \(2^{15}=32{,}768\) terminal graphs.
The GFlowNet processes the edges in a fixed order, deciding whether to include
(\(a=1\)) or exclude (\(a=0\)) each one, so every terminal graph has a unique
length-$15$ trajectory. The task therefore lies in the exact tabular regime of
Corollary~\ref{cor:exact_tabular_fisher_main}: the statewise Fisher is block
diagonal and its exact occupancies \(d_\theta(s)\) are computed by forward DP.
Fig.~\ref{fig:triangle_case} shows that this Fisher preconditioning
accelerates convergence, reducing both TB loss and KL faster than Euclidean
training.

\begin{figure}[ht!]
  \centering
  \includegraphics[width=0.6\linewidth]{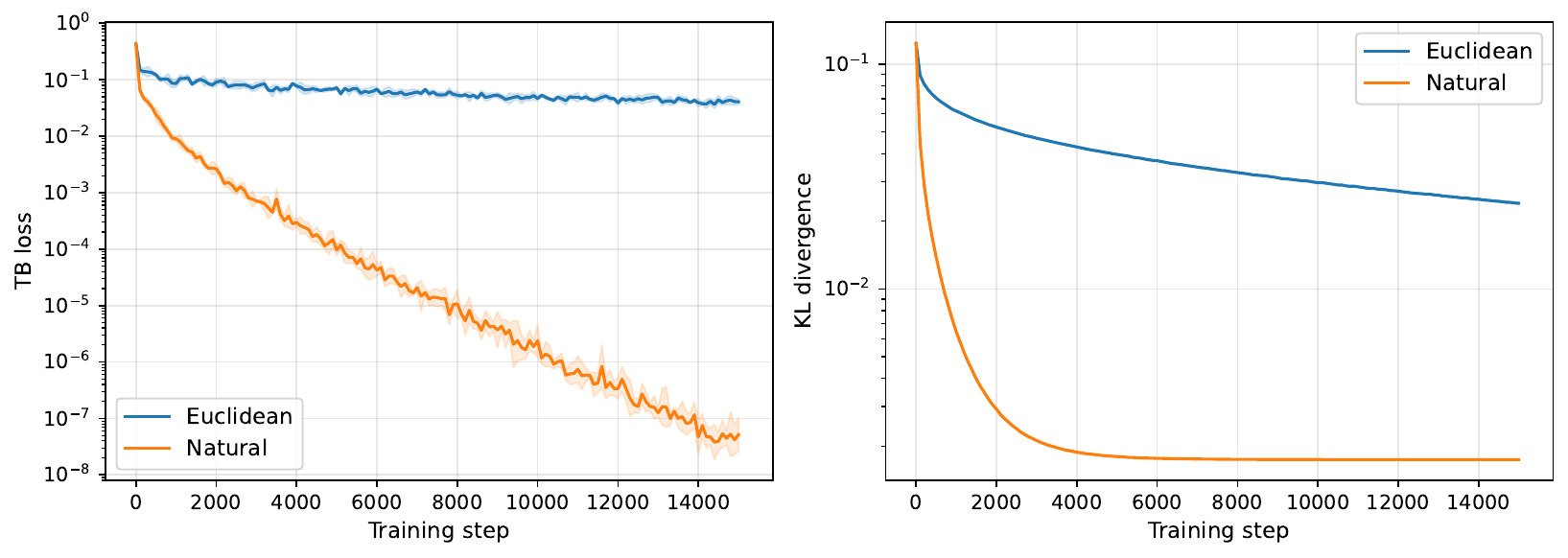}\caption{TB loss (left; geometric mean with $\pm1$ std in log space) and $\mathrm{KL}(q_\theta \| p_\beta)$ (right; mean $\pm$ std) on the triangle-count ERGM ($n=6$, $\beta=0.2$), over five seeds.}
\label{fig:triangle_case}
\end{figure}

\subsection{Hypergrid}

We follow the 2D hypergrid benchmark of \cite{malkin2022trajectory}.
The environment is a grid of side $H$ with states $(i,j) \in \{0,\dots,H-1\}^2$.
From any state the agent may move right $(i,j)\to(i+1,j)$, move up $(i,j)\to(i,j+1)$,
or stop and collect a terminal reward. Denoting the coordinates of state $s$ as $s^{(1)}=i$ and $s^{(2)}=j$, and setting $x_d = \tfrac{s^{(d)}}{(H-1)} \in [0,1]$, the reward is
\begin{equation}
    R(s) =R_0
    + \tfrac{1}{2} \prod_{d=1}^{2} 
\mathds{1}{\left\{ \tfrac{1}{4} < \abs{x_d - \tfrac{1}{2}}  < \tfrac{1}{2}\right\}} \; + \; 2  \prod_{d=1}^{2} \mathds{1}{ \left\{\tfrac{3}{10} < \abs{x_d -\tfrac{1}{2}} <  \tfrac{2}{5} \right\}} \;,
\end{equation}
with $R_0 = 10^{-3}$. The two indicator products create a low off-centre plateau and four narrow high-reward peaks, making $\pi\propto R$ multimodal.

Using a tabular forward policy and squared TB loss,
Figs.~\ref{fig:hypergrid}(a-b) show that the natural gradient reduces both TB
loss and $\mathrm{TV}(q_\theta,\pi)$ faster than Euclidean descent and ACE~\cite{dallantonia2026avoid},
while distributing mass more evenly across high-reward modes. Full settings and terminal distributions are given in Appendix~\ref{app:plots_hypergrid}.
\begin{figure}[ht!]
  \centering
  \includegraphics[width=1\linewidth]{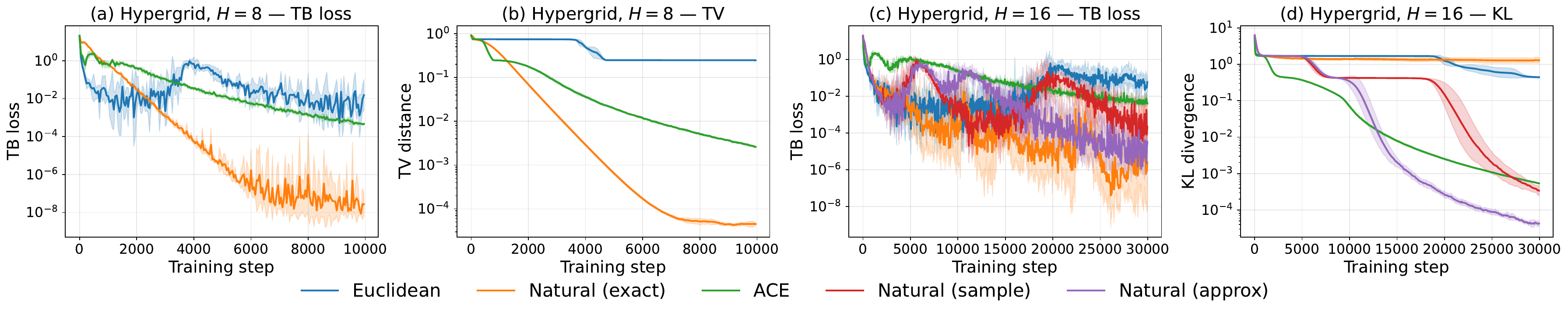}
\caption{2D hypergrid results, mean $\pm$ std over 5 seeds.
  \textbf{(a-b)} $H{=}8$: TB loss and $\mathrm{TV}(q_\theta, \pi)$, comparing Euclidean, Natural, and ACE.
  \textbf{(c-d)} $H{=}16$: TB loss and (geometric mean) $\mathrm{KL}(q_\theta\|\pi)$, comparing Euclidean, Natural (exact, sample, approx), and ACE.}
\label{fig:hypergrid}
\end{figure}

At $H{=}16$, the plateau and peaks form three reward levels,  making mass
calibration across levels, rather than mode location, the main challenge. Figs.~\ref{fig:hypergrid}(c-d) compare three Fisher-computation routes:  Natural~(exact), using forward-DP occupancies from~\eqref{eq:occupancy_main}; Natural~(sample), using batch occupancies from~\eqref{eq:sampled_fisher_training}; and Natural~(approx), using the factorised
marginal surrogate of Section~\ref{sec:natgrad_gflownet_training}. Fig.~\ref{fig:weighted_hyper_colormaps} in
Appendix~\ref{app:plots_hypergrid} shows that the Fisher approximations 
closely reproduce the target's three-level structure, whereas
the exact-Fisher variant does not fully recover it. ACE also matches the
target well, while Euclidean retains diffuse residual mass between modes.
The improved target matching of the approximate variants may reflect an implicit
regularising effect, but this requires
further investigation.

\subsection{Deceptive Hypergrid}
\label{sec:deceptive_grid_main}

To assess Fisher preconditioning beyond statewise tabular policies, we use the
two-dimensional deceptive grid of~\cite{kim2025adaptive} at
$H\in\{128,256\}$ with a shared neural forward policy. Its reward pairs an
easily reached moderate-reward region with hundreds or thousands of narrow
high-reward modes, making full mode coverage difficult under standard TB training.
Following AdaFisher's optimiser structure~\cite{gomes2025adafisher}, we
retain Adam's first-moment recursion but construct a GFlowNet-specific
preconditioner from the trajectory Fisher, in place of Adam's coordinate-wise
second-moment rescaling for the forward policy.
Algorithm~\ref{alg:fisher_adam_gflownet} gives the full TB update. We use
either a conjugate-gradient Fisher solve or a K-FAC approximation and compare
against Euclidean Adam and SGD (with and without $\varepsilon$-exploration) and
Adaptive Teacher. Appendix~\ref{app:deceptive_grid_world} gives the reward,
architecture, and full evaluation.

Fig.~\ref{fig:deceptive_visited} shows that both Fisher variants accelerate
mode discovery relative to Euclidean training and attain coverage
comparable to Adaptive Teacher. Unlike Adaptive Teacher, they use neither an
auxiliary exploration policy nor another explicit exploration mechanism: the
intervention is the geometry of the parameter update. The gain is most visible
at $H=256$, while the SGD and simple $\varepsilon$-exploration remain
far from full coverage.
\begin{figure}[ht!]
  \centering
  \includegraphics[width=\linewidth]{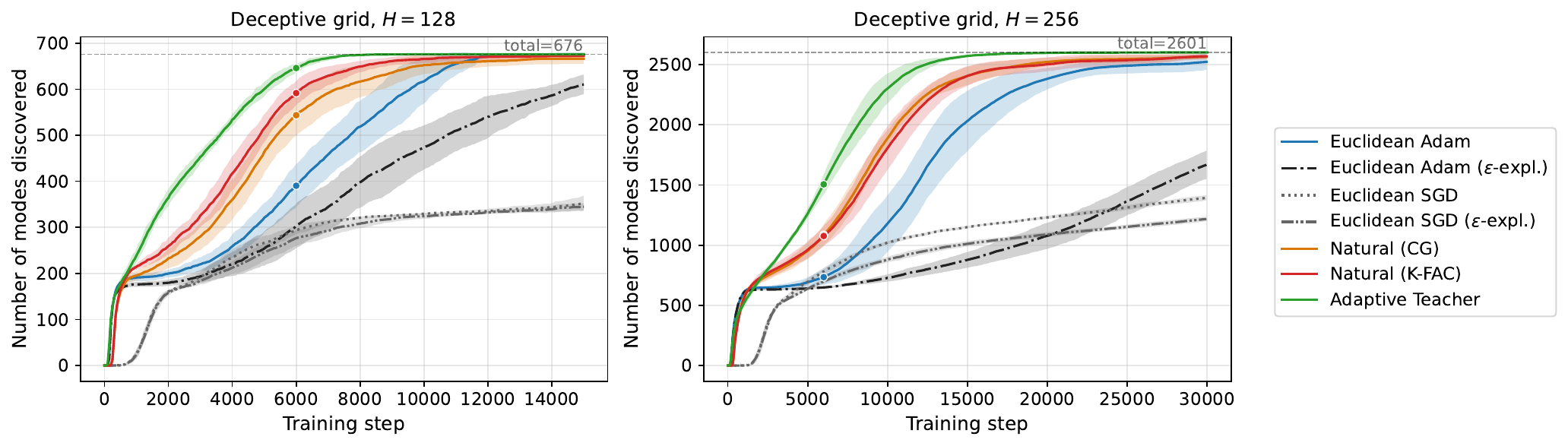}
  \caption{Cumulative modes discovered on the deceptive grid for $H=128$
  (left) and $H=256$ (right), mean $\pm$ one standard deviation over five
  seeds. Dots mark the $6{,}000$-step budget of~\cite{kim2025adaptive}, whose
  policy has width $128$; ours has width $32$.}
  \label{fig:deceptive_visited}
\end{figure}

\subsection{Lazy Random Walk Benchmarks}
\label{sec:lrw}

We additionally evaluate the exact, sampled, and factorised Fisher
approximations on the 8-Gaussians and Rings \emph{lazy-random-walk} benchmarks of~\cite{dallantonia2026avoid}.
On 8-Gaussians, all three natural-gradient variants reduce final TV from
\(0.190\) to approximately \(0.170\). On Rings, they recover \(24\)--\(25\)
of the \(36\) modes, compared with \(16\) for Euclidean TB, Wasserstein TB,
and ACE. Full details, learning curves, terminal distributions, and tables are given in Appendix~\ref{app:lrw_benchmarks}.

\subsection{Bayesian DAG structure learning}
\label{sec:sachs_dag}
\label{sec:dag_results_polished}

We evaluate several GFlowNet training methods for scalable amortised Bayesian
structure learning on the real Sachs protein-signalling
data~\cite{sachs2005causal}; Appendix~\ref{app:sachs_dag_details} provides experimental details and a controlled synthetic study. This also tests a
structure-exploitable regime under global acyclicity constraints.
Following~\cite{deleu2022bayesian}, DAG-building trajectories add
acyclicity-preserving edges and terminate with STOP, with target
reward \(R(G)=\exp\{N^{-1/2}\,\mathrm{BGe}(G\mid\mathcal D)-|E(G)|/2\}\), where \(G\) is a candidate DAG with directed-edge set \(E(G)\),
\(\mathcal D\) is the \(N\)-sample dataset, and \(\mathrm{BGe}(G\mid\mathcal D)\)
is the decomposable Bayesian Gaussian equivalent (BGe) score
\cite{geiger1994learning}; see Appendix~\ref{app:sachs_dag_details}, Eq.~\eqref{eq:bge_decomposition}. BGe decomposes over child-node parent sets, but acyclicity, STOP, and shared
policy parameters couple the trajectory Fisher across candidate edges.

A shared Linear Transformer maps directed-edge tokens for the current partial
DAG to a masked categorical distribution over valid edge additions and STOP.
Because its parameters are reused across edges and graph depths, the BGe
decomposition does not make its Fisher block diagonal. At sampled partial
graphs \(G\), let \(J_\theta(G)\) denote the logit Jacobian. Fisher-vector products use the exact masked-categorical
covariance \(C(\pi_\theta(\cdot\mid G))\) through
\(J_\theta(G)^\top C\!\left(\pi_\theta(\cdot\mid G)\right)J_\theta(G)\). Our
\emph{Damped Fisher} benchmark applies the resulting sampled non-diagonal
Fisher system using preconditioned conjugate gradients (PCG); the layer-block
curvature estimate serves only as a numerical preconditioner for the solve.
We compare Damped Fisher with Adam and KL-controlled Adam using paired \(500\)-step
continuations from common checkpoints over three seeds. The 17-edge Sachs
consensus network is used only for structural diagnostics.

Table~\ref{tab:sachs_fisher_main} shows that Damped Fisher lowers loss AUC
from \(0.751\) for Adam to \(0.734\), but is effectively tied with
KL-controlled Adam at \(0.734\). It also lowers P90-residual AUC from
\(0.881\) to \(0.870\). Directed AUPRC, E-SHD, and expected graph size show no reliable separation; Fig.~\ref{fig:sachs_posterior_graphs}
compares the Damped Fisher edge marginals with the Sachs consensus. Thus, on Sachs, Fisher-PCG improves local
optimisation relative to plain Adam but not to KL-controlled Adam, suggesting
that much of the gain comes from the shared step-norm and KL controls. The
experiment nevertheless demonstrates stable use of a sampled, non-diagonal
neural Fisher on a real-data DAG problem.
\begin{table}[ht!]
  \centering
  \small
  \setlength{\tabcolsep}{3.2pt}
  \caption{Sachs continuation results over three paired seeds (mean \(\pm\) SEM; best means in bold). AUC integrates relative fixed-probe quantities over updates (lower is better); P90 is the \(90\)th-percentile absolute TB residual. Structural metrics use the 17-edge consensus network only as a reference.}
  \label{tab:sachs_fisher_main}
  \begin{tabular}{lccccc}
    \toprule
    Method
    & Loss AUC \(\downarrow\)
    & P90 AUC \(\downarrow\)
    & Dir.\ AUPRC \(\uparrow\)
    & E-SHD \(\downarrow\)
    & \(\mathbb E|G|\) \\
    \midrule
    Adam
    & \(0.751{\pm}0.005\)
    & \(0.881{\pm}0.009\)
    & \(0.359{\pm}0.021\)
    & \(28.83{\pm}0.02\)
    & \(26.56{\pm}0.22\) \\
    KL-controlled Adam
    & \(\boldsymbol{0.734\pm0.006}\)
    & \(0.876{\pm}0.009\)
    & \(\boldsymbol{0.361\pm0.018}\)
    & \(28.98{\pm}0.11\)
    & \(26.66{\pm}0.14\) \\
    Damped Fisher
    & \(\boldsymbol{0.734\pm0.004}\)
    &  \(\boldsymbol{0.870\pm0.007}\)
    & \(0.349{\pm}0.047\)
    & \(\boldsymbol{28.80\pm0.19}\)
    & \(26.28{\pm}0.45\) \\
    \bottomrule
  \end{tabular}
\end{table}

\begin{figure}[ht!]
  \centering
  \includegraphics[width=0.7\textwidth]{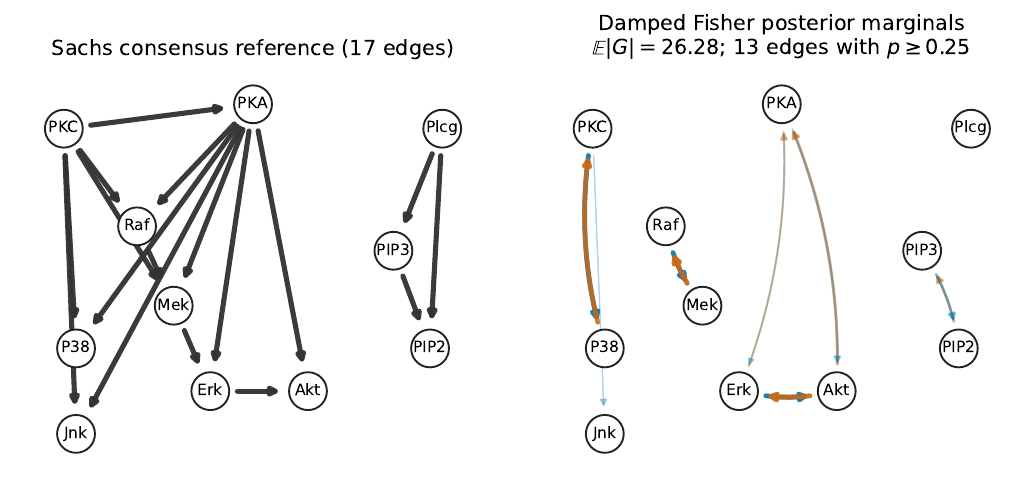}
  \caption{Sachs consensus reference and Damped Fisher posterior edge marginals averaged over three seeds. For clarity, only marginals  \(\ge 0.25\) are shown; width and opacity encode unthresholded values, and  blue/orange denote consensus agreement/disagreement. The thresholded panel is neither a MAP graph nor should be read as definitive causal recovery from observational data.}
\label{fig:sachs_posterior_graphs}
\end{figure}

\section{Discussion}
\label{sec:discussion}

We introduced a Fisher-Rao formulation of GFlowNet training by treating the forward
policy as a policy-induced statistical sampler. This gives a natural-gradient update
defined by the trajectory Fisher, which decomposes into per-step score second moments.
The theory also gives exact tabular formulas and perturbation bounds showing when
approximate Fisher surrogates preserve the natural direction.

Empirically, exact natural gradients provide a useful reference, while sampled and
structured approximations retain much of the gain over Euclidean TB training. Across the
benchmarks, Fisher-Rao updates often improve convergence, stability, and high-reward mode
discovery, especially when target locality or factorisation supports informative
structured approximations. More broadly, the results suggest that target structure can be
used not only for reward decomposition or credit assignment, but also to construct the
optimisation geometry of the sampler itself.

A key limitation is reliance on stable Fisher inversion and informative
structural approximations; weakly factorised, nonlocal, or high-dimensional
settings may require additional damping, low-rank approximations, or scalable
Fisher-vector solvers.

\newpage
\printbibliography[heading=largebibliography,title={References}]

\newpage
\appendix
\section*{\centerline{\huge{-- Appendices --}}\vspace*{0.5cm}}
\label{sec:app}
\section{Natural-gradient properties in smooth statistical models}
\label{app:prelim_natgrad_properties}

This appendix records the standard geometric facts about natural gradients used in the
main text. Throughout, let
\[
\mathcal M=\{q_\theta:\theta\in\Theta\}
\]
be a differentiable statistical model, with \(\Theta\subseteq\mathbb R^d\) open, and let
\(q_\theta(z)\) denote a positive density or probability mass function with respect to
a fixed dominating measure \(\nu\), with support independent of \(\theta\). Define
\[
F(\theta)
=
\mathbb E_{z\sim q_\theta}
\big[
\nabla_\theta \log q_\theta(z)\,
\nabla_\theta \log q_\theta(z)^\top
\big]
\]
and
\[
\mathcal U_\theta\coloneqq\operatorname{range}F(\theta).
\]
We assume the usual regularity conditions allowing
differentiation under the integral sign and interchange of differentiation and
expectation. The subspace \(\mathcal U_\theta\) is the identifiable score tangent space:
\(F(\theta)\) is positive definite on \(\mathcal U_\theta\), and
\(F(\theta)^\dagger\) acts as its inverse there and as zero on
\(\ker F(\theta)\).

\subsection{KL quadratic expansion}

The Fisher information is the local quadratic form of the KL divergence. For
\(\Delta\theta\in\mathbb R^d\) small,
\begin{align}
\KL(q_\theta\|q_{\theta+\Delta\theta})
&=
\mathbb E_{z\sim q_\theta}
\left[
\log q_\theta(z)-\log q_{\theta+\Delta\theta}(z)
\right].
\end{align}
A Taylor expansion of \(\log q_{\theta+\Delta\theta}(z)\) around \(\theta\) gives
\[
\log q_{\theta+\Delta\theta}(z)
=
\log q_\theta(z)
+
\Delta\theta^\top \nabla_\theta \log q_\theta(z)
+
\frac12
\Delta\theta^\top \nabla_\theta^2 \log q_\theta(z)\,\Delta\theta
+
o(\|\Delta\theta\|^2).
\]
Substituting and taking expectation,
\begin{align}
\KL(q_\theta\|q_{\theta+\Delta\theta})
&=
-\Delta\theta^\top
\mathbb E_{q_\theta}\!\left[\nabla_\theta \log q_\theta(z)\right]
-
\frac12
\Delta\theta^\top
\mathbb E_{q_\theta}\!\left[\nabla_\theta^2 \log q_\theta(z)\right]
\Delta\theta
+
o(\|\Delta\theta\|^2).
\end{align}
Since
\[
\mathbb E_{q_\theta}\!\left[\nabla_\theta \log q_\theta(z)\right]
=
\nabla_\theta \int q_\theta(z)\,\mathrm d\nu(z)
=
0,
\]
and, under standard regularity conditions,
\[
-\mathbb E_{q_\theta}\!\left[\nabla_\theta^2 \log q_\theta(z)\right]
=
F(\theta),
\]
we obtain
\begin{equation}
\label{eq:appendix_kl_quadratic}
\KL(q_\theta\|q_{\theta+\Delta\theta})
=
\frac12\,\Delta\theta^\top F(\theta)\Delta\theta
+
o(\|\Delta\theta\|^2).
\end{equation}

\subsection{Variational characterisation of the natural gradient}

Let \(\mathcal J:\Theta\to\mathbb R\) be differentiable and suppose its gradient is
compatible with the statistical model,
\(\nabla_\theta\mathcal J(\theta)\in\mathcal U_\theta\). If it is not, the
statements below apply to its orthogonal projection onto \(\mathcal U_\theta\).
The natural gradient is
\[
\widetilde\nabla_\theta \mathcal J(\theta)
=
F(\theta)^\dagger \nabla_\theta \mathcal J(\theta).
\]
It is characterised as the unique maximiser over the identifiable tangent space of
\begin{equation}
\label{eq:appendix_natgrad_variational_ascent}
\max_{v\in\mathcal U_\theta}
\left\{
\langle \nabla_\theta \mathcal J(\theta),v\rangle
-
\frac12\,v^\top F(\theta)v
\right\}.
\end{equation}
Indeed, the first-order condition on \(\mathcal U_\theta\) is
\[
\nabla_\theta \mathcal J(\theta)-F(\theta)v=0,
\]
whose unique solution in \(\mathcal U_\theta\) is
\[
v=F(\theta)^\dagger\nabla_\theta \mathcal J(\theta).
\]
Thus \(\widetilde\nabla_\theta \mathcal J(\theta)\) is the steepest-ascent direction in
the Fisher geometry, while \(-\widetilde\nabla_\theta \mathcal J(\theta)\) is the
corresponding steepest-descent direction.

Equivalently, using \eqref{eq:appendix_kl_quadratic}, the solution of the local
trust-region problem
\[
\arg\min_{v\in\mathcal U_\theta}
\left\{
\langle \nabla_\theta \mathcal J(\theta),v\rangle
:\;
\frac12 v^\top F(\theta)v\le \varepsilon
\right\},
\]
When the identifiable gradient is nonzero, the solution for any
\(\varepsilon>0\) lies on the ray generated by
\(-F(\theta)^\dagger\nabla_\theta \mathcal J(\theta)\).

\subsection{Reparameterisation invariance}

Let \(\vartheta=\phi(\theta)\) be a smooth invertible reparameterisation, with Jacobian
\(J_\phi(\theta)\). First consider locally identifiable coordinates, so the Fisher
matrices below are nonsingular. Writing \(q_\vartheta=q_{\theta(\vartheta)}\), the
score transforms as
\[
\nabla_\vartheta \log q_\vartheta(z)
=
J_{\theta}(\vartheta)^\top \nabla_\theta \log q_\theta(z),
\]
where \(J_\theta(\vartheta)=\partial \theta/\partial \vartheta\). Hence the Fisher matrix
in \(\vartheta\)-coordinates is
\begin{equation}
\label{eq:appendix_fisher_reparam}
F_\vartheta(\vartheta)
=
J_{\theta}(\vartheta)^\top
F_\theta(\theta)
J_{\theta}(\vartheta).
\end{equation}
Similarly, the Euclidean gradient transforms as
\[
\nabla_\vartheta \mathcal J
=
J_{\theta}(\vartheta)^\top \nabla_\theta \mathcal J.
\]
Therefore
\begin{align}
\widetilde\nabla_\vartheta \mathcal J
&=
F_\vartheta(\vartheta)^{-1}\nabla_\vartheta \mathcal J
\nonumber\\
&=
\big(J_{\theta}^\top F_\theta J_{\theta}\big)^{-1}
J_{\theta}^\top \nabla_\theta \mathcal J
\nonumber\\
&=
J_\phi(\theta)\,F_\theta(\theta)^{-1}\nabla_\theta \mathcal J
\nonumber\\
&=
J_\phi(\theta)\,\widetilde\nabla_\theta \mathcal J.
\end{align}
Thus the natural-gradient vector field transforms intrinsically under smooth changes of
coordinates, whereas the Euclidean gradient depends on the chosen parameterisation.
For a redundant parameterisation, the same statement holds for the induced tangent
vector on the identifiable quotient space. The Moore--Penrose inverse selects a
representative in \(\mathcal U_\theta\), but that Euclidean minimum-norm representative
need not itself transform covariantly under arbitrary changes of redundant coordinates.

\subsection{Local descent property}

For minimising \(\mathcal J\), the natural-gradient update with step size \(\eta>0\) is
\[
\theta^+
=
\theta-\eta\,\widetilde\nabla_\theta \mathcal J(\theta).
\]
Assume that \(\mathcal J\) is locally smooth relative to the Fisher metric in the sense
that, for all sufficiently small \(v\in\mathcal U_\theta\),
\begin{equation}
\label{eq:appendix_fisher_smooth}
\mathcal J(\theta+v)
\le
\mathcal J(\theta)
+
\langle \nabla_\theta \mathcal J(\theta),v\rangle
+
\frac{L}{2}v^\top F(\theta)v.
\end{equation}
Substituting \(v=-\eta \widetilde\nabla_\theta \mathcal J(\theta)\) gives
\begin{align}
\mathcal J(\theta^+)
&\le
\mathcal J(\theta)
-
\eta\left(1-\frac{L\eta}{2}\right)
\|\nabla_\theta \mathcal J(\theta)\|_{F(\theta)^\dagger}^2,
\end{align}
where
\(\|a\|_{F^\dagger}^2\coloneqq a^\top F^\dagger a\).
Hence any step size \(0<\eta<2/L\) yields descent whenever
\(\|\nabla_\theta\mathcal J(\theta)\|_{F(\theta)^\dagger}>0\). Under the compatibility
assumption above, this is equivalent to
\(\nabla_\theta\mathcal J(\theta)\neq0\).

\section{Fisher-Rao geometry for discrete distributions}
\label{app:prelim_discrete_geometry}

This appendix expands the finite-support Fisher-Rao facts used in
Section~\ref{sec:preliminaries}. The key point is that discrete distributions carry the
same intrinsic information geometry as continuous statistical models, but the
computability of the corresponding Fisher matrix depends on whether its required
moments are tractable, often through a low-treewidth factorisation.

\subsection{Simplex geometry and categorical Fisher blocks}

Let
\[
\Delta^{m-1}_{++}
:=
\left\{
p=(p_1,\dots,p_m)\in\mathbb R^m_{>0}:
\sum_{i=1}^m p_i=1
\right\}
\]
denote the simplex interior. The Fisher-Rao metric on the simplex interior is
\[
\langle \xi,\zeta\rangle_p^{\mathrm{FR}}
=
\sum_{i=1}^m \frac{\xi_i\zeta_i}{p_i},
\qquad
\xi,\zeta\in T_p\Delta^{m-1}_{++},
\qquad
\sum_i \xi_i=\sum_i\zeta_i=0.
\]
This metric admits a simple geometric representation through the square-root embedding
\begin{equation}
\label{eq:appendix_sqrt_map}
\Psi:\Delta^{m-1}_{++}\to \mathbb S^{m-1}_+(2),
\qquad
\Psi(p)_i = 2\sqrt{p_i},
\end{equation}
where
\[
\mathbb S^{m-1}_+(2)
:=
\left\{
u\in\mathbb R^m_{>0}:
\sum_{i=1}^m u_i^2 = 4
\right\}
\]
is the positive orthant of the sphere of radius \(2\). For a tangent vector
\(\xi\in T_p\Delta^{m-1}_{++}\), the differential is
\[
(D\Psi_p\xi)_i=\frac{\xi_i}{\sqrt{p_i}}.
\]
Therefore the Euclidean metric on the sphere pulls back as
\[
\langle D\Psi_p\xi,D\Psi_p\zeta\rangle_{\mathbb R^m}
=
\sum_{i=1}^m \frac{\xi_i\zeta_i}{p_i}
=
\langle \xi,\zeta\rangle_p^{\mathrm{FR}}.
\]
Thus \(\Psi\) is an isometry from the simplex interior onto the positive orthant of the
sphere.

The requirement \(p_i>0\) is essential for this smooth Riemannian description: both
the metric and the differential of the square-root map contain division by \(p_i\) or
\(\sqrt{p_i}\). The square-root map and its induced distance extend continuously to the
closed simplex, but the same smooth
metric chart does not cover support-changing boundary points. On a fixed-support face, the construction applies to its strictly positive
coordinates. This is a geometry of probability distributions, not a continuous
relaxation of the underlying discrete objects.

The local building block used repeatedly in the paper is the categorical distribution.
Let \(p\in\Delta^{m-1}_{++}\) be parameterised by logits
\[
\ell=(\ell_1,\dots,\ell_m)\in\mathbb R^m,
\qquad
p_i=\frac{\exp(\ell_i)}{\sum_{j=1}^m\exp(\ell_j)}.
\]
For \(X\sim\mathrm{Cat}(p)\),
\[
\log p(X)=\ell_X-\log\sum_{j=1}^m \exp(\ell_j),
\qquad
\nabla_\ell\log p(X)=\mathrm e_X-p.
\]
Hence the Fisher information in logit coordinates is
\begin{align}
F_{\mathrm{cat}}(p)
&=
\mathbb E\big[(\mathrm e_X-p)(\mathrm e_X-p)^\top\big]
\nonumber\\
&=
\sum_{i=1}^m p_i\,\mathrm e_i\mathrm e_i^\top - pp^\top
\nonumber\\
&=
\Diag(p)-pp^\top.
\label{eq:appendix_cat_fisher}
\end{align}
This matrix is positive semidefinite and satisfies
\[
(\Diag(p)-pp^\top)\mathbf1=0,
\]
reflecting the invariance of softmax probabilities to adding a constant to all logits.
Because \(p_i>0\) for every \(i\), its rank is \(m-1\) and its identifiable logit
subspace is \(\mathbf1^\perp\). Consequently, the exact identifiable natural gradient
uses \(F_{\mathrm{cat}}(p)^\dagger\), equivalently the inverse restricted to
\(\mathbf1^\perp\), while a damped direction uses
\((F_{\mathrm{cat}}(p)+\lambda I)^{-1}\).

\subsection{When the discrete Fisher is computable}

For a finite support \(\mathcal X\), the Fisher matrix of a differentiable family
\(\{p_\theta:\theta\in\Theta\}\) is
\[
F(\theta)
=
\sum_{x\in\mathcal X}
p_\theta(x)\,
\nabla_\theta\log p_\theta(x)\,
\nabla_\theta\log p_\theta(x)^\top.
\]
This expression is conceptually simple, but evaluating it requires expectations under
\(p_\theta\). Exact Fisher computation is therefore tractable only when the required
moments can be evaluated exactly. Sampling or deterministic approximations can instead
provide an approximate Fisher operator.

A first tractable case is a discrete exponential family
\[
p_\theta(x)
=
\exp\!\big(\theta^\top T(x)-A(\theta)\big),
\qquad
A(\theta)=\log \sum_x \exp(\theta^\top T(x)).
\]
Then
\[
\nabla_\theta\log p_\theta(x)
=
T(x)-\nabla_\theta A(\theta),
\]
and hence
\begin{equation}
\label{eq:appendix_expfam_fisher}
F(\theta)
=
\operatorname{Cov}_{p_\theta}[T(X)].
\end{equation}
Thus, Fisher computation reduces to the computation of first and second moments of the
sufficient statistic. It is tractable whenever those moments are tractable.

As a concrete graph example, consider an independent-edge logistic model. Let
\(G=(G_e)_{e\in E}\in\{0,1\}^{|E|}\), and suppose
\[
p_\theta(G)
=
\prod_{e\in E}
\mathrm{Bern}\!\left(G_e;s_e(\theta)\right),
\qquad
s_e(\theta):=\frac{1}{1+\exp(-x_e^\top\theta)}.
\]
Then
\[
\nabla_\theta\log p_\theta(G)
=
\sum_{e\in E}
\bigl(G_e-s_e(\theta)\bigr)x_e.
\]
Since the edges are independent,
\begin{equation}
\label{eq:appendix_logistic_graph_fisher}
F(\theta)
=
\sum_{e\in E}
s_e(\theta)\bigl(1-s_e(\theta)\bigr)
x_e x_e^\top.
\end{equation}
This is the graph analogue of the Fisher matrix in logistic regression. It is tractable
because the distribution factorises over edges.

More generally, suppose
\[
p_\theta(x)
=
\frac{1}{Z(\theta)}
\prod_{\alpha\in\mathcal C}\psi_\alpha(x_\alpha;\theta)
\]
factorises according to a factor graph of small treewidth, with positive differentiable
local factors \(\psi_\alpha\). Writing
\(S_\theta(x)\coloneqq
\sum_{\alpha\in\mathcal C}\nabla_\theta\log\psi_\alpha(x_\alpha;\theta)\),
the model score is
\(\nabla_\theta\log p_\theta(x)
=S_\theta(x)-\mathbb E_{p_\theta}[S_\theta(X)]\), and therefore
\(F(\theta)=\operatorname{Cov}_{p_\theta}[S_\theta(X)]\). Junction-tree or exact message-passing algorithms can compute the marginal and
cross-moment information in this covariance. Thus, even when the state space is
combinatorial, the Fisher is exactly computable when the graphical structure supports exact inference. This controls the cost of summing over configurations, but does not remove the \(O(\dim(\theta)^2)\) storage needed to form a dense Fisher; parameter
dimension and graphical inference complexity remain separate considerations.

\subsection{When the discrete Fisher is intractable}

The preceding examples show that discrete Fisher geometry is operational when the model
factorises strongly enough or when sufficient-statistic moments are tractable. In
globally coupled combinatorial models, the geometry still exists, but computing it
inherits the difficulty of expectation under the model.

For example, a triangle-biased graph model has the form
\[
p_\beta(G)\propto \exp\!\big(\beta\,T(G)\big),
\qquad
T(G)=\sum_{i<j<k}G_{ij}G_{ik}G_{jk}.
\]
As a one-parameter exponential family,
\[
F(\beta)=\operatorname{Var}_{p_\beta}[T(G)].
\]
However, evaluating this variance requires expectations under a distribution with
higher-order dependencies among edges. For a graph on \(n\) vertices, direct exact
evaluation sums over \(2^{\binom n2}\) edge configurations. Without additional
exploitable structure, this rapidly becomes computationally prohibitive, and Fisher
estimation requires Monte Carlo or another approximation scheme.

The same issue appears in high-treewidth combinatorial Markov random fields and models
with globally coupled sufficient statistics. Local factors do not by themselves make
their induced moments tractable: exact Fisher computation still requires expectations
under the globally normalised distribution. Thus the Fisher--Rao metric is
mathematically well defined on the fixed-support model. Still, its direct use depends on
whether the relevant expectations can be evaluated, sampled, or approximated from
structure.

For a policy-induced sampler, the categorical covariance \(C(\pi_\theta(\cdot\mid s))\) is explicit in action-logit coordinates. Its pull-back to shared policy parameters is \(J_\theta(s)^\top C(\pi_\theta(\cdot\mid s))J_\theta(s)\), where \(J_\theta(s)\) is the Jacobian of the action logits with respect to \(\theta\). This
matrix is generally dense. The remaining global expectation weights these statewise contributions by occupancies
under the trajectory law. Exact dynamic programming, Monte Carlo trajectory sampling,
exact low-treewidth message passing, approximate belief propagation, and truncated local approximations are different ways of accessing that expectation.

\section{Exact Fisher geometry for policy-induced samplers}
\label{app:policy_induced_fisher}

This appendix proves the trajectory-level Fisher identities used in
Section~\ref{sec:fisher_rao_structure}. Throughout, let
\[
\tau=(s_0,a_0,s_1,a_1,\dots,s_T),
\qquad
q_\theta(\tau)=\prod_{t=0}^{T-1}\pi_\theta(a_t\mid s_t),
\]
and define the per-step score increment
\[
u_t(\theta;\tau):=\nabla_\theta \log \pi_\theta(a_t\mid s_t).
\]
For notational convenience, we extend trajectories beyond termination by setting
\[
u_t(\theta;\tau)=0
\qquad\text{for all }t\ge T(\tau),
\]
so that all sums may be written over \(t\ge0\). We assume either a bounded construction
horizon or \(\sum_{t\ge0}\mathbb E\|u_t\|^2<\infty\); in the latter case the partial
score sums converge in \(L^2\). We also use the pre-action filtration
\[
\mathcal F_t
:=
\sigma(s_0,a_0,s_1,a_1,\dots,s_{t-1},a_{t-1},s_t),
\]
where \(\sigma(\cdot)\) denotes the generated sigma-algebra. Thus \(\mathcal F_t\)
contains the trajectory history up to state \(s_t\), but not the action \(a_t\).

\subsection{Terminal and trajectory Fisher geometries}

Let \(x=\Phi(\tau)\) denote the terminal object induced by a trajectory \(\tau\). The
trajectory law \(q_\theta(\tau)\) induces the terminal law
\[
q_\theta(x)
=
\sum_{\tau:\Phi(\tau)=x}q_\theta(\tau).
\]
The corresponding Fisher matrices are
\[
F_{\mathrm{traj}}(\theta)
=
\mathbb E_{\tau\sim q_\theta}
\left[
\nabla_\theta\log q_\theta(\tau)
\nabla_\theta\log q_\theta(\tau)^\top
\right],
\]
and
\[
F_{\mathrm{term}}(\theta)
=
\mathbb E_{x\sim q_\theta}
\left[
\nabla_\theta\log q_\theta(x)
\nabla_\theta\log q_\theta(x)^\top
\right].
\]

\begin{prop}[Terminal versus trajectory Fisher geometry]
\label{prop:terminal_vs_trajectory_fisher}
Define the trajectory score
\[
S_\theta(\tau):=\nabla_\theta\log q_\theta(\tau).
\]
Then the following hold on the support where \(q_\theta(x)>0\).
\begin{enumerate}
\item[(i)] If the construction is canonical, so that each terminal object \(x\) has a
unique trajectory \(\tau(x)\), then
\[
\nabla_\theta\log q_\theta(x)=S_\theta(\tau(x)),
\qquad
F_{\mathrm{term}}(\theta)=F_{\mathrm{traj}}(\theta).
\]

\item[(ii)] In general,
\[
\nabla_\theta\log q_\theta(x)
=
\mathbb E_{q_\theta(\tau\mid x)}[S_\theta(\tau)],
\]
and
\[
F_{\mathrm{traj}}(\theta)
=
F_{\mathrm{term}}(\theta)
+
\mathbb E_{x\sim q_\theta}
\left[
\operatorname{Cov}_{q_\theta(\tau\mid x)}
\big(S_\theta(\tau)\big)
\right].
\]
\end{enumerate}
\end{prop}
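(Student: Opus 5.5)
The plan is to differentiate the marginalisation identity \(q_\theta(x)=\sum_{\tau:\Phi(\tau)=x}q_\theta(\tau)\) and then apply the law of total covariance to the trajectory score. I work on the support \(q_\theta(x)>0\). Because every local policy is strictly positive on a \(\theta\)-independent valid-action support, each \(q_\theta(\tau)\) with \(\Phi(\tau)=x\) is strictly positive and differentiable. The set of trajectories ending at \(x\) is finite, since the horizon is bounded. Hence term-by-term differentiation is legitimate.

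\textbf{Step 1 (score identity).} Using \(\nabla_\theta q_\theta(\tau)=q_\theta(\tau)S_\theta(\tau)\), I write
\[
\nabla_\theta\log q_\theta(x)=\frac{1}{q_\theta(x)}\sum_{\tau:\Phi(\tau)=x}q_\theta(\tau)S_\theta(\tau)=\mathbb E_{q_\theta(\tau\mid x)}[S_\theta(\tau)],
\]
where \(q_\theta(\tau\mid x)=q_\theta(\tau)\mathds 1\{\Phi(\tau)=x\}/q_\theta(x)\). This is the first claim of (ii).

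\textbf{Step 2 (part (i)).} In a canonical construction the fibre \(\{\tau:\Phi(\tau)=x\}\) is the singleton \(\{\tau(x)\}\). So \(q_\theta(x)=q_\theta(\tau(x))\), and the identity of Step 1 reduces to \(\nabla_\theta\log q_\theta(x)=S_\theta(\tau(x))\). The map \(x\mapsto\tau(x)\) is a bijection between terminal objects and trajectories that pushes \(q_\theta(x)\) to \(q_\theta(\tau)\). Therefore the two expectations defining \(F_{\mathrm{term}}\) and \(F_{\mathrm{traj}}\) coincide term by term.

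\textbf{Step 3 (Fisher decomposition).} By the tower property, \(F_{\mathrm{traj}}=\mathbb E_x\big[\mathbb E_{q_\theta(\tau\mid x)}[S S^\top]\big]\). For each \(x\), the conditional second moment equals \(\bar S(x)\bar S(x)^\top+\operatorname{Cov}_{q_\theta(\tau\mid x)}(S_\theta(\tau))\), where \(\bar S(x)=\mathbb E_{q_\theta(\tau\mid x)}[S_\theta]\). By Step 1, \(\bar S(x)=\nabla_\theta\log q_\theta(x)\), so taking the outer expectation of the first term gives exactly \(F_{\mathrm{term}}\). This proves the decomposition in (ii).

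The only delicate point is integrability, in the unbounded-horizon setting. There I would invoke \(\mathbb E\|S_\theta\|^2<\infty\), which follows from \(\sum_t\mathbb E\|u_t\|^2<\infty\) together with the martingale orthogonality of Theorem~\ref{thm:riemannian_policy_induced}. This makes all conditional second moments finite, and it justifies interchanging the derivative with the possibly countable fibre sum. For that interchange I would use dominated convergence, or simply note that the finite-fibre case covers the paper's settings. The algebra itself is routine.
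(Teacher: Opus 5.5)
Your proposal is correct and follows the paper's route: differentiate the fibre sum to obtain the conditional-mean score identity, specialise to a point mass for the canonical case, and apply the law of total covariance to the trajectory score. Your remarks on finite fibres and square-integrability go slightly beyond the paper, which leaves these regularity points implicit.
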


\paragraph{Proof.}
For any terminal object \(x\) with \(q_\theta(x)>0\),
\[
\nabla_\theta q_\theta(x)
=
\nabla_\theta
\sum_{\tau:\Phi(\tau)=x}q_\theta(\tau)
=
\sum_{\tau:\Phi(\tau)=x}
q_\theta(\tau)\nabla_\theta\log q_\theta(\tau).
\]
Dividing by \(q_\theta(x)\) gives
\[
\nabla_\theta\log q_\theta(x)
=
\sum_{\tau:\Phi(\tau)=x}
q_\theta(\tau\mid x)S_\theta(\tau)
=
\mathbb E_{q_\theta(\tau\mid x)}[S_\theta(\tau)].
\]
If the construction is canonical, \(q_\theta(\tau\mid x)\) is a point mass at
\(\tau(x)\), giving part (i).

For part (ii), decompose the conditional second moment:
\begin{align}
F_{\mathrm{traj}}(\theta)
&=
\mathbb E_{x\sim q_\theta}
\left[
\mathbb E_{q_\theta(\tau\mid x)}
\big[
S_\theta(\tau)S_\theta(\tau)^\top
\big]
\right]
\nonumber\\
&=
\mathbb E_{x\sim q_\theta}
\left[
\mathbb E[S_\theta(\tau)\mid x]
\mathbb E[S_\theta(\tau)\mid x]^\top
\right]
+
\mathbb E_{x\sim q_\theta}
\left[
\operatorname{Cov}(S_\theta(\tau)\mid x)
\right].
\end{align}
The first term is \(F_{\mathrm{term}}(\theta)\) by the identity above. This proves the
decomposition. \(\square\)

\subsection{Proof of Theorem~\ref{thm:riemannian_policy_induced}}

\paragraph{Conditional centering of the per-step score.}
Conditioning on \(\mathcal F_t\) fixes the current state \(s_t\), while
\(a_t\sim\pi_\theta(\cdot\mid s_t)\). Therefore
\begin{align}
\mathbb E\!\left[u_t(\theta;\tau)\mid \mathcal F_t\right]
&=
\sum_{a\in\mathcal A(s_t)}
\pi_\theta(a\mid s_t)
\nabla_\theta\log \pi_\theta(a\mid s_t)
\nonumber\\
&=
\sum_{a\in\mathcal A(s_t)}
\nabla_\theta \pi_\theta(a\mid s_t)
\nonumber\\
&=
\nabla_\theta
\sum_{a\in\mathcal A(s_t)}\pi_\theta(a\mid s_t)
=
0.
\end{align}
Thus \(u_t\) is conditionally centred given the pre-action history. Equivalently,
\(u_t\) is \(\mathcal F_{t+1}\)-measurable and satisfies
\[
\mathbb E[u_t\mid\mathcal F_t]=0.
\]

\paragraph{Exact temporal decomposition.}
Since
\[
\nabla_\theta\log q_\theta(\tau)
=
\sum_{t\ge0}u_t(\theta;\tau),
\]
we have
\[
F(\theta)
=
\mathbb E_{\tau\sim q_\theta}
\left[
\left(\sum_{t\ge0}u_t\right)
\left(\sum_{t'\ge0}u_{t'}\right)^\top
\right].
\]
For \(t<t'\), the variable \(u_t\) is \(\mathcal F_{t'}\)-measurable. Hence, by the
tower property and the conditional-centering identity,
\[
\mathbb E[u_tu_{t'}^\top]
=
\mathbb E
\left[
u_t\,\mathbb E[u_{t'}^\top\mid\mathcal F_{t'}]
\right]
=
0.
\]
The case \(t'>t\) follows by transposition. Therefore all temporal cross terms vanish
and
\[
F(\theta)
=
\sum_{t\ge0}
\mathbb E_{\tau\sim q_\theta}
\big[
u_t(\theta;\tau)u_t(\theta;\tau)^\top
\big],
\]
which is \eqref{eq:exact_temporal_fisher_main}. \(\square\)

\subsection{Tabular and shared-parameter specialisations}
\label{app:tabular_shared_specializations}

We prove Corollary~\ref{cor:exact_tabular_fisher_main} and derive the shared-parameter identity
in \eqref{eq:shared_exact_fisher_main}. Assume first that the forward policy is tabular softmax, so each state
\(s\in\mathcal S\) has its own logit vector
\(\theta_s\in\mathbb R^{|\mathcal A(s)|}\), with
\[
\pi_\theta(a\mid s)
=
\frac{\exp(\theta_{s,a})}{\sum_{a'\in\mathcal A(s)}\exp(\theta_{s,a'})}.
\]
Let \(\pi_s:=\pi_\theta(\cdot\mid s)\). For \(a\in\mathcal A(s)\),
\[
\nabla_{\theta_s}\log\pi_\theta(a\mid s)=e_a-\pi_s,
\]
where \(e_a\) is the one-hot vector indexed by action \(a\). Therefore the local
categorical Fisher block is
\begin{align}
\mathbb E_{a\sim\pi_s}
\big[
(e_a-\pi_s)(e_a-\pi_s)^\top
\big]
&=
\Diag(\pi_s)-\pi_s\pi_s^\top
=:C(\pi_s).
\end{align}
Since the tabular score increment at time \(t\) is supported only on the coordinate
block associated with the visited state \(s_t\), the exact temporal Fisher decomposition
groups by state:
\begin{align}
F(\theta)
&=
\sum_{t\ge0}
\mathbb E[u_tu_t^\top]
\nonumber\\
&=
\bigoplus_{s\in\mathcal S}
\left(
\mathbb E_{\tau\sim q_\theta}
\left[
\sum_{t\ge0}\mathds{1}\{s_t=s\}
\right]
\right)
C(\pi_s)
\nonumber\\
&=
\bigoplus_{s\in\mathcal S}
d_\theta(s)C(\pi_s),
\end{align}
where
\[
d_\theta(s)
=
\mathbb E_{\tau\sim q_\theta}
\left[
\sum_{t\ge0} \mathds{1}\{s_t=s\}
\right]
\]
is the occupancy measure. This proves the Fisher identity in
Corollary~\ref{cor:exact_tabular_fisher_main}. Applying the
natural-gradient formula blockwise gives
\[
\widetilde\nabla_{\theta_s}\mathcal J(\theta)
=
\big(d_\theta(s)C(\pi_s)\big)^\dagger
\nabla_{\theta_s}\mathcal J(\theta),
\]
which gives its statewise natural-gradient identity.
Now suppose logits are produced by a shared map
\[
\ell_\theta(s)\in\mathbb R^{|\mathcal A(s)|},
\qquad
\pi_\theta(\cdot\mid s)=\softmax(\ell_\theta(s)).
\]
Let \(J_\theta(s)\) denote the Jacobian of \(\ell_\theta(s)\) with respect to the global
parameter vector. By the chain rule,
\[
u_t(\theta;\tau)
=
J_\theta(s_t)^\top
\nabla_\ell\log\pi_\theta(a_t\mid s_t).
\]
Since
\[
\nabla_\ell\log\pi_\theta(a_t\mid s_t)
=
e_{a_t}-\pi_\theta(\cdot\mid s_t),
\]
taking the conditional second moment over
\(a_t\sim\pi_\theta(\cdot\mid s_t)\) yields
\[
\mathbb E[u_tu_t^\top\mid s_t]
=
J_\theta(s_t)^\top
C(\pi_\theta(\cdot\mid s_t))
J_\theta(s_t).
\]
Substituting this into the temporal Fisher decomposition gives
\begin{equation}
\label{eq:appendix_shared_exact_fisher}
F(\theta)
=
\mathbb E_{\tau\sim q_\theta}
\left[
\sum_{t\ge0}
J_\theta(s_t)^\top
C(\pi_\theta(\cdot\mid s_t))
J_\theta(s_t)
\right].
\end{equation}
This is the shared-parameter identity used in
\eqref{eq:shared_exact_fisher_main}. \(\square\)

\subsection{Conditional factorisation and exact Fisher blocks}
\label{app:conditional_fisher_factorization}

Exact block structure can arise beyond state-tabular policies when the
conditional factorisation of the policy aligns with disjoint parameter blocks.
Suppose that at a state \(s\), an action is generated hierarchically as
\[
\pi_\theta(c,b\mid s)
=
\pi_\theta^{(1)}(c\mid s)\,
\pi_\theta^{(2)}(b\mid c,s).
\]
Writing
\(u^{(1)}=\nabla_\theta\log\pi_\theta^{(1)}(c\mid s)\) and
\(u^{(2)}=\nabla_\theta\log\pi_\theta^{(2)}(b\mid c,s)\), conditional score
centering gives \(\mathbb E[u^{(2)}\mid c,s]=0\). Since \(u^{(1)}\) is fixed
after conditioning on \((c,s)\),
\[
\mathbb E[u^{(1)}u^{(2)\top}\mid s]
=
\mathbb E_{c\mid s}\!\left[
u^{(1)}\mathbb E[u^{(2)\top}\mid c,s]
\right]
=0,
\]
and the transposed cross moment vanishes in the same way. The local Fisher
contribution therefore decomposes exactly as
\[
\mathbb E[
  (u^{(1)}+u^{(2)})(u^{(1)}+u^{(2)})^\top
  \mid s]
=
\mathbb E[u^{(1)}u^{(1)\top}\mid s]
+
\mathbb E_{c\mid s}\!\left[
  \mathbb E[u^{(2)}u^{(2)\top}\mid c,s]
\right].
\]
If the two factors use disjoint parameter blocks, these terms occupy disjoint
blocks and the Fisher is block diagonal. The argument applies recursively to
deeper conditional factorisations.

Bayesian DAG learning illustrates why target factorisation alone is
insufficient. A decomposable score such as BGe makes child-node parent sets
natural target units. If a fixed node order makes acyclicity automatic and the
policy chooses one parent set per child using disjoint parameters, the
conditional factorisation above yields exact child-node Fisher blocks. In an
order-free edge-insertion construction, however, acyclicity couples the
available actions; a shared neural policy can additionally couple the parameter
blocks. Child-node, output-head, or layerwise blocks are then approximations to
a dense Fisher rather than consequences of the decomposable reward.

\subsection{Rank deficiency and pseudo-inverses}

For any categorical probability vector \(p\in\Delta^{m-1}\),
\[
C(p)=\Diag(p)-pp^\top
\]
is positive semidefinite and satisfies
\[
C(p)\mathbf 1=0.
\]
Thus the Fisher block is singular in logit coordinates, reflecting the fact that logits
are identifiable only up to additive constants. Shared neural policies can have
additional null directions from redundant parameterisations. We distinguish two ways
of handling this:
\begin{enumerate}
\item work in an identifiable coordinate system, for example by fixing one reference
logit, or equivalently use the Moore--Penrose pseudo-inverse on
\(\mathcal U_\theta=\operatorname{range}F(\theta)\);
\item use the explicitly regularised direction
\[
(F+\lambda I)^{-1}h,
\qquad
\lambda>0.
\]
\end{enumerate}
The first gives the exact natural direction on the identifiable score tangent space.
The second is a damped approximation on the full parameter space; it is always
well-defined but is not the Moore-Penrose pseudo-inverse. Keeping the two conventions separate is essential when interpreting approximation guarantees for neural policies.

\section{Approximation of natural gradients}
\label{app:policy_induced_approximation}

This appendix proves the approximation result used in
Section~\ref{sec:fisher_rao_structure}. We use the notation of the main text. The exact
objective gradient is \(h(\theta):=\nabla_\theta\mathcal J(\theta)\), and the undamped
natural direction on the identifiable score subspace is \(F(\theta)^\dagger h(\theta)\).
The exact trajectory Fisher decomposes as
\[
F(\theta)
=
\sum_{t\ge0}M_t(\theta),
\qquad
M_t(\theta)
:=
\mathbb E_{\tau\sim q_\theta}
\big[
u_t(\theta;\tau)u_t(\theta;\tau)^\top
\big].
\]
A chosen approximation provides per-step Fisher contributions
\(\widetilde M_t(\theta)\), a Fisher surrogate
\[
\widetilde F(\theta)
:=
\sum_{t\ge0}\widetilde M_t(\theta),
\]
and a gradient surrogate \(\widetilde h(\theta)\). Theorem~
\ref{thm:structure_dependent_approx} compares the full-space damped pair
\(F+\lambda I\) and
\(\widetilde F+\lambda I\). The exact pseudo-inverse
case is recovered only after restricting both operators to a common identifiable
subspace, because a pseudo-inverse is not continuous when the surrogate rank changes.

\subsection{Proof of Theorem~\ref{thm:structure_dependent_approx}}

Define the self-adjoint relative perturbation
\[
E
\coloneqq
F_\lambda^{-1/2}
(\widetilde F-F)
F_\lambda^{-1/2}.
\]
Then
\[
\widetilde F_\lambda
=
F_\lambda^{1/2}(I+E)F_\lambda^{1/2}.
\]
The assumption \(\|E\|_{\mathrm{op}}=\rho_\lambda<1\) implies
\[
(1-\rho_\lambda)I\preceq I+E\preceq(1+\rho_\lambda)I,
\qquad
\|(I+E)^{-1}\|_{\mathrm{op}}\le\frac1{1-\rho_\lambda}.
\]
Let
\(b=F_\lambda^{-1/2}h\) and
\(e_h=F_\lambda^{-1/2}(\widetilde h-h)\). Since
\[
F_\lambda^{1/2}v_\lambda=b,
\qquad
F_\lambda^{1/2}\widetilde v_\lambda
=(I+E)^{-1}(b+e_h),
\]
we have
\[
F_\lambda^{1/2}
(\widetilde v_\lambda-v_\lambda)
=
\big((I+E)^{-1}-I\big)b
+
(I+E)^{-1}e_h.
\]
The identity
\[
(I+E)^{-1}-I=-(I+E)^{-1}E
\]
therefore gives
\[
\|\widetilde v_\lambda-v_\lambda\|_{F_\lambda}
\le
\frac{\rho_\lambda}{1-\rho_\lambda}\|b\|
+
\frac1{1-\rho_\lambda}\|e_h\|
=
\frac{\rho_\lambda}{1-\rho_\lambda}
\|v_\lambda\|_{F_\lambda}
+
\frac1{1-\rho_\lambda}
\|\widetilde h-h\|_{F_\lambda^{-1}},
\]
which proves \eqref{eq:structured_natdir_bound_main}.

If \(\widetilde h=h\), then
\[
h^\top\widetilde F_\lambda^{-1}h
=
b^\top(I+E)^{-1}b.
\]
The eigenvalues of \((I+E)^{-1}\) lie in
\([1/(1+\rho_\lambda),1/(1-\rho_\lambda)]\), which proves the final
claim. \(\square\)

The block-surrogate implication stated after the theorem follows from the same
normalisation. Set \(A_B=B+\lambda I\) and
\(R_B=A_B^{-1/2}(F-B)A_B^{-1/2}\). If
\(\gamma_{B,\lambda}=\|R_B\|_{\mathrm{op}}<1\), then
\((1-\gamma_{B,\lambda})A_B\preceq F_\lambda
\preceq(1+\gamma_{B,\lambda})A_B\). Therefore
\[
\left\|
F_\lambda^{-1/2}(F-B)F_\lambda^{-1/2}
\right\|_{\mathrm{op}}
\le
\frac{\gamma_{B,\lambda}}{1-\gamma_{B,\lambda}}.
\]
Indeed, the numerator of the corresponding generalised Rayleigh quotient is at most
\(\gamma_{B,\lambda}x^\top A_Bx\) in absolute value, while its denominator is at
least \((1-\gamma_{B,\lambda})x^\top A_Bx\).
Thus \(\gamma_{B,\lambda}<1/2\) is sufficient for the relative-error hypothesis
\(\rho_\lambda<1\).

\begin{figure}[!ht]
  \centering
  \includegraphics[width=\textwidth]{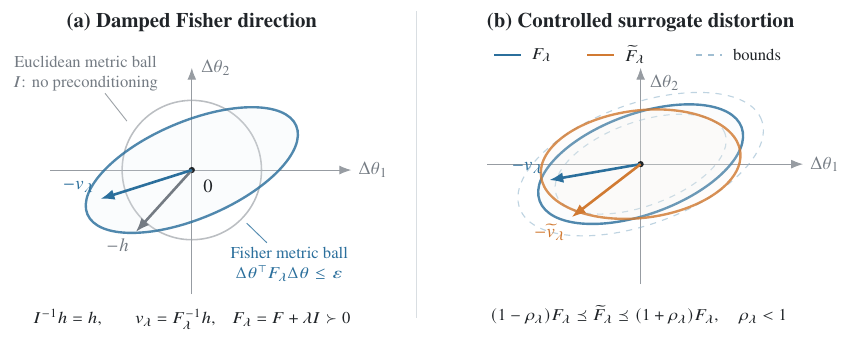}
  \caption{Local interpretation of damped Fisher preconditioning and
  Theorem~\ref{thm:structure_dependent_approx}.
  (a) Euclidean geometry uses the identity metric, so
  \(I^{-1}h=h\) and the gradient is not preconditioned. The Fisher matrix describes
  the local KL curvature of the policy family; consequently,
  \(-v_\lambda=-F_\lambda^{-1}h\) rescales parameter directions according to their
  distributional effect, with damping
  \(F_\lambda=F+\lambda I\succ0\) ensuring a positive-definite metric.
  (b) When \(\widetilde h=h\), \(\rho_\lambda<1\) controls the relative distortion
  between the exact and surrogate metric balls and hence the difference between
  \(v_\lambda\) and \(\widetilde v_\lambda\); the dashed ellipses indicate the
  scaled exact-metric bounds. The general theorem additionally accounts for
  gradient error \(\widetilde h-h\).}
  \label{fig:fisher_preconditioning_geometry}
\end{figure}

\subsection{Descent and absolute-error consequences}
\label{app:approximation_consequences}

\begin{cor}[Local improvement under a surrogate Fisher]
\label{cor:surrogate_fisher_descent}
Let \(h=\nabla\mathcal J(\theta)\), \(\widetilde h=h\), and suppose the assumptions
of Theorem~\ref{thm:structure_dependent_approx} hold. If, locally,
\[
\mathcal J(\theta+s)
\le
\mathcal J(\theta)+h^\top s+\frac L2\|s\|_{F_\lambda}^2,
\]
then
\[
\mathcal J(\theta-\eta\widetilde v_\lambda)
\le
\mathcal J(\theta)
-
\eta\left(1-\frac{L\eta}{2(1-\rho_\lambda)}\right)
h^\top\widetilde F_\lambda^{-1}h.
\]
Hence every \(0<\eta<2(1-\rho_\lambda)/L\) for which the proposed step remains
in the neighbourhood where the smoothness inequality holds gives strict descent unless
\(h=0\).
\end{cor}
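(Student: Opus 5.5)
Substituting \(s=-\eta\widetilde v_\lambda\), with \(\widetilde v_\lambda=\widetilde F_\lambda^{-1}h\), into the assumed local smoothness inequality gives
\[
\mathcal J(\theta-\eta\widetilde v_\lambda)
\le
\mathcal J(\theta)-\eta\,h^\top\widetilde F_\lambda^{-1}h
+\frac{L\eta^2}{2}\,\|\widetilde v_\lambda\|_{F_\lambda}^2 .
\]
The first-order term is already in the required form, because \(h^\top\widetilde v_\lambda=h^\top\widetilde F_\lambda^{-1}h\). What remains is to show that the quadratic term is at most \((1-\rho_\lambda)^{-1}h^\top\widetilde F_\lambda^{-1}h\). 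Collecting the two terms then gives the displayed bound.

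For the quadratic term I will reuse the normalisation from the proof of Theorem~\ref{thm:structure_dependent_approx}. Set \(E=F_\lambda^{-1/2}(\widetilde F-F)F_\lambda^{-1/2}\) and \(b=F_\lambda^{-1/2}h\). Then \(\widetilde F_\lambda=F_\lambda^{1/2}(I+E)F_\lambda^{1/2}\), which gives
\[
F_\lambda^{1/2}\widetilde v_\lambda=(I+E)^{-1}b,
\qquad
h^\top\widetilde F_\lambda^{-1}h=b^\top(I+E)^{-1}b .
\]
Let \(w=(I+E)^{-1/2}b\); this is well defined because \(I+E\succeq(1-\rho_\lambda)I\succ0\). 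Then
\[
\|\widetilde v_\lambda\|_{F_\lambda}^2=b^\top(I+E)^{-2}b=w^\top(I+E)^{-1}w\le\frac{1}{1-\rho_\lambda}\,\|w\|^2 ,
\]
and \(\|w\|^2=b^\top(I+E)^{-1}b=h^\top\widetilde F_\lambda^{-1}h\). Hence
\[
\|\widetilde v_\lambda\|_{F_\lambda}^2\le(1-\rho_\lambda)^{-1}\,h^\top\widetilde F_\lambda^{-1}h .
\]
This is the one step where care is needed: the natural norm is \(\|\cdot\|_{F_\lambda}\), but the quantity to compare against is \(h^\top\widetilde F_\lambda^{-1}h\), and routing through \(w\) is what makes the spectral bound \(\|(I+E)^{-1}\|_{\mathrm{op}}\le(1-\rho_\lambda)^{-1}\) apply cleanly. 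Substituting this estimate yields
\[
\mathcal J(\theta-\eta\widetilde v_\lambda)\le\mathcal J(\theta)-\eta\Bigl(1-\frac{L\eta}{2(1-\rho_\lambda)}\Bigr)h^\top\widetilde F_\lambda^{-1}h .
\]

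For strict descent, the bracket is positive exactly when \(0<\eta<2(1-\rho_\lambda)/L\). It remains to see that \(h^\top\widetilde F_\lambda^{-1}h>0\) whenever \(h\neq0\). This holds because \(\widetilde F_\lambda=\widetilde F+\lambda I\succeq\lambda I\succ0\), since \(\widetilde F\succeq0\) and \(\lambda>0\). The theorem's lower bound \(h^\top\widetilde F_\lambda^{-1}h\ge(1+\rho_\lambda)^{-1}h^\top F_\lambda^{-1}h\) gives the same conclusion. All of this applies only when the step \(-\eta\widetilde v_\lambda\) stays in the neighbourhood where the smoothness inequality holds, which is exactly the proviso in the statement.
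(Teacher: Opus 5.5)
Your proof is correct and follows the paper's argument. You substitute \(s=-\eta\widetilde v_\lambda\) into the smoothness inequality, bound \(\|\widetilde v_\lambda\|_{F_\lambda}^2\le(1-\rho_\lambda)^{-1}h^\top\widetilde F_\lambda^{-1}h\), and use \(\widetilde F_\lambda\succ0\) for strictness. The paper gets that key bound by evaluating the Loewner inequality \((1-\rho_\lambda)F_\lambda\preceq\widetilde F_\lambda\) at \(\widetilde v_\lambda\), which is your whitened estimate \((I+E)^{-1}\preceq(1-\rho_\lambda)^{-1}I\) written in the original coordinates.
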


\begin{proof}
The relative-error assumption implies
\[
(1-\rho_\lambda)F_\lambda
\preceq\widetilde F_\lambda,
\qquad\text{and hence}\qquad
\|\widetilde v_\lambda\|_{F_\lambda}^2
\le
\frac1{1-\rho_\lambda}
h^\top\widetilde F_\lambda^{-1}h.
\]
For a step \(s=-\eta\widetilde v_\lambda\) in that neighbourhood, applying the local
smoothness inequality gives the stated bound. Positive definiteness of
\(\widetilde F_\lambda\) gives strict descent when \(h\neq0\).
\end{proof}

\begin{cor}[Absolute errors and inverse stability]
\label{cor:additive_fisher_constants}
Under the notation of Theorem~\ref{thm:structure_dependent_approx}, define the
full-space damped curvature floor
\[
\mu_\lambda
\coloneqq
\lambda_{\min}(F_\lambda)
=\lambda_{\min}(F)+\lambda
\]
and suppose
\[
\delta\coloneqq\|\widetilde F-F\|_{\mathrm{op}}<\mu_\lambda,
\qquad
\|\widetilde h-h\|\le\varepsilon.
\]
Then
\[
\|F_\lambda^{-1}\|_{\mathrm{op}}=\mu_\lambda^{-1},
\qquad
\|\widetilde F_\lambda^{-1}\|_{\mathrm{op}}
\le(\mu_\lambda-\delta)^{-1},
\]
and
\begin{equation}
\label{eq:additive_natdir_bound_main}
\|\widetilde v_\lambda-v_\lambda\|
\le
\frac{\varepsilon}{\mu_\lambda}
+
\frac{\delta\|\widetilde h\|}
{\mu_\lambda(\mu_\lambda-\delta)}.
\end{equation}
\end{cor}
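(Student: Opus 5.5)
The three claims follow from elementary spectral facts about symmetric positive definite matrices, combined with the resolvent identity for the difference of two inverses.

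\emph{Step 1: the norm of \(F_\lambda^{-1}\).} Since \(F\succeq0\), the damped operator \(F_\lambda=F+\lambda I\) is symmetric positive definite. Its smallest eigenvalue is \(\mu_\lambda=\lambda_{\min}(F)+\lambda\ge\lambda>0\). The operator norm of the inverse of a symmetric positive definite matrix is the reciprocal of its smallest eigenvalue, so \(\|F_\lambda^{-1}\|_{\mathrm{op}}=\mu_\lambda^{-1}\).

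\emph{Step 2: the norm of \(\widetilde F_\lambda^{-1}\).} Write \(\widetilde F_\lambda=F_\lambda+(\widetilde F-F)\). Weyl's inequality, or equivalently the Rayleigh quotient bound \(x^\top\widetilde F_\lambda x\ge(\mu_\lambda-\delta)\|x\|^2\), gives \(\lambda_{\min}(\widetilde F_\lambda)\ge\mu_\lambda-\delta>0\). Hence \(\widetilde F_\lambda\) is invertible and \(\|\widetilde F_\lambda^{-1}\|_{\mathrm{op}}\le(\mu_\lambda-\delta)^{-1}\).

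There is a cheaper alternative. Because \(\widetilde F\succeq0\), we have \(\lambda_{\min}(\widetilde F_\lambda)\ge\lambda\) directly. That bound is valid but weaker than the stated one, so I would keep the Weyl argument, which is the one that matches the statement.

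\emph{Step 3: splitting the direction error.} Write
\[
\widetilde v_\lambda-v_\lambda=F_\lambda^{-1}(\widetilde h-h)+\bigl(\widetilde F_\lambda^{-1}-F_\lambda^{-1}\bigr)\widetilde h .
\]
The first term has norm at most \(\varepsilon/\mu_\lambda\) by Step 1. For the second term, apply the resolvent identity
\[
\widetilde F_\lambda^{-1}-F_\lambda^{-1}=F_\lambda^{-1}\bigl(F_\lambda-\widetilde F_\lambda\bigr)\widetilde F_\lambda^{-1}=F_\lambda^{-1}\bigl(F-\widetilde F\bigr)\widetilde F_\lambda^{-1}.
\]
Submultiplicativity, together with Steps 1 and 2, then bounds the second term by \(\delta\|\widetilde h\|/\bigl(\mu_\lambda(\mu_\lambda-\delta)\bigr)\). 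Adding the two bounds gives \eqref{eq:additive_natdir_bound_main}.

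\emph{Where care is needed.} The main point requiring care is the decomposition in Step 3. The perturbation factor must multiply \(\widetilde h\) and not \(h\), so that the final bound involves \(\|\widetilde h\|\) as stated. This in turn fixes the order of factors in the resolvent identity: \(\widetilde F_\lambda^{-1}\) must sit on the right, acting on \(\widetilde h\).

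None of this uses the relative hypothesis \(\rho_\lambda<1\). The only requirement is \(\delta<\mu_\lambda\), and that condition already guarantees invertibility of \(\widetilde F_\lambda\).
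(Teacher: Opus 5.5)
Your proof is correct and follows the paper's argument: Weyl's inequality gives $\lambda_{\min}(\widetilde F_\lambda)\ge\mu_\lambda-\delta$. Adding and subtracting $F_\lambda^{-1}\widetilde h$ then yields $\widetilde v_\lambda-v_\lambda=F_\lambda^{-1}(\widetilde h-h)+F_\lambda^{-1}(F-\widetilde F)\widetilde F_\lambda^{-1}\widetilde h$, which is bounded by submultiplicativity.

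One harmless slip is in your aside. The bound $\|\widetilde F_\lambda^{-1}\|_{\mathrm{op}}\le\lambda^{-1}$ from $\widetilde F\succeq0$ is not weaker in general, because $\lambda\ge\mu_\lambda-\delta$ exactly when $\delta\ge\lambda_{\min}(F)$. That always holds when $F$ is singular, as it is for softmax logits, so in that case the $\lambda^{-1}$ bound is the sharper one.
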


\begin{proof}
Weyl's eigenvalue inequality gives
\[
\lambda_{\min}(\widetilde F_\lambda)
\ge
\lambda_{\min}(F_\lambda)
-\|\widetilde F-F\|_{\mathrm{op}}
\ge\mu_\lambda-\delta>0.
\]
The inverse-norm bounds follow. Adding and subtracting
\(F_\lambda^{-1}\widetilde h\) gives
\[
\widetilde v_\lambda-v_\lambda
=
F_\lambda^{-1}(\widetilde h-h)
+
F_\lambda^{-1}(F-\widetilde F)
\widetilde F_\lambda^{-1}\widetilde h.
\]
Applying the inverse-norm bounds to this identity gives
\eqref{eq:additive_natdir_bound_main}.
\end{proof}

Because \(\mu_\lambda\ge\lambda\), the condition \(\delta<\lambda\) is a
conservative sufficient condition for Corollary~\ref{cor:additive_fisher_constants}
and for \(\rho_\lambda<1\). Without a relative-error condition,
positive semidefiniteness still gives the coarser bound
\[
\|\widetilde v_\lambda-v_\lambda\|
\le
\frac{\|\widetilde h-h\|}{\lambda}
+
\frac{\|\widetilde F-F\|_{\mathrm{op}}\|\widetilde h\|}{\lambda^2}.
\]
For an undamped comparison, \(F\) and \(\widetilde F\) must instead share an
identifiable subspace on which the smallest positive curvature is bounded away from
zero. We write this curvature as
\(\mu_+(F)\coloneqq\lambda_{\min}^{+}(F)\).

\subsection{Policy-specific Fisher conditioning}
\label{app:fisher_conditioning}

\begin{prop}[Policy-specific Fisher conditioning]
\label{prop:gflownet_fisher_conditioning}
The following bounds make the relevant curvature explicit.
\begin{enumerate}[label=(\roman*),nosep]
\item For a tabular softmax policy, restrict each active state block to the
sum-zero logit subspace. If \(d_\theta(s)\ge d_{\min}>0\) and
\(\pi_\theta(a\mid s)\ge p_{\min}>0\) for every active state and valid action, then
\[
\mu_+(F)\ge d_{\min}p_{\min}.
\]
\item For a shared softmax policy, let
\(P_s=I-\mathbf1\mathbf1^\top/|\mathcal A(s)|\). If
\(\pi_\theta(a\mid s)\ge p_{\min}\) for every valid action at every visited state,
then every parameter direction \(w\) satisfies
\[
w^\top Fw
\ge
p_{\min}
\mathbb E_{\tau\sim q_\theta}\!\left[
\sum_{t\ge0}\|P_{s_t}J_\theta(s_t)w\|^2
\right].
\]
\item Suppose a factorisation-aligned policy makes one categorical parent-set
decision per child using disjoint parameter blocks, so that
\(F=\bigoplus_jF_j\) and, on the sum-zero logit subspace,
\(F_j=d_jC(\pi_j)\). If \(d_j>0\) is the effective occupancy of each nontrivial
child block \(j\) and every valid parent set has probability at least \(p_j>0\),
then
\[
\mu_+(F)\ge\min_j d_jp_j.
\]
\end{enumerate}
\end{prop}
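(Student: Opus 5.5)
The plan is to reduce all three parts to a single spectral fact about the categorical covariance block, and then propagate it through the exact Fisher identities already established: Corollary~\ref{cor:exact_tabular_fisher_main} for (i) and (iii), and \eqref{eq:shared_exact_fisher_main} for (ii).

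\emph{Key lemma.} Let \(p\in\Delta^{m-1}_{++}\) with \(\min_i p_i\ge p_{\min}\), and let \(P=I-\mathbf1\mathbf1^\top/m\). I will show that every \(x\in\mathbb R^m\) satisfies
\[
x^\top C(p)x\ge p_{\min}\|Px\|^2 .
\]
The route is the variance representation. With \(\bar x_p\coloneqq p^\top x\) and \(\bar x\coloneqq \mathbf1^\top x/m\),
\[
x^\top C(p)x=\sum_i p_i(x_i-\bar x_p)^2\ge p_{\min}\sum_i(x_i-\bar x_p)^2\ge p_{\min}\sum_i(x_i-\bar x)^2=p_{\min}\|Px\|^2 .
\]
The last inequality holds because the unweighted sum of squares \(\sum_i(x_i-c)^2\) is minimised over \(c\) at the arithmetic mean \(c=\bar x\). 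On the sum-zero subspace \(\mathbf1^\perp\) we have \(Px=x\), so \(C(p)\succeq p_{\min}I\) there. Since \(\ker C(p)=\operatorname{span}\{\mathbf1\}\), it follows that \(\lambda_{\min}^+(C(p))\ge p_{\min}\).

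For (i), I take \(F=\bigoplus_s d_\theta(s)C(\pi_s)\) from Corollary~\ref{cor:exact_tabular_fisher_main}. Inactive states (\(d_\theta(s)=0\)) and states with a single valid action contribute zero blocks, so the range of \(F\) is the direct sum of the subspaces \(\mathbf1^\perp\) over the active nontrivial states. The positive eigenvalues of a block-diagonal matrix are the union of the positive eigenvalues of its blocks. The lemma then gives each active block curvature at least \(d_\theta(s)p_{\min}\ge d_{\min}p_{\min}\) on its sum-zero subspace, so \(\mu_+(F)\ge d_{\min}p_{\min}\).

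Part (iii) is the same argument applied to the blocks \(F_j=d_jC(\pi_j)\). Each block is bounded below by \(d_jp_j\) on its own sum-zero subspace, so the minimum positive eigenvalue is at least \(\min_j d_jp_j\).

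For (ii), I start from \eqref{eq:shared_exact_fisher_main}:
\[
w^\top Fw=\mathbb E\Big[\sum_t (J_\theta(s_t)w)^\top C(\pi_\theta(\cdot\mid s_t))\,J_\theta(s_t)w\Big].
\]
Applying the lemma pointwise with \(x=J_\theta(s_t)w\) and \(P=P_{s_t}\), for every visited \(s_t\), and taking expectations gives the claim. Interchanging the expectation with the infinite sum is justified by nonnegativity, using monotone convergence.

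I expect the main obstacle to be the bookkeeping rather than the inequality itself. One issue is identifying \(\operatorname{range}F\) exactly in (i) and (iii), so that zero blocks from unvisited or single-action states are excluded and do not spuriously drive \(\mu_+\) to zero. The other is confirming that ``restrict to the sum-zero subspace'' matches the identifiable tangent space \(\mathcal U_\theta\). Both reduce to \(\ker C(p)=\operatorname{span}\{\mathbf1\}\) for strictly positive \(p\), which follows from the equality case of the variance identity.
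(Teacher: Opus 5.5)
Your proposal is correct and follows essentially the same route as the paper. The paper proves the same categorical bound, $x^\top C(p)x=\min_{c}\sum_i p_i(x_i-c)^2\ge p_{\min}\min_{c}\sum_i(x_i-c)^2=p_{\min}\,x^\top Px$; evaluating at $\bar x_p$ and then at $\bar x$, as you do, is the same argument. It then propagates this bound through the tabular direct sum for (i), the shared-parameter identity for (ii), and the block direct sum for (iii). Your extra bookkeeping (excluding zero blocks from unvisited or single-action states when identifying $\operatorname{range}F$, and monotone convergence for the sum–expectation interchange) is a sound refinement of the paper's terser write-up.
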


\begin{proof}
For any categorical probability vector \(p\), let
\(P=I-\mathbf1\mathbf1^\top/m\), where \(m\) is the number of categories. For every
\(x\in\mathbb R^m\),
\begin{align}
x^\top C(p)x
&=
\min_{c\in\mathbb R}\sum_{i=1}^m p_i(x_i-c)^2 \\
&\ge
p_{\min}\min_{c\in\mathbb R}\sum_{i=1}^m(x_i-c)^2
=
p_{\min}x^\top P x.
\label{eq:appendix_categorical_lower_bound}
\end{align}
For a tabular policy, \(P\) is the identity on the sum-zero logit subspace.
Combining this inequality with
\(F=\bigoplus_s d_\theta(s)C(\pi_s)\) proves (i). Applying the same inequality inside
\eqref{eq:appendix_shared_exact_fisher} proves (ii). Applying the categorical bound
to each \(F_j=d_jC(\pi_j)\) and taking the smallest positive eigenvalue of their
direct sum proves (iii).
\end{proof}

\subsection{Score-form gradients and the Fisher range}
\label{app:score_compatibility}

To prove Proposition~\ref{prop:score_compatible_gradient}, define the linear operator
\(T:L^2(q_\theta)\to\mathbb R^{\dim(\theta)}\) by
\[
Tr=\mathbb E_{q_\theta}[z(\tau)r(\tau)].
\]
Its adjoint is \(T^\ast v=z(\tau)^\top v\), and
\[
TT^\ast
=
\mathbb E_{q_\theta}[z(\tau)z(\tau)^\top]
=F.
\]
The moment assumption on \(z\) makes \(T\) bounded. Because its codomain is
finite-dimensional, \(\operatorname{range}T\) is closed, and therefore
\(\operatorname{range}(TT^\ast)=\operatorname{range}T\). Thus \(h=Tr\) lies in
\(\operatorname{range}T=\operatorname{range}F=\mathcal U_\theta\). Moreover,
standard pseudo-inverse identities give
\[
h^\top F^\dagger h
=
\langle r,T^\ast(TT^\ast)^\dagger Tr\rangle_{L^2(q_\theta)}
=
\|\Pi_{\operatorname{range}T^\ast}r\|_{L^2(q_\theta)}^2
\le
\|r\|_{L^2(q_\theta)}^2.
\]
This proves the claim. \(\square\)

\subsection{Monte Carlo approximation}

Theorem~\ref{thm:structure_dependent_approx} is deterministic: it bounds the
natural-direction error in terms of a gradient error and a Fisher-operator error. The
sampled case is obtained by applying the same bound to random estimators. Suppose
\[
\widehat h_N(\theta)\xrightarrow{p}h(\theta),
\qquad
\widehat M_{t,N}(\theta)\xrightarrow{p}M_t(\theta)
\]
for the relevant per-step Fisher contributions, and suppose the summed operator error
satisfies
\[
\sum_{t\ge0}
\|\widehat M_{t,N}(\theta)-M_t(\theta)\|_{\mathrm{op}}
\xrightarrow{p}0.
\]
Let
\[
\widehat F_N(\theta):=\sum_{t\ge0}\widehat M_{t,N}(\theta).
\]
For any fixed \(\lambda>0\), define
\[
\widehat v_{N,\lambda}
\coloneqq
(\widehat F_N+\lambda I)^{-1}\widehat h_N,
\qquad
v_\lambda
\coloneqq
(F+\lambda I)^{-1}h.
\]
Then
\[
\widehat v_{N,\lambda}\xrightarrow{p}v_\lambda.
\]
Indeed, the relative error
\[
\left\|
(F+\lambda I)^{-1/2}
(\widehat F_N-F)
(F+\lambda I)^{-1/2}
\right\|_{\mathrm{op}}
\xrightarrow{p}0,
\]
and Theorem~\ref{thm:structure_dependent_approx} applies with probability tending to
one. For the undamped direction, convergence instead requires a fixed identifiable
subspace with \(\mu_+(F)>0\), together with eventual preservation of that subspace and
operator error smaller than \(\mu_+(F)\). This distinction is important for empirical
outer-product neural Fishers, whose rank is at most the number of sampled per-step score
increments.

\subsection{A generic importance-weighted stochastic estimator}

For completeness, we record the change-of-measure identity underlying the sampled
gradient estimator \(\widehat h(\theta)\) used in the natural-gradient update
\eqref{eq:gflownet_natgrad_update}. Let \(\mu_k(\tau)\) be the
proposal distribution used to collect trajectories at iteration \(k\), and let
\(\nu_k(\tau)\) be the reference distribution under which the objective is defined.
Assume \(\mu_k(\tau)>0\) whenever \(\nu_k(\tau)>0\). Then, for any measurable function
\(g(\tau)\) that is integrable under \(\nu_k\) (and may be vector-valued),
\begin{equation}
\label{eq:appendix_change_of_measure}
\mathbb E_{\tau\sim \nu_k}[g(\tau)]
=
\mathbb E_{\tau\sim \mu_k}
\left[
\frac{\nu_k(\tau)}{\mu_k(\tau)}\,g(\tau)
\right].
\end{equation}
Taking
\[
g(\tau)
=
\ell'\!\big(\delta_{\theta_k,\phi_k,\xi_k}(\tau)\big)\,
z_{\theta_k}(\tau)
\]
gives the unbiased estimator
\begin{equation}
\widehat h_k
=
\frac1N\sum_{i=1}^N
w_k(\tau_i)\,
\ell'\!\big(\delta_{\theta_k,\phi_k,\xi_k}(\tau_i)\big)\,
z_{\theta_k}(\tau_i),
\qquad
w_k(\tau):=\frac{\nu_k(\tau)}{\mu_k(\tau)},
\qquad
\tau_i\sim\mu_k.
\label{eq:appendix_importance_weighted_gradient}
\end{equation}
Here \(z_{\theta_k}(\tau)=\nabla_\theta\log q_{\theta_k}(\tau)\) denotes the full
trajectory score. In practice, one may clip or self-normalise the weights
\(w_k\), which introduces bias but can reduce variance.

\section{GFlowNet realisability, support, and terminal-law characterisation}
\label{app:gflownet_realizability}

This appendix separates a basic representability question from the optimisation geometry
studied in the main text. Before optimising a forward-policy family, the state graph and
policy class must be able to represent the desired terminal law
\(\pi(x)\propto R(x)\). The statements below make those design conditions explicit;
they are not additional geometric results.

Let \(\mathcal G=(\mathcal S,\mathcal E)\) be a finite directed acyclic graph with a
distinguished source \(s_0\) and terminal set \(\mathcal X\), with
\(s_0\notin\mathcal X\). We assume that terminal states are sinks and that every sink
reachable from \(s_0\) is terminal. Consequently, a normalised forward policy terminates
in \(\mathcal X\). A complete trajectory is a directed path
\[
\tau=(s_0,s_1,\dots,s_T),
\qquad
(s_t,s_{t+1})\in\mathcal E,
\qquad
s_T\in\mathcal X.
\]
Write \(\mathcal T(x)\) for the set of complete trajectories ending at
\(x\in\mathcal X\). Let \(P_F(s'\mid s)\) denote a forward Markov policy on outgoing
edges, and let \(P_B(s\mid s')\) denote a backward Markov policy on incoming edges. The
terminal law induced by the forward policy is
\begin{equation}
\label{eq:app_terminal_law_from_forward}
q_F(x)
:=
\sum_{\tau=(s_0,\dots,s_T)\in \mathcal T(x)}
\prod_{t=0}^{T-1} P_F(s_{t+1}\mid s_t).
\end{equation}
Given a nonnegative reward \(R:\mathcal X\to\mathbb R_+\), define the target terminal law
\begin{equation}
\label{eq:app_target_terminal_law}
\pi(x)
=
\frac{R(x)}{\sum_{x'\in\mathcal X}R(x')},
\qquad x\in\mathcal X,
\end{equation}
whenever the denominator is finite and positive.

\begin{figure}[!ht]
  \centering
  \includegraphics[width=\textwidth]{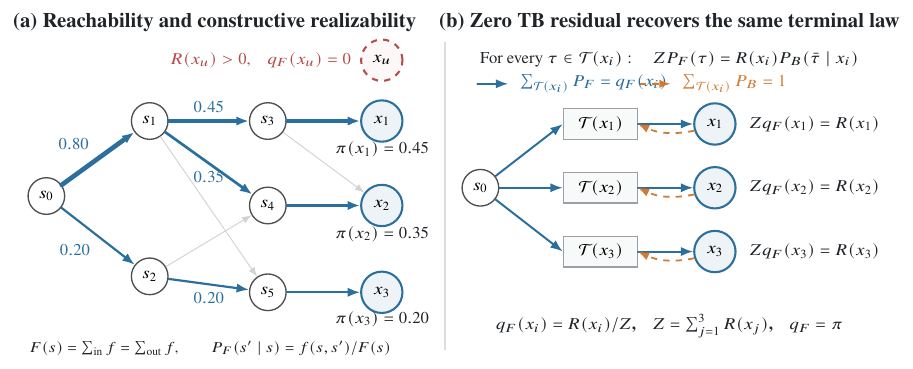}
  \caption{Reachability, constructive realisability, and terminal-law
  characterisation.
  (a) In a three-terminal illustration, the masses \(\pi(x_i)\) are routed along selected
  source-to-terminal paths; edge widths and labels represent the resulting flow
  \(f(s,s')\). Flow conservation defines
  \(P_F(s'\mid s)=f(s,s')/F(s)\), while a positive-reward terminal outside the
  source-reachable subgraph necessarily has \(q_F(x_u)=0\). Faint arrows are
  admissible but unused edges.
  (b) The boxes collect all trajectories in the same DAG ending at each \(x_i\).
  At zero TB residual, summing the pathwise identity within each box gives
  \(Zq_F(x_i)=R(x_i)\), because the corresponding reverse-path probabilities sum to
  one. Thus \(q_F=\pi\). The construction and summation apply to any finite terminal
  set, not only three categories.}
  \label{fig:gflownet_realizability_terminal_law}
\end{figure}

\paragraph{Reachability and constructive realisability.}
Reachability is necessary by design. If there is no path from \(s_0\) to a terminal
\(x\), then \(\mathcal T(x)=\varnothing\), so
Eq.~\eqref{eq:app_terminal_law_from_forward} gives \(q_F(x)=0\) for every forward
policy. A target assigning \(R(x)>0\), and hence \(\pi(x)>0\), to such a terminal
cannot be represented on that state graph.

Conversely, reachability is sufficient at the level of an unrestricted statewise
Markov policy under two explicit qualifications: every terminal with \(\pi(x)>0\)
must be reachable, and the policy must be allowed to assign zero probability to unused
edges. To see this, for each
terminal state \(x\) with \(\pi(x)>0\), choose one directed path
\[
\tau_x=(s_0,s^{(x)}_1,\dots,s^{(x)}_{T_x}=x)
\]
from \(s_0\) to \(x\). The selected paths need not form a tree: they may share
prefixes, merge, or split.

Define a nonnegative edge flow by
\[
f(s,s')
:=
\sum_{x:\,(s,s')\in\tau_x}\pi(x).
\]
Define a state flow by
\[
F(s_0):=1,
\qquad
F(s):=\sum_{s':(s',s)\in\mathcal E} f(s',s),
\quad s\neq s_0.
\]
Every selected path entering a nonterminal state \(s\) also leaves it exactly once, so
flow conservation gives
\[
F(s)
=
\sum_{s':(s,s')\in\mathcal E} f(s,s').
\]
For terminal states, since terminals are sinks and no selected path passes through a
terminal before ending, we have
\[
F(x)=\pi(x),
\qquad x\in\mathcal X.
\]

Now define the forward policy on positive-flow nonterminal states by
\begin{equation}
\label{eq:app_realizability_forward_policy}
P_F(s'\mid s):=\frac{f(s,s')}{F(s)}.
\end{equation}
On zero-flow states, choose any normalised outgoing distribution; those states are not
visited under the constructed policy. Such a distribution exists at every
source-reachable nonterminal because every source-reachable sink is terminal; states
outside the source-reachable subgraph can be discarded. Moreover, if \(F(s)=0\), then
every outgoing edge has \(f(s,s')=0\). Thus the identity
\(F(s)P_F(s'\mid s)=f(s,s')\) holds on both positive- and zero-flow states.

We show by induction over a topological ordering of the DAG that the probability of
reaching any state \(s\) under \(P_F\) is \(F(s)\). This is true at the source because
\(F(s_0)=1\). If the claim holds for all parents of \(s'\), then
\[
\Pr(s')
=
\sum_{s:(s,s')\in\mathcal E}
\Pr(s)P_F(s'\mid s)
=
\sum_{s:(s,s')\in\mathcal E}
F(s)P_F(s'\mid s)
=
\sum_{s:(s,s')\in\mathcal E} f(s,s')
=
F(s').
\]
Therefore, for a terminal \(x\),
\[
q_F(x)=\Pr(x)=F(x)=\pi(x).
\]
This proves the claim.

This construction is an existence argument for an unrestricted statewise Markov policy,
not a guarantee for an arbitrary parameterisation. A parameterised GFlowNet must be
expressive enough to realise compatible conditionals at all states. In particular, a
finite-logit softmax has full support and cannot implement the zero probabilities used by
this construction. When unused edges are present, this particular policy therefore lies
only in the closure of the softmax family and can be approximated by making their
probabilities small. Another full-support realisation may exist for a particular graph,
but reachability alone does not guarantee one. Shared neural parameters impose
additional cross-state compatibility constraints and need not realise the construction
exactly.

\paragraph{What exact Trajectory Balance implies.}
Let \(Z>0\) be a scalar source flow. For a complete trajectory
\(\tau=(s_0,\dots,s_T=x)\in\mathcal T(x)\), write
\[
P_F(\tau):=\prod_{t=0}^{T-1}P_F(s_{t+1}\mid s_t),
\qquad
P_B(\bar\tau\mid x):=\prod_{t=0}^{T-1}P_B(s_t\mid s_{t+1}),
\]
where \(\bar\tau\) denotes the reversed trajectory from \(x\) to \(s_0\). We assume that
\(P_B(\cdot\mid s')\) is normalised over valid parents and that, from every terminal
\(x\), the induced reverse chain terminates at \(s_0\).

The squared Trajectory-Balance residual used during training acts as a soft penalty away
from an optimum. The exact terminal-law statement is stronger: it applies when the
pathwise residual is zero for every complete trajectory, or equivalently when
\begin{equation}
\label{eq:app_exact_TB_pathwise}
Z\,P_F(\tau)=R(x)\,P_B(\bar\tau\mid x).
\end{equation}
Fix \(x\in\mathcal X\). Summing \eqref{eq:app_exact_TB_pathwise} over all complete
trajectories ending at \(x\) gives
\begin{align}
Z\sum_{\tau\in\mathcal T(x)}P_F(\tau)
&=
R(x)\sum_{\tau\in\mathcal T(x)}P_B(\bar\tau\mid x)
\nonumber\\
Zq_F(x)&=R(x).
\label{eq:app_tb_terminal_law}
\end{align}
The second equality uses normalisation of the reverse-trajectory distribution. Summing
over \(x\) yields
\[
Z=\sum_{x\in\mathcal X}R(x).
\]
Hence \(q_F(x)=R(x)/Z=\pi(x)\). A merely small or sampled Trajectory-Balance
residual does not imply this exact identity; it only encourages it through the training
objective.

\paragraph{Detailed Balance.}
The analogous conclusion follows by multiplying edgewise flow identities along a
complete trajectory. Suppose a nonnegative state-flow function
\(F:\mathcal S\to\mathbb R_+\) satisfies, for every edge
\((s,s')\in\mathcal E\),
\begin{equation}
\label{eq:app_exact_DB_edge}
F(s)\,P_F(s'\mid s)=F(s')\,P_B(s\mid s'),
\end{equation}
and for every terminal state \(x\in\mathcal X\),
\begin{equation}
\label{eq:app_terminal_flow_reward}
F(x)=R(x).
\end{equation}
Fixing a complete trajectory \(\tau=(s_0,\dots,s_T=x)\) and multiplying
\eqref{eq:app_exact_DB_edge} along the edges of \(\tau\) gives
\[
F(s_0)\prod_{t=0}^{T-1}P_F(s_{t+1}\mid s_t)
=
F(x)\prod_{t=0}^{T-1}P_B(s_t\mid s_{t+1}).
\]
Using \(F(x)=R(x)\), this is exactly the pathwise Trajectory-Balance identity
\eqref{eq:app_exact_TB_pathwise} with \(Z=F(s_0)\). Therefore
\(q_F(x)=R(x)/F(s_0)\), and normalisation gives
\(F(s_0)=\sum_xR(x)\).

In summary, exact realisability of a reward-defined terminal law on a finite state DAG
requires reachability of all positive-reward terminal states and sufficient expressivity
of the forward policy class. Under exact Trajectory Balance or Detailed Balance, the
terminal law induced by the forward policy is proportional to reward.

\section{Detailed computational regimes and factorised-target Fisher surrogates}
\label{app:computational_regimes_detailed}

This appendix expands the computational regimes described in
Section~\ref{sec:natgrad_gflownet_training}. When the objective gradient lies in the
identifiable score subspace \(\mathcal U_\theta=\operatorname{range}(F(\theta))\), the
undamped trajectory natural direction is
\[
v^\star(\theta)=F(\theta)^\dagger h(\theta),
\qquad
h(\theta):=\nabla_\theta \mathcal J(\theta),
\]
where, by Theorem~\ref{thm:riemannian_policy_induced}, the trajectory Fisher decomposes
as
\[
F(\theta)
=
\sum_{t\ge0}M_t(\theta),
\qquad
M_t(\theta)
:=
\mathbb E_{\tau\sim q_\theta}
\big[
u_t(\theta;\tau)u_t(\theta;\tau)^\top
\big].
\]
For a trajectory whose terminal state is \(s_T\), define the time-resolved active
occupancy of a nonterminal state \(s\)
\[
d_{\theta,t}(s)
\coloneqq
\Pr_{\tau\sim q_\theta}(t<T,\ s_t=s).
\]
For each \(t\), these occupancies form a sub-probability mass over nonterminal states in
variable-length constructions, and
\(d_\theta(s)=\sum_{t\ge0}d_{\theta,t}(s)\).
Thus the computational question is how the per-step second moments \(M_t(\theta)\) are
obtained: exactly, by Monte Carlo estimation, or by exploiting structure in the target
and construction process.

\subsection{Exact regimes}

When the trajectory law admits exact marginalisation, the Fisher can be computed without
statistical approximation. This includes finite state DAGs for which occupancies are
available by dynamic programming, as well as tabular settings where the statewise
decomposition in Corollary~\ref{cor:exact_tabular_fisher_main} is directly computable.
In such cases the natural gradient is an exact algorithmic object rather than a sampled
or surrogate preconditioner, although forming or solving the resulting system may still
be computationally expensive.

More precisely, conditioning on the active state gives
\[
M_t(\theta)
=
\sum_s d_{\theta,t}(s)
\sum_{a\in\mathcal A(s)}
\pi_\theta(a\mid s)\,
\nabla_\theta\log\pi_\theta(a\mid s)\,
\nabla_\theta\log\pi_\theta(a\mid s)^\top.
\]
Thus \(F=\sum_tM_t\) is exactly computable when the active occupancies, conditional action
probabilities, and policy scores are exactly evaluable and the state--action and
time-indexed sums are finite or exactly summable. This is not a tractability claim:
enumerating the states or forming a dense \(p\times p\) matrix can remain prohibitive
when \(p=\dim(\theta)\) is large.

In the tabular softmax case this reduces to the occupancy-weighted expression
\[
F(\theta)
=
\bigoplus_{s\in\mathcal S}d_\theta(s)C(\pi_s),
\]
where \(d_\theta(s)\) is the expected number of visits to \(s\) and
\(C(\pi_s)=\Diag(\pi_s)-\pi_s\pi_s^\top\).

\subsection{Monte Carlo regimes}

When exact marginalisation is unavailable but trajectories can be sampled, the same
decomposition yields a Monte Carlo estimator. For i.i.d.\ trajectories
\(\tau^{(1)},\dots,\tau^{(N)}\sim \mu\), define
\[
\widehat F_N(\theta)
=
\frac1N\sum_{i=1}^N
\omega_i
\sum_{t\ge0}
u_t(\theta;\tau^{(i)})u_t(\theta;\tau^{(i)})^\top,
\]
where \(\omega_i=1\) for on-policy sampling \(\mu=q_\theta\), and
\[
\omega_i=\frac{q_\theta(\tau^{(i)})}{\mu(\tau^{(i)})}
\]
for importance-corrected off-policy sampling. This estimator targets the decomposed
trajectory Fisher itself.

\begin{prop}[Monte Carlo Fisher consistency]
\label{prop:appendix_mc_regime}
Fix \(\theta\in\mathbb R^p\) with \(p<\infty\), and let
\[
Y(\tau)
:=
\omega(\tau)
\sum_{t\ge0}
u_t(\theta;\tau)u_t(\theta;\tau)^\top,
\qquad
\tau\sim \mu,
\]
where \(\omega(\tau)=1\) in the on-policy case and
\(\omega(\tau)=q_\theta(\tau)/\mu(\tau)\) in the importance-weighted case. Assume
\[
\mathbb E_{\mu}\|Y(\tau)\|_{\mathrm{op}}<\infty
\]
and, in the off-policy case, \(\mu(\tau)>0\) whenever \(q_\theta(\tau)>0\). Then
\[
\|\widehat F_N(\theta)-F(\theta)\|_{\mathrm{op}}
\xrightarrow[]{a.s.}0
\qquad
\text{as }N\to\infty.
\]
If \(\widehat h_N(\theta)\to h(\theta)\) in probability, then for every fixed
\(\lambda>0\),
\[
\widehat v_{N,\lambda}(\theta)
:=
(\widehat F_N(\theta)+\lambda I)^{-1}\widehat h_N(\theta)
\]
converges in probability to
\(v_\lambda(\theta)
=(F(\theta)+\lambda I)^{-1}h(\theta)\).
For the undamped direction, let \(P_{\mathcal U_\theta}\) be the orthogonal projector
onto the fixed subspace
\(\mathcal U_\theta=\operatorname{range}(F)\). If \(\mu_+(F)>0\), then the inverse of
\[
\left.
P_{\mathcal U_\theta}\widehat F_NP_{\mathcal U_\theta}
\right|_{\mathcal U_\theta}
\]
exists with probability tending to one, and its product with
\(P_{\mathcal U_\theta}\widehat h_N\) converges in probability to
\(F^\dagger h\). Convergence of the unrestricted pseudo-inverse
\(\widehat F_N^\dagger\) is not implied without additional rank and subspace stability.
\end{prop}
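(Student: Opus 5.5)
The proof has three parts: the strong law for $\widehat F_N$, the relative-error bound of Theorem~\ref{thm:structure_dependent_approx} for the damped direction, and a perturbation argument on the fixed subspace $\mathcal U_\theta$ for the undamped direction.

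\textbf{Step 1: unbiasedness and the strong law.} Write $\widehat F_N(\theta)=\frac1N\sum_{i=1}^N Y(\tau^{(i)})$, an average of i.i.d.\ copies of the symmetric matrix $Y(\tau)$. In the on-policy case, Theorem~\ref{thm:riemannian_policy_induced} gives
\[
\mathbb E_\mu[Y]=\sum_{t\ge0}\mathbb E_{q_\theta}[u_tu_t^\top]=F(\theta).
\]
Exchanging sum and expectation is justified because every summand $u_tu_t^\top$ is positive semidefinite, so monotone convergence applies entrywise on the diagonal and dominated convergence applies off it. In the off-policy case, the change-of-measure identity~\eqref{eq:appendix_change_of_measure} and the support condition $\mu>0$ on $\{q_\theta>0\}$ give $\mathbb E_\mu[Y]=\mathbb E_{q_\theta}[\sum_t u_tu_t^\top]=F$. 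Since $p<\infty$, every entry of $Y$ is bounded by $\|Y\|_{\mathrm{op}}$, which is integrable by assumption. Applying Kolmogorov's strong law entrywise to the $p^2$ entries gives $\widehat F_N\to F$ almost surely in every entry. Finite-dimensional norm equivalence upgrades this to $\|\widehat F_N-F\|_{\mathrm{op}}\to0$ almost surely.

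\textbf{Step 2: the damped direction.} Fix $\lambda>0$. Then $\|F_\lambda^{-1/2}(\widehat F_N-F)F_\lambda^{-1/2}\|_{\mathrm{op}}\le\lambda^{-1}\|\widehat F_N-F\|_{\mathrm{op}}$, so $\rho_{\lambda,N}\to0$ almost surely. With probability tending to one, $\rho_{\lambda,N}\le 1/2$. On that event, bound~\eqref{eq:structured_natdir_bound_main} gives
\[
\|\widehat v_{N,\lambda}-v_\lambda\|_{F_\lambda}\le 2\rho_{\lambda,N}\|v_\lambda\|_{F_\lambda}+2\|\widehat h_N-h\|_{F_\lambda^{-1}}.
\]
The right-hand side tends to zero in probability, because $\|x\|_{F_\lambda^{-1}}\le\lambda^{-1/2}\|x\|$ and $\widehat h_N\to h$ in probability. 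Norm equivalence ($\|x\|\le\lambda^{-1/2}\|x\|_{F_\lambda}$) then gives convergence in the Euclidean norm. Alternatively, one can use the continuity of $(A,b)\mapsto(A+\lambda I)^{-1}b$ on symmetric positive semidefinite $A$, together with the continuous mapping theorem.

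\textbf{Step 3: the undamped direction (the main subtlety).} Let $k=\dim\mathcal U_\theta$ and fix an orthonormal basis $U\in\mathbb R^{p\times k}$ of $\mathcal U_\theta$. The compressed matrix $\widehat A_N=U^\top\widehat F_NU$ converges almost surely to $A=U^\top FU$. Its smallest eigenvalue satisfies $\lambda_{\min}(A)=\mu_+(F)>0$. By Weyl's inequality, $\lambda_{\min}(\widehat A_N)\ge\mu_+(F)-\|\widehat F_N-F\|_{\mathrm{op}}$, which is positive with probability tending to one. So the restricted operator is invertible on that event, and matrix inversion is continuous near $A$.

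We also need $U^\top\widehat h_N\to U^\top h$ in probability. The reconstruction $U\widehat A_N^{-1}U^\top\widehat h_N$ then converges to $UA^{-1}U^\top h$. This limit equals $F^\dagger h$: the pseudo-inverse acts as the inverse on $\operatorname{range}F$ and as zero on $\ker F$, and $F$ maps $\mathcal U_\theta$ to itself. The component of $\widehat h_N$ in $\ker F$ is simply discarded by the projection, so no further assumption on $h$ is needed.

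The obstacle worth flagging is why this restriction is required. The unrestricted $\widehat F_N^\dagger$ may fail to converge: $\widehat F_N$ can acquire small spurious eigenvalues on $\ker F$ (in the off-policy case), or can have rank deficient relative to $k$. Either way, $\widehat F_N^\dagger$ jumps discontinuously. I would note this with a one-line example rather than prove anything about it.
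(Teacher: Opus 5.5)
Your argument is correct, and Steps 1 and 3 are essentially the paper's proof. Step 1 uses the entrywise strong law with $|Y_{jk}|\le\|Y\|_{\mathrm{op}}$, norm equivalence, and change of measure for $\mathbb E_\mu[Y]=F$. Step 3 applies Weyl's inequality and continuity of inversion to the compression $U^\top\widehat F_NU$, whose limit $UA^{-1}U^\top h$ equals $F^\dagger h$ for every $h$. Your monotone/dominated-convergence justification of $\mathbb E[\sum_t u_tu_t^\top]=\sum_t\mathbb E[u_tu_t^\top]$ is a detail the paper leaves implicit.

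The one real difference is Step 2. The paper uses almost-sure continuity of $A\mapsto(A+\lambda I)^{-1}$ together with Slutsky's theorem. You instead feed $\rho_{\lambda,N}\le\|\widehat F_N-F\|_{\mathrm{op}}/\lambda$ into Theorem~\ref{thm:structure_dependent_approx}; its hypothesis $\widetilde F\succeq0$ holds because $\omega_i\ge0$, which you should state. The paper's route is shorter. Yours is quantitative, giving an explicit bound on $\|\widehat v_{N,\lambda}-v_\lambda\|$ in terms of $\|\widehat F_N-F\|_{\mathrm{op}}/\lambda$ and $\|\widehat h_N-h\|$. It is also the argument the paper sketches in its Monte Carlo approximation subsection.

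One correction to your closing remark, which the proof does not need: for this estimator no spurious eigenvalues on $\ker F$ can appear, on- or off-policy. Take $w\in\ker F$. Then $0=w^\top Fw=\mathbb E_{q_\theta}[\sum_t(u_t^\top w)^2]$ forces $u_t(\theta;\tau)^\top w=0$ for all $t$ on every trajectory with $q_\theta(\tau)>0$, while $\omega=q_\theta/\mu$ vanishes on the remaining trajectories. Hence $\widehat F_Nw=0$ almost surely and $\widehat F_N=U\widehat A_NU^\top$. On the event $\widehat A_N\succ0$, the unrestricted $\widehat F_N^\dagger\widehat h_N$ therefore coincides with your projected estimator.

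Finite-$N$ rank deficiency occurs only on events of vanishing probability; indeed, almost surely only finitely often, since $\widehat A_N\to A\succ0$. So the one-line counterexample you plan will not exist for $\widehat F_N$ itself. The proposition's last sentence is a caution rather than a claim needing proof. It is relevant to surrogates whose range can differ from $\mathcal U_\theta$, such as the mean-field grid surrogate the paper discusses.
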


\begin{proof}
Since \(\lvert Y_{jk}\rvert\le \|Y\|_{\mathrm{op}}\), every entry is integrable.
Applying the scalar strong law to the finitely many entries gives
\(\widehat F_N\to\mathbb E_\mu[Y]\) entrywise almost surely and hence in operator
norm by equivalence of norms in finite dimension. By definition,
\(\mathbb E_\mu[Y]=F\) on-policy, and the same identity follows off-policy by change
of measure. For \(\lambda>0\), continuity of inversion gives
\((\widehat F_N+\lambda I)^{-1}\to(F+\lambda I)^{-1}\) almost surely. Combining this
with \(\widehat h_N\to h\) in probability gives the damped conclusion by Slutsky's
theorem; no independence between the two estimators is required.

For the undamped conclusion, let \(Q\) have orthonormal columns spanning
\(\mathcal U_\theta\). Then
\(Q^\top\widehat F_NQ\to Q^\top FQ\) almost surely in operator norm, while
\(Q^\top FQ\) is positive definite with smallest eigenvalue \(\mu_+(F)\).
Weyl's inequality makes the sampled restriction positive definite with probability
tending to one, and continuity of inversion gives
\[
Q(Q^\top\widehat F_NQ)^{-1}Q^\top\widehat h_N
\xrightarrow{p}
Q(Q^\top FQ)^{-1}Q^\top h
=F^\dagger h.
\]
This is the projected estimator stated in the proposition.
\end{proof}

This is a consistency result, not a central limit theorem. A \(\sqrt N\)-limit for
\(\operatorname{vec}(\widehat F_N-F)\) additionally requires finite second moments,
which can fail under heavy-tailed importance weights.

\subsection{Structure-exploitable regimes}

The most important regime is the one in which exact marginalisation is unavailable and
naive trajectory sampling is statistically inefficient, but the target has enough
locality or factorisation to make the Fisher expectation approximable. Suppose the
terminal reward factorises as
\begin{equation}
\label{eq:appendix_factorized_reward}
R(x)
=
\prod_{\alpha\in\mathcal C}\psi_\alpha(x_\alpha),
\end{equation}
or equivalently the terminal energy is additive,
\[
E(x)
=
-\sum_{\alpha\in\mathcal C}\log\psi_\alpha(x_\alpha).
\]
This equality is understood on the positive support of the factors; a zero factor
corresponds to the extended value \(E(x)=+\infty\).
This covers factor-graph targets, low-treewidth graphical models, sparse interaction
systems, and bounded-overlap motif models.

The relevance of such structure is that the local Fisher contribution
\[
M_t(\theta)
=
\mathbb E_{\tau\sim q_\theta}
\big[
u_t(\theta;\tau)u_t(\theta;\tau)^\top
\big]
\]
can be written by conditioning on the current partial state:
\[
M_t(\theta)
=
\sum_s d_{\theta,t}(s)
\sum_{a\in\mathcal A(s)}
\pi_\theta(a\mid s)\,
\nabla_\theta\log\pi_\theta(a\mid s)\,
\nabla_\theta\log\pi_\theta(a\mid s)^\top.
\]
Target factorisation does not by itself factorise this policy Fisher. It becomes useful
when it matches the construction process: local target factors can identify sensible
parameter blocks and can make policy occupancies or continuation summaries tractable
through dynamic programming, separator methods, or message passing. Replacing the
intractable quantities in \(M_t\) by these target-informed summaries defines a
surrogate; the exact expectation remains the one under \(q_\theta\).

Let \(\widetilde M_t(\theta)\) denote the resulting structured
approximation to \(M_t(\theta)\), and
define
\[
\widetilde F(\theta)
=
\sum_{t\ge0}\widetilde M_t(\theta).
\]
The approximation error is controlled additively across steps. Suppose
\(\widetilde M_t(\theta)\succeq0\) and, for each step \(t\),
\[
\|\widetilde M_t(\theta)-M_t(\theta)\|_{\mathrm{op}}
\le e_t,
\qquad
\sum_{t\ge0}e_t<\infty.
\]
Writing \(\Delta_F=\sum_{t\ge0}e_t\), the triangle inequality gives
\[
\|\widetilde F(\theta)-F(\theta)\|_{\mathrm{op}}
\le \Delta_F.
\]
If
\(\operatorname{range}(\widetilde F)=\operatorname{range}(F)=\mathcal U_\theta\)
and \(\mu_+(F)>\Delta_F\), then, for \(h\in\mathcal U_\theta\),
Weyl's inequality and the inverse identity restricted to \(\mathcal U_\theta\) give
\[
\|\widetilde F^\dagger h-F^\dagger h\|
\le
\frac{\Delta_F\|h\|}
{\mu_+(F)\bigl(\mu_+(F)-\Delta_F\bigr)}.
\]
For any \(\lambda>\Delta_F\), the damped relative error satisfies
\[
\left\|
(F+\lambda I)^{-1/2}
(\widetilde F-F)
(F+\lambda I)^{-1/2}
\right\|_{\mathrm{op}}
\le\frac{\Delta_F}{\lambda}<1,
\]
so Theorem~\ref{thm:structure_dependent_approx} directly controls the damped
surrogate direction. Thus the undamped statement requires a stable common identifiable
subspace, whereas the damped statement does not.

The approximations \(\widetilde M_t\) may arise in several ways.
When a low-treewidth target is paired with a factorisation-aligned construction policy,
junction-tree or dynamic-programming calculations can provide the required local
occupancy summaries exactly. In loopy sparse settings, belief propagation can provide
approximations to those summaries. In bounded-overlap constructions, truncated
neighbourhoods or separator summaries play an analogous role. The target factorisation
therefore supplies computational structure for the surrogate; it does not, on its own,
make the policy Fisher factorised.

A useful schematic example is a target-informed message-passing construction. Define the
conditional Fisher at state \(s\) by
\[
A_t(s)
\coloneqq
\sum_{a\in\mathcal A(s)}
\pi_\theta(a\mid s)\,
\nabla_\theta\log\pi_\theta(a\mid s)\,
\nabla_\theta\log\pi_\theta(a\mid s)^\top.
\]
Suppose there is a discrete local summary
\(c_t:\mathcal S\to\mathcal Z_t\), determined by a neighbourhood subgraph
\(\mathcal N_t\), such that \(A_t(s)=\overline A_t(c_t(s))\). Let
\[
p_t(c)
\coloneqq
\sum_{s:\,c_t(s)=c}d_{\theta,t}(s).
\]
Then
\[
M_t=\sum_{c\in\mathcal Z_t}p_t(c)\overline A_t(c).
\]
Let \(\widetilde p_t\) be a nonnegative message-passing approximation to \(p_t\), and
define
\[
\widetilde M_t^{\mathrm{BP}}
=
\sum_{c\in\mathcal Z_t}\widetilde p_t(c)\overline A_t(c).
\]
This surrogate is positive semidefinite. If
\(\sup_{c\in\mathcal Z_t}\|\overline A_t(c)\|_{\mathrm{op}}\le K_t\) and
\(\|\widetilde p_t-p_t\|_1\le\varepsilon_t\), then
\[
\|\widetilde M_t^{\mathrm{BP}}-M_t\|_{\mathrm{op}}
\le K_t\varepsilon_t.
\]
Thus \(e_t=K_t\varepsilon_t\) supplies the per-step error required above.
Graphical-model machinery is
therefore useful only to the extend that it accurately approximates the policy-induced occupancy summaries and the conditional Fisher is determined by the retained local
summary. If the latter dependence is itself approximated, its operator error must be added to \(e_t\). Target factorisation alone does not establish either condition.

Taken together, the three regimes describe different ways of accessing the same Fisher-defined natural direction. Small tabular state DAGs and exact occupancy computations belong to the exact regime; hypergrid-type benchmarks with trajectory sampling belong to the Monte Carlo regime; and factorised or locally coupled targets can enter the structure-exploitable regime when paired with an aligned construction and
accurate local summaries. The distinction is not between different optimisation principles, but between different computational routes to the same trajectory-Fisher update.

\paragraph{Coordinate-factorised grid surrogate.}
For the two-dimensional fixed-horizon grid benchmarks, let \(m_t(x,y)\) be the
policy state mass, normalised at each time \(t\), and define
\[
m_{x,t}(x)=\sum_y m_t(x,y),
\qquad
m_{y,t}(y)=\sum_x m_t(x,y),
\qquad
\widetilde m_t(x,y)=m_{x,t}(x)m_{y,t}(y).
\]
The surrogate replaces \(m_t\) by \(\widetilde m_t\) in the tabular block
\(m_t(x,y)C(\pi_\theta(\cdot\mid x,y,t))\); it does not diagonalise the categorical action covariance. In the reported fixed-horizon 8-Gaussians and Rings comparisons (Section~\ref{sec:lrw} and Appendix~\ref{sec:rings}), the implementation first computes
the exact joint occupancy by dynamic programming and only then forms \(m_{x,t}\) and \(m_{y,t}\). Natural~(approx) is therefore an ablation of the mean-field occupancy assumption, not a computational benchmark. Moreover, the product of the marginals can
place mass on jointly unreachable state--time pairs, so its Fisher may have a different range from the exact Fisher. An undamped comparison would then require projection onto a common identifiable subspace; all reported Natural~(approx) runs instead use a damped solve. The \(O(TH)\) storage cost is prospective: it would apply if coordinate messages were propagated directly. The reported exact-DP implementation retains the \(O(TH^2)\) joint-occupancy cost on a side-\(H\), horizon-\(T\) grid.

\section{Algorithm}
At iteration \(k\), let \(\mu_k\) be the proposal used to sample trajectories and
let \(\nu_k\) be the training law in the TB objective~\eqref{eq:tb_loss_training}, both
held fixed during the update. Assume that \(\mu_k\) has support wherever either
\(\nu_k\) or the current forward law \(q_\theta\) has support. Define
\(w_i=\nu_k(\tau^{(i)})/\mu_k(\tau^{(i)})\) for the objective gradient and
\(\omega_i=q_\theta(\tau^{(i)})/\mu_k(\tau^{(i)})\) for the Fisher of \(q_\theta\).
They coincide when \(\nu_k=q_\theta\); in the on-policy case
\(\mu_k=\nu_k=q_\theta\), both equal one. The resulting practical scheme is summarised
in Algorithm~\ref{alg:natgrad_gflownet}.

\begin{algorithm}[H]
\caption{Damped natural-gradient GFlowNet update}
\label{alg:natgrad_gflownet}
\small
\begin{algorithmic}[1]
\Require initial parameters \((\theta,\phi,\xi)\); damping \(\lambda>0\); step sizes
\(\eta_\theta,\eta_\phi,\eta_\xi\)
\Require proposal and reference rules \((\mu_k,\nu_k)\); Fisher mode
\(\mathsf M\in\{\mathrm{exact},\mathrm{sampled},\mathrm{structured}\}\)
\For{\(k=0,1,\dots,K-1\)}
    \State Sample trajectories \(\tau^{(1)},\dots,\tau^{(N)}\sim \mu_k\)
    \State Compute residuals \(\delta_i=\delta_{\theta,\phi,\xi}(\tau^{(i)})\) and
    scores \(z_i=\nabla_\theta\log q_\theta(\tau^{(i)})\)
    \State Compute weights
    \(w_i=\nu_k(\tau^{(i)})/\mu_k(\tau^{(i)})\),
    \(\omega_i=q_\theta(\tau^{(i)})/\mu_k(\tau^{(i)})\)
    \State Form the objective-gradient estimators
    \[
    \widehat h_\theta=\frac1N\sum_{i=1}^Nw_i\ell'(\delta_i)z_i,
    \qquad
    \widehat g_\psi=\frac1N\sum_{i=1}^N
    w_i\ell'(\delta_i)\nabla_\psi\delta_i,
    \quad \psi\in\{\phi,\xi\}
    \]
    \If{\(\mathsf M=\mathrm{exact}\)}
        \State \(\widehat F_{\mathrm{use}}\gets F(\theta)\) from exact occupancies
    \ElsIf{\(\mathsf M=\mathrm{sampled}\)}
        \State \(\displaystyle
        \widehat F_{\mathrm{use}}\gets\frac1N\sum_{i=1}^N\omega_i
        \sum_{t\ge0}u_t(\theta;\tau^{(i)})u_t(\theta;\tau^{(i)})^\top\)
    \Else
        \State Set \(\widehat F_{\mathrm{use}}\gets\widetilde F(\theta)\), a
        positive-semidefinite structured surrogate for \(F(\theta)\)
    \EndIf
    \If{\(\widehat F_{\mathrm{use}}\) is state-block diagonal}
        \State Initialise \(\Delta\theta\gets0\)
        \For{each state block \(s\)}
            \State \(\widehat F_s\gets(\widehat F_{\mathrm{use}})_{ss}\)
            \State Solve
            \((\widehat F_s+\lambda I)(\Delta\theta)_s
            =-\eta_\theta(\widehat h_\theta)_s\)
        \EndFor
    \Else
        \State Solve the damped linear system
        \[
        (\widehat F_{\mathrm{use}}+\lambda I)\Delta\theta=-\eta_\theta\,\widehat h_\theta
        \]
        \Statex \hspace{\algorithmicindent} or its structured/Fisher-vector-product analogue
    \EndIf
    \State Update forward parameters \(\theta\gets\theta+\Delta\theta\)
    \State Update backward-policy and source-flow parameters by Euclidean steps
    \[
    \phi\gets \phi-\eta_\phi\,\widehat g_\phi,
    \qquad
    \xi\gets \xi-\eta_\xi\,\widehat g_\xi
    \]
\EndFor
\end{algorithmic}
\end{algorithm}

\section{Additional experiments and plots}
% ============================================================

\subsection{Lazy Random Walk Benchmarks}
\label{app:lrw_benchmarks}
% ============================================================
We evaluate six trajectory-balance (TB) training strategies on standard benchmark targets
motivated by the \emph{Lazy Random Walk} (LRW) experiments of \cite{dallantonia2026avoid}. The LRW places a particle on a bounded $(2m{+}1)\times (2m{+}1)$ integer grid ($m=8$, so a $17\times17$ grid) and runs fixed-length trajectories of $T=20$ steps from the origin; actions are right, left, up, down, or stay, with boundary masking. Each run uses $15{,}000$ gradient steps with batch size 64.

We report (i)~the \emph{total variation} distance
$\text{TV}(\hat{p}, \pi) = \tfrac{1}{2}\|\hat{p} - \pi\|_1$
between the exact terminal distribution $\hat{p}$ and the normalised target $\pi$;
(ii)~the number of \emph{modes found} (grid cells assigned $\geq 10\%$ of their true mass under $\pi$, evaluated over the full grid via dynamic programming);
and (iii)~the \emph{ELBO} $\mathbb{E}_{\hat{p}}[\log R]$, the expected log-reward under the learned distribution, together with the \emph{reward gap} $\mathbb{E}_{\pi}[\log R] - \mathbb{E}_{\hat{p}}[\log R]
\geq 0$ (zero iff $\hat{p}=\pi$) and the
KL divergence $\mathrm{KL}(\hat{p}\|\pi)$. We compare Euclidean TB against three Fisher-Rao variants: sampled natural-gradient with batch-estimated occupancies, exact natural-gradient with DP-computed occupancies, and approximate natural-gradient with a factorised mean-field Fisher
\(\tilde m(x,y,t)=m_x(x,t)m_y(y,t)\). We also benchmark against Sinkhorn-Wasserstein-corrected TB and ACE/DTB \citep{dallantonia2026avoid}.

\subsubsection{8-Gaussians Target}

The target $\pi$ is a mixture of eight equally-spaced isotropic Gaussians
arranged on a circle of radius~$5$ grid units, truncated and normalised to the
$17\times17$ grid (Fig.~\ref{fig:8g_overview}d).
The mixture has $29$ modes at threshold $10\%$.

\begin{figure}[!ht]
  \centering
  \includegraphics[width=\textwidth]{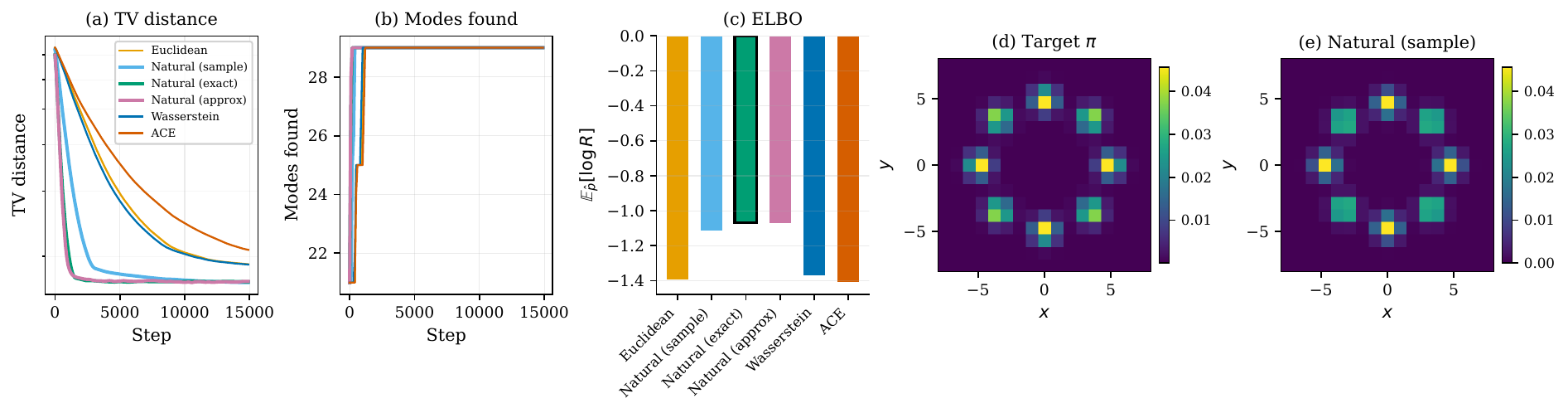}
  \caption{8-Gaussians benchmark.
    (a)~TV distance over 15\,000 training steps (lower is better).
    (b)~Modes found over training.
    (c)~ELBO $\mathbb{E}_{\hat{p}}[\log R]$ at convergence (higher is better).
    (d)~Target distribution $\pi$.
    (e)~Learned terminal distribution of the best method (Natural, sample).
    }
  \label{fig:8g_overview}
\end{figure}

Fig.~\ref{fig:8g_overview} summarises convergence and terminal fit: the
natural-gradient variants reduce TV faster (a), all methods recover all
\(29\) modes, so mode coverage is not the differentiating metric here (b), and the natural-gradient variants attain higher ELBOs (c);
panels (d--e) compare the target with the learned distribution of Natural
(sample). 

Table~\ref{tab:8g_ext} reports the complete final metrics after
\(15{,}000\) steps. The three natural-gradient variants (sample, exact, approx) achieve
TV~$\approx 0.170$--$0.171$, a relative improvement of
$\sim$$10\%$ over Euclidean ($0.190$) and Wasserstein ($0.190$),
and $\sim$$18\%$ over ACE ($0.208$).
The ELBO gap is more pronounced: natural-gradient methods reach
$\mathbb{E}_{\hat{p}}[\log R]\approx{-1.07}$ to ${-1.12}$,
versus $\approx{-1.37}$ to ${-1.41}$ for the baselines.
The reward gap for Natural (exact) is only $0.026$ nats, compared to
$0.35$ nats for Euclidean, confirming that Riemannian preconditioning
yields a tighter approximation to $\pi$.
The factorised approximation (Natural approx) matches the
performance of the more expensive exact and sampled Fishers at lower
computational cost, as it avoids a full DP sweep or matrix inversion.
% ---- Extended tables (TV, Modes, ELBO, Gap, KL, JSD) ----
\begin{table}[!ht]
  \centering
  \caption{\textbf{8 Gaussians: extended final metrics} after 15\,000 steps. Modes out of 29. ELBO$=\mathbb{E}_{\hat p}[\log R]$; Gap$=\mathbb{E}_\pi[\log R]-\text{ELBO}$ ($\downarrow$ = better). Best per column in \textbf{bold}.}
  \label{tab:8g_ext}
  \begin{tabular}{lcccccc}
    \toprule
    Method & TV\,$(\downarrow)$ & Modes\,$(\uparrow)$ & ELBO\,$(\uparrow)$ & Gap\,$(\downarrow)$ & KL$(\hat p\|\pi)\,(\downarrow)$ & JSD\,$(\downarrow)$ \\
    \midrule
    Euclidean & 0.1902 & \textbf{29} & -1.3941 & 0.3501 & 0.1713 & 0.0377 \\
    Natural (sample) & \textbf{0.1699} & \textbf{29} & -1.1158 & 0.0717 & \textbf{0.1029} & \textbf{0.0255} \\
    Natural (exact) & 0.1706 & \textbf{29} & \textbf{-1.0699} & \textbf{0.0259} & 0.1055 & 0.0268 \\
    Natural (approx) & 0.1708 & \textbf{29} & -1.0740 & 0.0300 & 0.1062 & 0.0269 \\
    Wasserstein & 0.1898 & \textbf{29} & -1.3723 & 0.3283 & 0.1666 & 0.0372 \\
    ACE & 0.2078 & \textbf{29} & -1.4074 & 0.3634 & 0.1766 & 0.0412 \\
    \bottomrule
  \end{tabular}
\end{table}

\subsection{Rings Target}
\label{sec:rings}
% ============================================================

The rings target is a mixture of three concentric annuli on the
$17\times17$ grid (Fig.~\ref{fig:rings_overview}d).
The continuous-mass support of each ring is mapped to the nearest
grid cells; the total number of modes (grid cells receiving
$\geq 10\%$ of their assigned mass) is 36. Fig.~\ref{fig:rings_overview} summarises convergence and the best learned
distribution. Fig.~\ref{fig:rings_dists} compares the terminal
distributions across methods, while Fig.~\ref{fig:rings_bars} and
Table~\ref{tab:rings_ext} report the final metrics.

\begin{figure}[!ht]
  \centering
  \includegraphics[width=\textwidth]{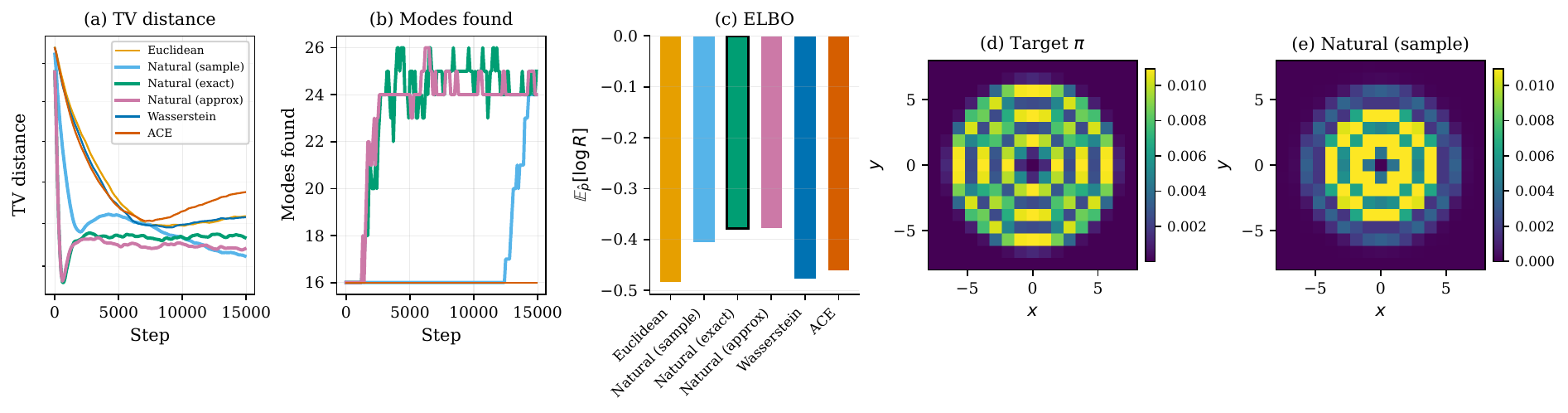}
  \caption{%
    \textbf{Rings benchmark.}
    (a)~TV distance; (b)~modes found; (c)~ELBO at convergence;
    (d)~target $\pi$; (e)~learned distribution of the best method
    (Natural, exact).
    The rings target is harder: natural-gradient methods recover
    significantly more modes and achieve higher ELBO than Euclidean,
    Wasserstein, and ACE.
  }
  \label{fig:rings_overview}
\end{figure}

\begin{figure}[!ht]
  \centering
  \includegraphics[width=\textwidth]{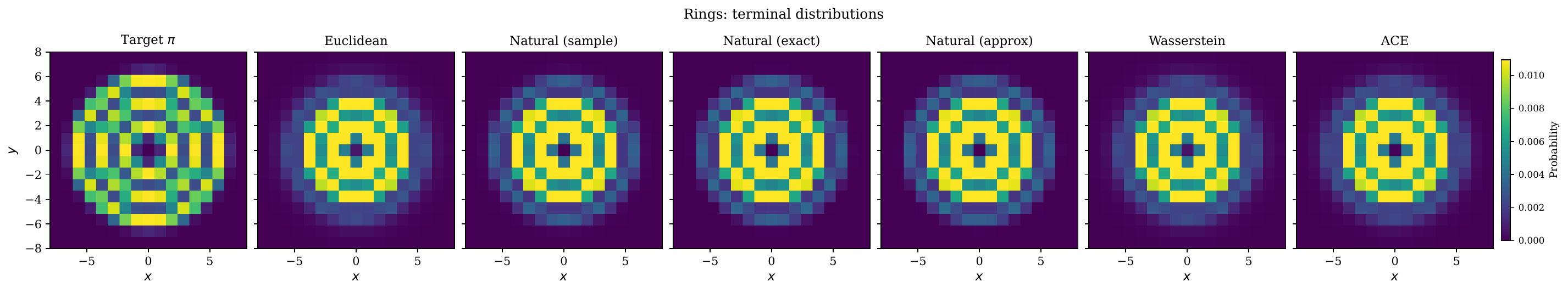}
  \caption{%
    \textbf{Rings: learned terminal distributions.}
    Euclidean, Wasserstein, and ACE collapse onto 16 of the 36 modes.
    All three natural-gradient variants recover 24-25 modes, indicating
    that Riemannian preconditioning is critical for exploration of the
    ring-shaped geometry.
  }
  \label{fig:rings_dists}
\end{figure}

\begin{figure}[!ht]
  \centering
  \includegraphics[width=\textwidth]{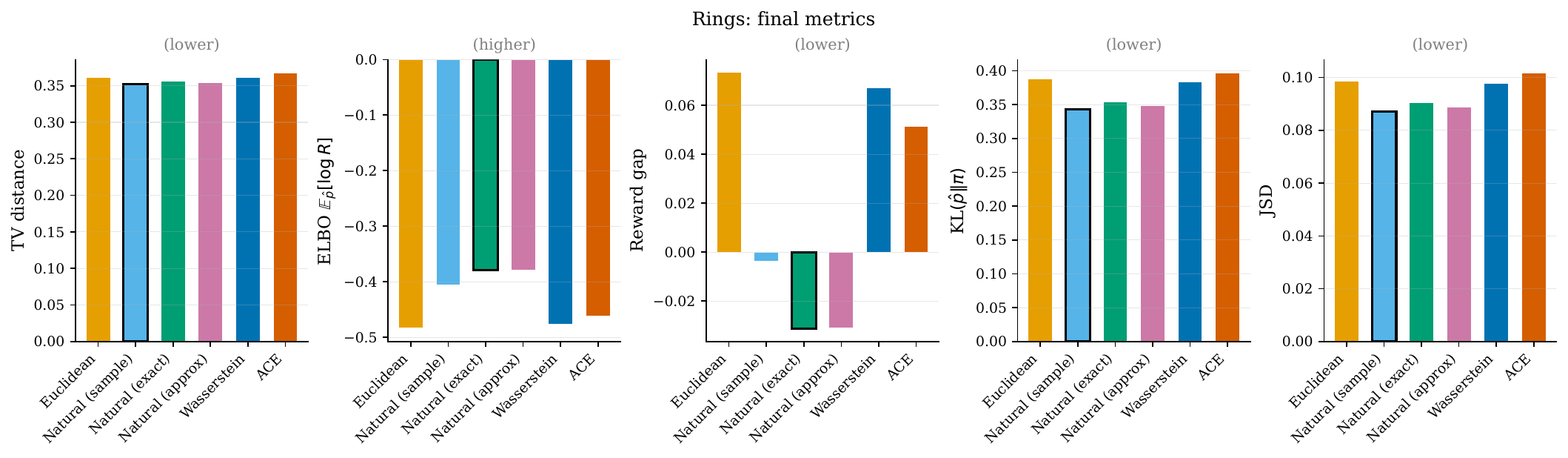}
  \caption{%
    \textbf{Rings: final-metric bar charts.}
    Natural-gradient methods recover 24-25 modes versus 16 for all baselines,
    and achieve meaningfully higher ELBO and lower KL divergence.
  }
  \label{fig:rings_bars}
\end{figure}

\paragraph{Experimental setup.}
Table~\ref{tab:rings_ext} shows that the rings benchmark is considerably
harder: Euclidean, Wasserstein, and ACE all converge to only 16 modes
($44\%$ of 36).
Natural (sample) and Natural (approx) both find 24 modes; Natural (exact)
recovers 25 modes, the best result across all methods.
TV differences are moderate in absolute terms ($0.352$-$0.367$) because the
target is nearly uniform over the ring cells; the mode count and ELBO are the
more informative signals.
Natural-gradient methods achieve a \emph{negative} reward gap
($\approx -0.03$ nats for exact and approx), reflecting that the
learned distribution over-weights high-reward ring cells relative to $\pi$
while still missing some modes.
The factorised natural-gradient approximation again matches the
full-Fisher variants at a fraction of the per-step cost.

\paragraph{Limitations} The rings evaluation is not intended as a replication of the
ACE benchmark~\cite{dallantonia2026avoid}: the original work targets a different regime
in which modes are spatially separated by wide, low-reward gaps, and the divergent explorer gains its advantage precisely by actively avoiding already-
visited high-reward regions to cross those barriers. Our rings target, by contrast, places three concentric annuli with narrow
inter-ring gaps; this is a setting that favours methods with strong geometric inductive bias (Riemannian preconditioning along the ring curvature) rather
than long-range divergent exploration, and is the regime in which our natural-
gradient variants excel.
Because we did not locate a public ACE implementation or a complete LRW
configuration, our ACE results should be viewed as an indicative baseline
rather than an exact reproduction. A more conclusive comparison would require
running ACE on rings with substantially larger inter-mode gaps, where its
exploration mechanism is most effective.

\begin{table}[!ht]
  \centering
  \caption{\textbf{Rings: extended final metrics} after 15\,000 steps. Modes out of 36. ELBO$=\mathbb{E}_{\hat p}[\log R]$; Gap$=\mathbb{E}_\pi[\log R]-\text{ELBO}$ ($\downarrow$ = better). Best per column in \textbf{bold}.}
  \label{tab:rings_ext}
  \begin{tabular}{lcccccc}
    \toprule
    Method & TV\,$(\downarrow)$ & Modes\,$(\uparrow)$ & ELBO\,$(\uparrow)$ & Gap\,$(\downarrow)$ & KL$(\hat p\|\pi)\,(\downarrow)$ & JSD\,$(\downarrow)$ \\
    \midrule
    Euclidean & 0.3616 & 16 & -0.4835 & 0.0734 & 0.3875 & 0.0987 \\
    Natural (sample) & \textbf{0.3521} & 24 & -0.4063 & -0.0038 & \textbf{0.3435} & \textbf{0.0872} \\
    Natural (exact) & 0.3565 & \textbf{25} & \textbf{-0.3788} & \textbf{-0.0313} & 0.3544 & 0.0905 \\
    Natural (approx) & 0.3540 & 24 & -0.3791 & -0.0310 & 0.3488 & 0.0889 \\
    Wasserstein & 0.3613 & 16 & -0.4771 & 0.0670 & 0.3837 & 0.0977 \\
    ACE & 0.3674 & 16 & -0.4616 & 0.0515 & 0.3967 & 0.1017 \\
    \bottomrule
  \end{tabular}
\end{table}
\subsection{Hypergrid}
\label{app:plots_hypergrid}
\paragraph{Experimental setup.}
We use a tabular forward policy with squared TB loss. All runs use learning
rates \(\eta_F{=}0.1\) for the forward policy and
\(\eta_B{=}\eta_Z{=}0.01\) for the backward policy and \(\log Z\); batch size
\(128\) (ACE: \(64\) per model to match the total sample budget), and five
seeds. Fisher variants use damping \(\lambda{=}10^{-3}\).
\begin{figure}[ht!]
    \centering
    \includegraphics[width=0.9\linewidth]{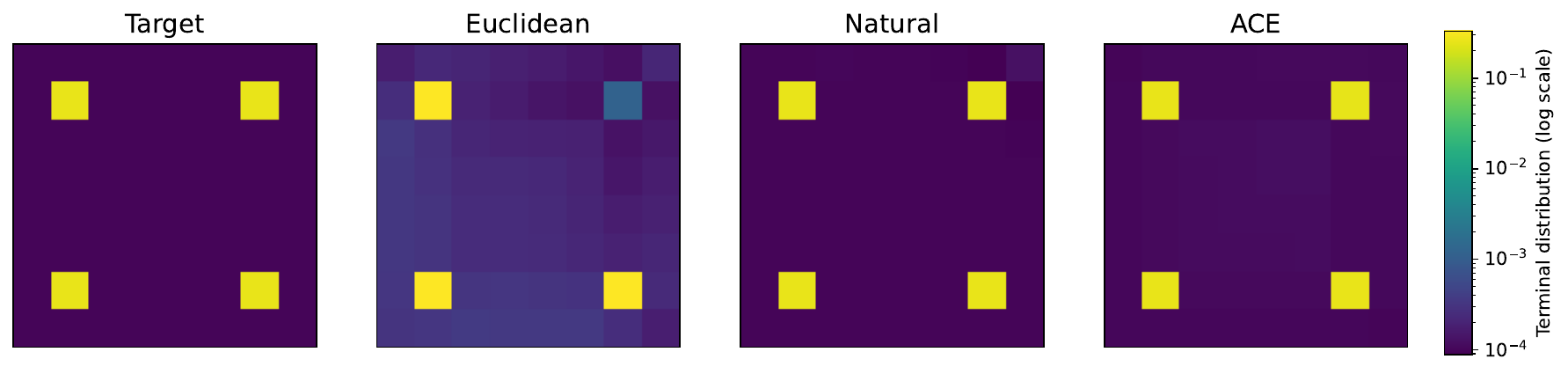}
    \caption{Terminal distribution $p_\theta(x)$ on the $8{\times}8$ grid, averaged over 5 seeds, compared to the target $\pi(x) = R(x)/Z$ (left). All panels share a logarithmic colour scale. Brighter pixels indicate higher mass; the four high-reward modes appear as a $2{\times}2$ pattern of bright cells.}
    \label{fig:vanilla_heatmap_h08}

    \vspace{0.5em}

    \includegraphics[width=0.75\linewidth]{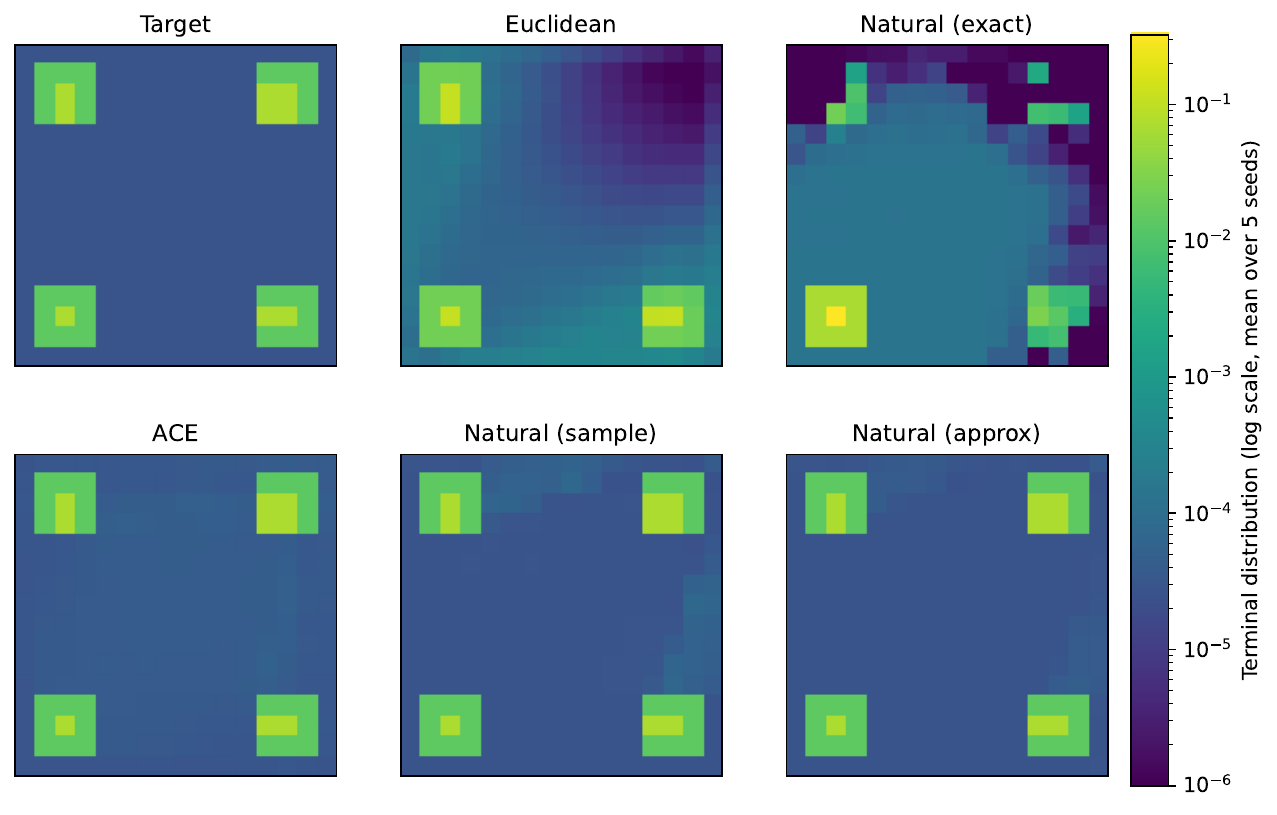}
    \caption{Terminal distribution $p_\theta(x)$ on the $16{\times}16$ grid, averaged over 5 seeds, compared to the target $\pi(x) = R(x)/Z$ (top-left). All panels share a logarithmic colour scale. Brighter pixels indicate higher mass; the four high-reward modes appear as a $2{\times}2$ pattern of bright cells.}
    \label{fig:weighted_hyper_colormaps}

\end{figure}
\subsection{Deceptive Grid World}
\label{app:deceptive_grid_world}

We consider the deceptive grid benchmark of~\cite{kim2025adaptive}, 
which is a modification of the original hypergrid task first 
introduced by~\cite{bengio2021flow}. It is a $D$-dimensional grid $\{0,\dots,H-1\}^{D}$, i.e., a hypercube containing $H^D$ cells. The GFlowNet samples trajectories from the origin that terminate at one of these cells. 

The reward of each terminal state $s=(s^{1},\ldots,s^{D})$ is defined as\footnote{We 
adopt the reward as coded in the implementation of the deceptive grid world environment available
in the public repository \url{https://github.com/alstn12088/adaptive-teacher/blob/main/grid/env.py},
which slightly differs from Eq.~(9) in~\cite{kim2025adaptive}.}
\begin{equation}
    R_{\text{deceptive}}(s) = R_0 +  R_1 - R_1 \prod_{d=1}^{D}
        \mathds{1}\!\left\{\,\abs{x_d - \tfrac{1}{2}} > \tfrac{1}{10}\right\}
      + R_2  \prod_{d=1}^{D}
        \mathds{1}\!\left\{\,\tfrac{3}{10} < \abs{x_d - \tfrac{1}{2}} < \tfrac{2}{5}\right\},
\end{equation}
where $x_d = \tfrac{s^{(d)}}{(H-1)} \in [0,1]$. We 
set \(R_0 = 10^{-5}\), \(R_1 = 10^{-1}\), and \(R_2 = 2\). 

This benchmark is commonly trained with Adam~\cite{kingma2014adam} for the forward policy, while in our runs plain SGD performs poorly, often collapsing to a restricted part of the support. Even with Adam, however, the deceptive grid world is designed so that simple exploration can fail to cover the high-reward regions reliably. Adaptive Teacher~\cite{kim2025adaptive} addresses this exploration issue by introducing an auxiliary policy, the \emph{teacher}, which guides sampling towards high-loss regions, while still using Adam for the parameter updates. Here we ask whether a complementary mechanism, based on Fisher-Rao preconditioning of the forward-policy update, can achieve comparable behaviour.

We therefore consider an Adam-style update with Fisher-Rao preconditioning for
the forward policy. The update keeps the adaptive first-moment averaging of
Adam but replaces the coordinate-wise second-moment rescaling by a Fisher-Rao
preconditioner. Adam and Fisher-Rao preconditioning are not equivalent in
general~\cite{kunstner2019limitations}: Adam rescales each coordinate of the
first-moment estimate by the inverse square root of an estimated second moment
of stochastic gradients, whereas natural-gradient methods precondition updates
using the Fisher information matrix, which defines a local geometry in
distribution space. Nevertheless, both approaches modify the geometry of the
parameter update. Our method follows the structure of \emph{AdaFisher}~\cite{gomes2025adafisher},
using Adam's first-moment recursion together with a Fisher-based
preconditioner, but adapts it to GFlowNet training. In this small grid setting,
we compute Fisher-vector products using the per-step decomposition of
Section~\ref{sec:fisher_rao_structure}, rather than using the Fisher approximation
employed in AdaFisher. This leads to the Fisher update described in Algorithm~\ref{alg:fisher_adam_gflownet}.

\begin{algorithm}[t]
\caption{Fisher-preconditioned Adam update for the forward policy}
\label{alg:fisher_adam_gflownet}
\begin{algorithmic}[1]
\Require initial parameters \((\theta,\phi,\xi)\); proposal law \(\mu\); reference law \(\nu\); damping \(\lambda>0\);
step size \(\alpha_\theta\); first-moment decay \(\beta_1\in[0,1)\);
backward / flow step sizes \(\eta_\phi,\eta_\xi\); batch size \(N\); iterations \(K\)
\State Initialise first moment \(m \gets 0\); iteration counter \(t \gets 0\)
\For{\(k=0,1,\dots,K-1\)}
      \State Sample trajectories \(\tau^{(1)},\dots,\tau^{(N)}\sim \mu_k\)
      \State Compute residuals \(\delta_i=\delta_{\theta,\phi,\xi}(\tau^{(i)})\)
           and importance weights \(\rho_i=\nu_k(\tau^{(i)})/\mu_k(\tau^{(i)})\)
    \State Form the forward-gradient estimator
    \[
    \widehat g_\theta
    \;=\;
    \frac{1}{N}\sum_{i=1}^N
       \rho_i\,\ell'(\delta_i)\,
       \nabla_\theta \log q_\theta(\tau^{(i)})
    \]
    \State \(\triangleright\) \emph{Adam first moment (with bias correction; no second moment, no \(\sqrt{\hat v}\)):}
    \[
    m \gets \beta_1\,m + (1-\beta_1)\,\widehat g_\theta,
    \qquad
    \widehat m \gets \frac{m}{1-\beta_1^{k+1}}
    \]
    \State \(\triangleright\) \emph{Apply the exact Fisher inverse to \(\widehat m\) by conjugate gradient:}
    \[
    \bigl(\widehat F(\theta) + \lambda I\bigr)\,\Delta\theta \;=\; \widehat m
    \]
    \Statex \hspace{\algorithmicindent}
        where each matrix--vector product \(\widehat F(\theta)\,u\)
        is implemented as the per-step Fisher--vector product
        of Section~\ref{sec:fisher_rao_structure}.
    \State Update forward parameters
    \[
    \theta \;\gets\; \theta - \alpha_\theta\,\Delta\theta
    \]
    \State Update backward-policy and source-flow parameters by Euclidean steps
    \[
    \phi \gets \phi - \eta_\phi\,\widehat g_\phi,
    \qquad
    \xi  \gets \xi  - \eta_\xi \,\widehat g_\xi
    \]
\EndFor
\end{algorithmic}
\end{algorithm}

We evaluate this update on two-dimensional ($D=2$) deceptive grids with
$H\in\{128,256\}$. This experiment tests whether Fisher-Rao preconditioning
can provide a competitive alternative to Adaptive
Teacher~\cite{kim2025adaptive}, which improves exploration in GFlowNet training
through an auxiliary teacher policy. Unlike Adaptive Teacher, the Fisher update does not introduce an additional model. Instead, it modifies the forward-policy parameter update using the Fisher-Rao geometry induced by the
forward policy.

\paragraph{Experimental setup.}
The forward policy is a two-hidden-layer MLP with hidden width $32$, leaky-ReLU activations, and a one-hot state encoding of width $2H$. The backward policy is fixed to be uniform, and $\log Z$ is learned. All methods are trained with the trajectory-balance loss, using batch size $16$, a replay buffer, and gradient clipping~\cite{kim2025adaptive}. For the Fisher runs (Algorithm~\ref{alg:fisher_adam_gflownet}), we use forward step size $\alpha_\theta=10^{-3}$, first-moment parameter $\beta_1=0.9$, damping $\lambda=10^{-3}$, and $20$ conjugate-gradient iterations. Fisher-vector products are computed from sampled trajectories using the per-step decomposition of Section~\ref{sec:fisher_rao_structure}. Results are averaged over five seeds.

We first examine exploration through cumulative mode discovery. Following the
benchmark convention, this is the number of distinct maximum-reward terminal
states visited during training. This count measures visitation of the
high-reward regions; distributional fit is assessed separately below. Fig.~\ref{fig:deceptive_visited} shows the matrix-free conjugate-gradient Fisher
update, labelled ``Natural (CG)'', together with Adaptive Teacher, which is the
main reference point for this experiment and uses Adam for the parameter
updates. We also include a K-FAC approximation~\cite{martens2015kfac}, which is more scalable than the
full Fisher solve and is therefore relevant for larger settings. Other
approximations, such as the diagonal Fisher approximation used in
AdaFisher~\cite{gomes2025adafisher}, could also be considered. We also report Euclidean Adam runs, both on-policy and with \(\varepsilon\)-exploration, and Euclidean SGD runs with the same two sampling choices. These additional runs are included to show separately the effect of the optimiser and of simple exploratory sampling.

Fig.~\ref{fig:deceptive_visited} shows that the exact Fisher update achieves
mode coverage close to Adaptive Teacher for a fixed training budget. This
suggests that Fisher-Rao preconditioning can provide a competitive mechanism
for improving coverage without introducing an auxiliary teacher policy. The
K-FAC approximation follows the same qualitative trend, although with weaker
coverage. The Euclidean runs illustrate the exploration difficulty discussed
in~\cite{kim2025adaptive}: simple \(\epsilon\)-exploration does not resolve the
problem by itself. The SGD variants remain far from full mode coverage,
indicating that the adaptive geometry introduced by Adam is important in this task.

Figs.~\ref{fig:deceptive_heatmaps_H128}
and~\ref{fig:deceptive_heatmaps_H256} show the learned terminal
distributions. These heatmaps provide a qualitative view of the mode-coverage
behaviour observed in Fig.~\ref{fig:deceptive_visited}. The matrix-free CG and
K-FAC runs recover the four high-reward regions and the surrounding support
reasonably well, whereas the SGD runs concentrate mass along a narrow region of
the grid.

\begin{figure}[ht!]
\centering
\includegraphics[width=0.9\linewidth]{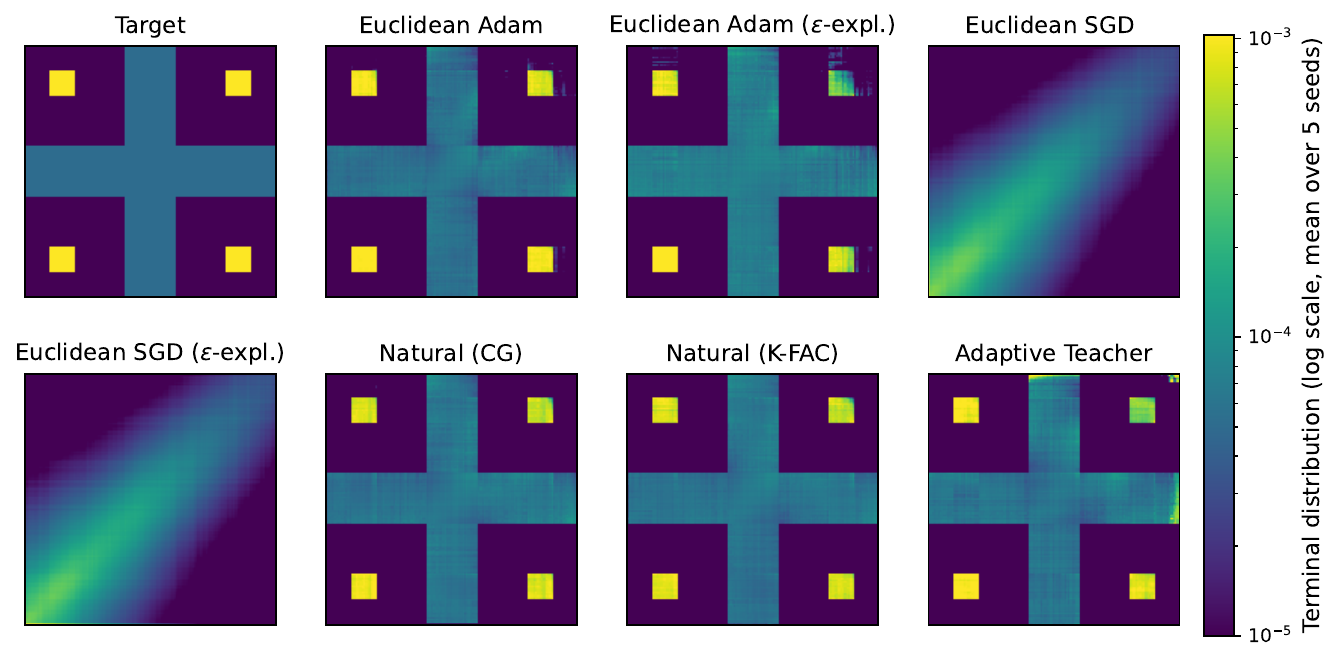}
    \caption{Terminal distribution on the deceptive grid (\(D=2\)) with \(H=128\) at $15{,}000$ training steps, averaged
over \(5\) seeds and shown on a logarithmic colour scale. The target
distribution is shown in the top-left panel. Brighter pixels indicate larger
terminal mass.}
    \label{fig:deceptive_heatmaps_H128}
\end{figure}

\begin{figure}[ht!]
\centering
\includegraphics[width=0.9\linewidth]{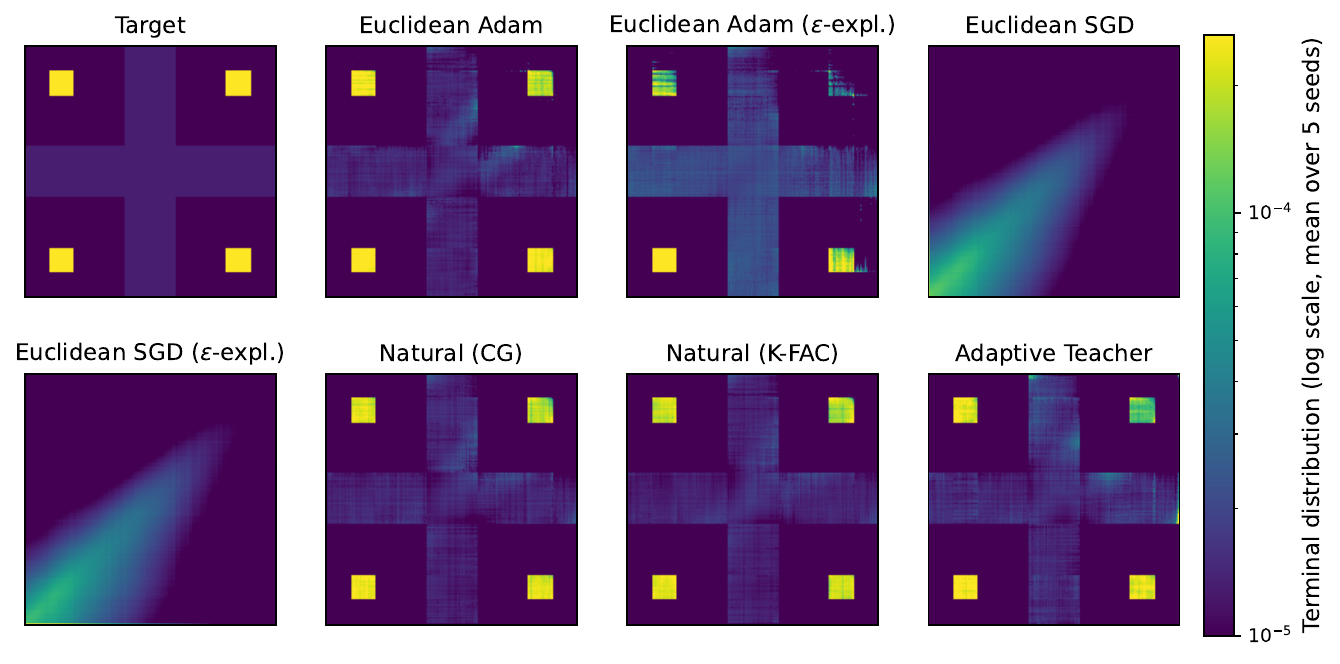}
    \caption{Terminal distribution on the deceptive grid (\(D=2\)) with \(H=256\) at $30{,}000$ training steps, averaged
over \(5\) seeds and shown on a logarithmic colour scale. The target
distribution is shown in the top-left panel. Brighter pixels indicate larger
terminal mass.}
    \label{fig:deceptive_heatmaps_H256}
\end{figure}

We next report distributional distances between the estimated learned terminal
distribution \(\hat p_\theta\) and the target distribution \(\pi(x)=R(x)/Z\).
Fig.~\ref{fig:deceptive_tv} shows the total variation distance,
\begin{equation}
    \mathrm{TV}(\hat p_\theta,\pi) =
\frac{1}{2}\sum_{x\in\mathcal X}
\abs{\hat p_\theta(x)-\pi(x)} \;.
\end{equation}
For consistency with~\cite{kim2025adaptive}, we also report the per-cell \(L_1\) distance,
\begin{equation}
    L_1(\hat p_\theta,\pi)
=
\frac{1}{|\mathcal X|}
\sum_{x\in\mathcal X}
\abs{\hat p_\theta(x)-\pi(x)} \;,
\qquad |\mathcal X|=H^D \;,
\end{equation}
shown in Fig.~\ref{fig:deceptive_l1}. Since
\begin{equation}
    L_1(\hat p_\theta,\pi)
=
\frac{2}{|\mathcal X|}\mathrm{TV}(\hat p_\theta,\pi) \;,    
\end{equation}
the two curves contain the same information up to a grid-size dependent
rescaling. We include both to make the comparison with prior results immediate.

\begin{figure}[ht!]
  \centering
  \includegraphics[width=\linewidth]{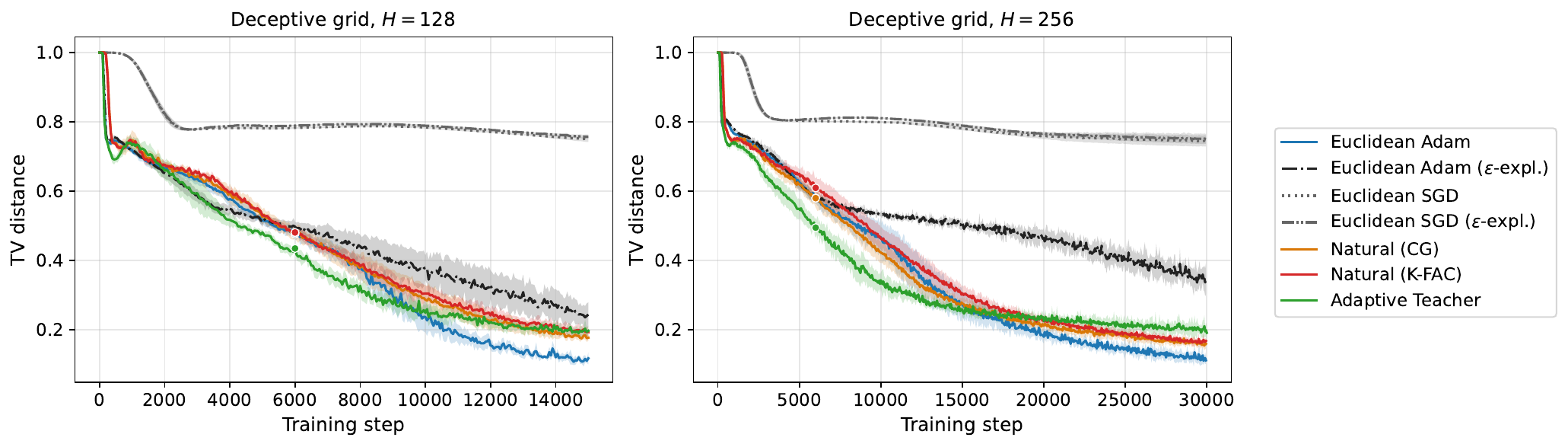}
  \caption{%
    Total variation distance on the deceptive grid
(\(D=2\)) for \(H=128\) (left) and \(H=256\) (right). Mean over \(5\)
seeds; bands are one standard deviation. Coloured dots mark the value at the
\(6{,}000\)-step budget of~\cite{kim2025adaptive} who use a width-$128$ MLP;
    we use width $32$.%
  }
  \label{fig:deceptive_tv}
\end{figure}

\begin{figure}[ht!]
  \centering
  \includegraphics[width=\linewidth]{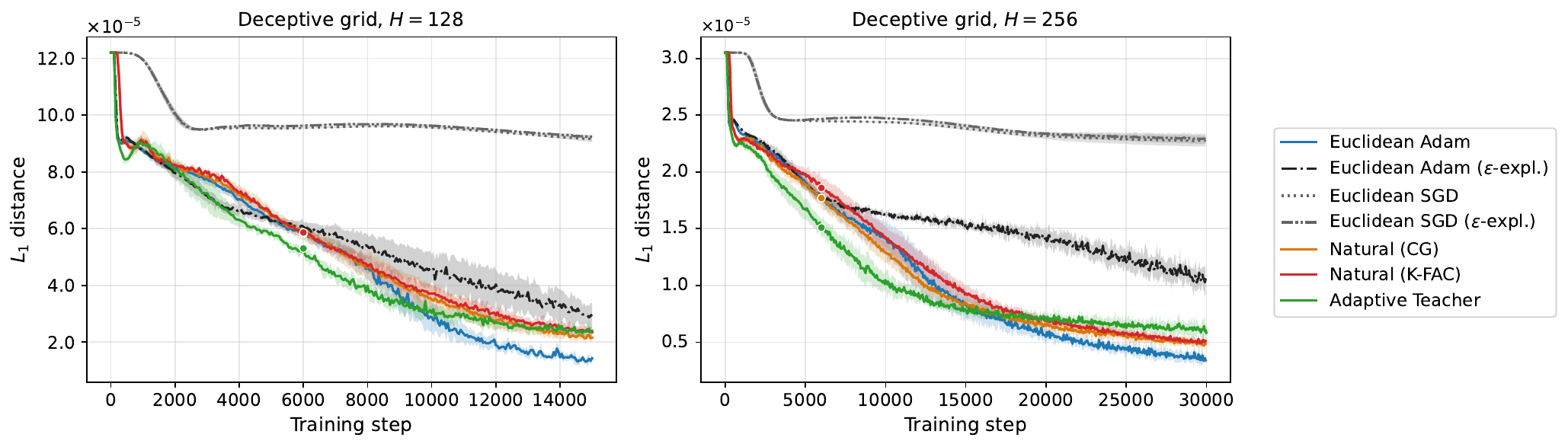}
  \caption{%
    Per-cell \(L_1\) distance on the deceptive grid
(\(D=2\)) for \(H=128\) (left) and \(H=256\) (right). Mean over \(5\)
seeds; bands are one standard deviation. Coloured dots mark the value at the
\(6{,}000\)-step budget of~\cite{kim2025adaptive} who use a width-$128$ MLP;
    we use width $32$.%
  }
  \label{fig:deceptive_l1}
\end{figure}

Table~\ref{tab:deceptive_half_H128} reports the complete \(H=128\) results at
its full \(15{,}000\)-step budget. For \(H=256\),
Tables~\ref{tab:deceptive_half_H256} and
\ref{tab:deceptive_full_H256} report results at \(15{,}000\) and \(30{,}000\)
steps, respectively. At \(H=256\), Adaptive Teacher achieves the strongest
mode coverage after \(15{,}000\) steps, while the Fisher variants remain
competitive and discover substantially more modes than Adam. Adam approaches
comparable coverage only near the full budget. These coverage gains, however,
do not translate into uniform improvements in TV, KL, or ELBO.

\begin{table}[t]
\centering
\caption{Deceptive grid, $d=2$, $H=128$, full training budget (15k steps), five seeds. Mean $\pm$ std; best per column in \textbf{bold}.}
\label{tab:deceptive_half_H128}
\resizebox{\linewidth}{!}{
\begin{tabular}{lcccccc}
\toprule
Method & TV ($\downarrow$) & \# modes ($\uparrow$) & ELBO ($\uparrow$) & Gap ($\downarrow$) & KL($\hat p \| \pi$) ($\downarrow$) & JSD ($\downarrow$) \\
\midrule
Euclidean (Adam) & \textbf{0.1171 $\pm$ 0.0170} & 676 / 676 $\pm$ 0.4 & \textbf{-0.4542 $\pm$ 0.0576} & \textbf{0.2281 $\pm$ 0.0576} & \textbf{0.1055 $\pm$ 0.0265} & \textbf{0.0154 $\pm$ 0.0036} \\
Euclidean (Adam, $\epsilon$-expl.) & 0.2433 $\pm$ 0.0352 & 610 / 676 $\pm$ 21 & -0.8249 $\pm$ 0.0747 & 0.5988 $\pm$ 0.0747 & 0.3497 $\pm$ 0.0443 & 0.0653 $\pm$ 0.0146 \\
Euclidean (SGD) & 0.7490 $\pm$ 0.0071 & 352 / 676 $\pm$ 17 & -8.1380 $\pm$ 0.0594 & 7.9119 $\pm$ 0.0594 & 7.1384 $\pm$ 0.0806 & 0.4390 $\pm$ 0.0053 \\
Euclidean (SGD, $\epsilon$-expl.) & 0.7567 $\pm$ 0.0065 & 345 / 676 $\pm$ 7.8 & -8.1775 $\pm$ 0.0638 & 7.9513 $\pm$ 0.0638 & 7.3366 $\pm$ 0.1086 & 0.4433 $\pm$ 0.0054 \\
Natural CG & 0.1766 $\pm$ 0.0091 & 666 / 676 $\pm$ 11 & -1.0575 $\pm$ 0.0258 & 0.8314 $\pm$ 0.0258 & 0.7606 $\pm$ 0.0669 & 0.0457 $\pm$ 0.0041 \\
Natural K-FAC & 0.1927 $\pm$ 0.0137 & 673 / 676 $\pm$ 6.3 & -1.3401 $\pm$ 0.0840 & 1.1140 $\pm$ 0.0840 & 1.1574 $\pm$ 0.1427 & 0.0545 $\pm$ 0.0060 \\
Adaptive Teacher~\citep{kim2025adaptive} & 0.1965 $\pm$ 0.0151 & \textbf{676 / 676 $\pm$ 0.0} & -0.7283 $\pm$ 0.0896 & 0.5021 $\pm$ 0.0896 & 0.3258 $\pm$ 0.0584 & 0.0418 $\pm$ 0.0038 \\
\bottomrule
\end{tabular}
}
\end{table}

\begin{table}[t]
\centering
\caption{Deceptive grid, $d=2$, $H=256$, half training budget (15k steps), five seeds. Mean $\pm$ std; best per column in \textbf{bold}.}
\label{tab:deceptive_half_H256}
\resizebox{\linewidth}{!}{
\begin{tabular}{lcccccc}
\toprule
Method & TV ($\downarrow$) & \# modes ($\uparrow$) & ELBO ($\uparrow$) & Gap ($\downarrow$) & KL($\hat p \| \pi$) ($\downarrow$) & JSD ($\downarrow$) \\
\midrule
Euclidean (Adam) & 0.2721 $\pm$ 0.0356 & 2031 / 2601 $\pm$ 251 & \textbf{-0.9361 $\pm$ 0.1245} & \textbf{0.6851 $\pm$ 0.1245} & \textbf{0.4169 $\pm$ 0.0867} & 0.0836 $\pm$ 0.0219 \\
Euclidean (Adam, $\epsilon$-expl.) & 0.5069 $\pm$ 0.0154 & 878 / 2601 $\pm$ 65 & -1.4407 $\pm$ 0.0905 & 1.1896 $\pm$ 0.0905 & 0.8786 $\pm$ 0.0234 & 0.2154 $\pm$ 0.0111 \\
Euclidean (SGD) & 0.7808 $\pm$ 0.0052 & 1150 / 2601 $\pm$ 9.6 & -8.3038 $\pm$ 0.0310 & 8.0528 $\pm$ 0.0310 & 7.7384 $\pm$ 0.0848 & 0.4653 $\pm$ 0.0041 \\
Euclidean (SGD, $\epsilon$-expl.) & 0.7889 $\pm$ 0.0053 & 1012 / 2601 $\pm$ 22 & -8.3647 $\pm$ 0.0342 & 8.1136 $\pm$ 0.0342 & 8.0864 $\pm$ 0.0996 & 0.4680 $\pm$ 0.0042 \\
Natural CG & 0.2753 $\pm$ 0.0185 & 2405 / 2601 $\pm$ 76 & -1.3872 $\pm$ 0.0896 & 1.1361 $\pm$ 0.0896 & 1.0838 $\pm$ 0.0934 & 0.0848 $\pm$ 0.0090 \\
Natural K-FAC & 0.3052 $\pm$ 0.0246 & 2406 / 2601 $\pm$ 81 & -1.6213 $\pm$ 0.0653 & 1.3702 $\pm$ 0.0653 & 1.4197 $\pm$ 0.1557 & 0.1020 $\pm$ 0.0087 \\
Adaptive Teacher~\citep{kim2025adaptive} & \textbf{0.2574 $\pm$ 0.0133} & \textbf{2572 / 2601 $\pm$ 7.7} & -1.0089 $\pm$ 0.0587 & 0.7579 $\pm$ 0.0587 & 0.6152 $\pm$ 0.0759 & \textbf{0.0705 $\pm$ 0.0041} \\
\bottomrule
\end{tabular}
}
\end{table}

\begin{table}[t]
\centering
\caption{Deceptive grid, $D=2$, $H=256$, full training budget (30k steps), five seeds. Mean $\pm$ std; best per column in \textbf{bold}.}
\label{tab:deceptive_full_H256}
\resizebox{\linewidth}{!}{
\begin{tabular}{lcccccc}
\toprule
Method & TV ($\downarrow$) & \# modes ($\uparrow$) & ELBO ($\uparrow$) & Gap ($\downarrow$) & KL($\hat p \| \pi$) ($\downarrow$) & JSD ($\downarrow$) \\
\midrule
Euclidean (Adam) & \textbf{0.1061 $\pm$ 0.0073} & 2524 / 2601 $\pm$ 66 & \textbf{-0.4702 $\pm$ 0.0388} & \textbf{0.2191 $\pm$ 0.0388} & \textbf{0.1207 $\pm$ 0.0210} & \textbf{0.0180 $\pm$ 0.0055} \\
Euclidean (Adam, $\epsilon$-expl.) & 0.3371 $\pm$ 0.0436 & 1672 / 2601 $\pm$ 119 & -1.0285 $\pm$ 0.1038 & 0.7774 $\pm$ 0.1038 & 0.5559 $\pm$ 0.0588 & 0.1198 $\pm$ 0.0193 \\
Euclidean (SGD) & 0.7440 $\pm$ 0.0154 & 1394 / 2601 $\pm$ 21 & -8.2004 $\pm$ 0.0388 & 7.9493 $\pm$ 0.0388 & 7.1046 $\pm$ 0.1205 & 0.4394 $\pm$ 0.0098 \\
Euclidean (SGD, $\epsilon$-expl.) & 0.7507 $\pm$ 0.0145 & 1218 / 2601 $\pm$ 19 & -8.1839 $\pm$ 0.0457 & 7.9328 $\pm$ 0.0457 & 7.2178 $\pm$ 0.1104 & 0.4421 $\pm$ 0.0086 \\
Natural CG & 0.1566 $\pm$ 0.0164 & 2564 / 2601 $\pm$ 19 & -1.0435 $\pm$ 0.0922 & 0.7924 $\pm$ 0.0922 & 0.9458 $\pm$ 0.1021 & 0.0436 $\pm$ 0.0049 \\
Natural K-FAC & 0.1706 $\pm$ 0.0108 & 2569 / 2601 $\pm$ 24 & -1.3890 $\pm$ 0.1536 & 1.1380 $\pm$ 0.1536 & 1.4652 $\pm$ 0.2858 & 0.0515 $\pm$ 0.0057 \\
Adaptive Teacher~\citep{kim2025adaptive} & 0.1911 $\pm$ 0.0205 & \textbf{2601 / 2601 $\pm$ 0.0} & -0.7969 $\pm$ 0.0435 & 0.5458 $\pm$ 0.0435 & 0.4614 $\pm$ 0.0545 & 0.0439 $\pm$ 0.0061 \\
\bottomrule
\end{tabular}
}
\end{table}

\section{Bayesian Structure Learning with Natural-Gradient GFlowNets}
\label{sec:dag_experiments}
\label{app:sachs_dag_details}

This appendix describes the final neural-policy experiments used for the
synthetic and Sachs studies. Both studies use the same order-free DAG construction, Linear Transformer policy, replay-based
trajectory-balance objective, and paired Adam/Fisher continuation protocol.

\subsection{Experiment setup and target}

Starting from the empty graph, a trajectory repeatedly adds a directed edge
that does not create a cycle, or selects STOP. Invalid additions, including
edges already present and cycle-forming edges, are masked. For a terminal graph
with \(k\) edges, the backward policy removes one present edge uniformly at
each step, so the backward probability of a particular reverse trajectory is

\[
  \log p_B(\tau)=-\log(k!).
\]

In the synthetic study, for each data seed \(0,1,2\), we sample a \(d=10\) upper-triangular Erd\H{o}s--R\'enyi adjacency matrix with

\[
  \Pr(i\to j)=\frac{2}{d-1}, \qquad i<j,
\]

and then randomly permute the node labels. Thus ``ER-2'' denotes expected
total degree \(2\), or approximately \(d\) edges per graph, rather than expected
in-degree \(2\). The three realised generating DAGs contain \(7\), \(8\), and
\(11\) edges. Each nonzero coefficient is sampled uniformly from
\([-2,-0.5]\cup[0.5,2]\). Writing \(W\in\mathbb R^{d\times d}\) for the
weighted adjacency matrix, with \(W_{ij}\) the coefficient of \(i\to j\), the
data matrix \(X\in\mathbb R^{N\times d}\), for \(N=100\), is generated as

\[
  X=E(I-W)^{-1},
  \qquad E_{ni}\overset{\mathrm{iid}}{\sim}\mathcal N(0,1).
\]

Here \(I\) is the \(d\times d\) identity, and \(n\) and \(i\) index observations
and variables, respectively.

The GFlowNet is not given the generating topological order. Data seeds
\(0,1,2\) are paired with policy-initialisation seeds \(1000,1001,1002\),
respectively.

The real-data study uses the observational subset of the Sachs
protein-signalling data \cite{sachs2005causal}, comprising \(N=853\)
measurements of \(d=11\) phosphoproteins. Each measurement is transformed by
the natural logarithm and each column is standardised. The 17-edge consensus
network used by \cite{deleu2022bayesian} is used only for structural diagnostics. Because the observations are non-interventional and the BGe score
is score-equivalent, this reference is not treated as an identifiable causal
ground truth.

For a candidate DAG \(G\) on \(d\) variables and an observed data matrix
\(\mathcal D\in\mathbb R^{N\times d}\), let \(E(G)\) denote its directed-edge
set and \(\mathrm{Pa}_j(G)\) the parent set of node \(j\). Both studies define
the terminal reward, up to a \(G\)-independent positive constant, by

\begin{equation}
  R(G)
  =
  \exp\!\left\{
    N^{-1/2}\mathrm{BGe}(G\mid\mathcal D)
    -\lambda_{\mathrm{sp}}|E(G)|
  \right\},
  \qquad \lambda_{\mathrm{sp}}=0.5,
  \label{eq:dag_tempered_target}
\end{equation}

where \(N^{-1/2}\) tempers the score and \(\lambda_{\mathrm{sp}}\) controls
sparsity. Any \(G\)-independent reward scaling can be absorbed into the GFlowNet
normaliser \(Z\). The log BGe score decomposes as

\begin{equation}
  \mathrm{BGe}(G\mid\mathcal D)
  =
  \sum_{j=1}^{d}
  s_j\!\left(\mathrm{Pa}_j(G)\mid\mathcal D\right).
  \label{eq:bge_decomposition}
\end{equation}

Here \(s_j(\mathrm{Pa}_j(G)\mid\mathcal D)\) is the local BGe log-score for node
\(j\) and its parent set.

We use the score-equivalent normal-Wishart parameterisation of
\cite{deleu2022bayesian}: prior mean \(\mu_0=0_d\), mean-precision
hyperparameter \(\alpha_\mu=1\), Wishart degrees of freedom \(\alpha_w=d+2\),
and Wishart scale matrix

\[
  T
  =
  \frac{\alpha_\mu(\alpha_w-d-1)}{\alpha_\mu+1}I_d
  =
  \tfrac12 I_d.
\]

Here \(0_d\) and \(I_d\) are the zero vector and identity matrix in dimension
\(d\), respectively.

The BGe component uses a uniform graph prior, after which the explicit edge
penalty in \eqref{eq:dag_tempered_target} supplies a modular sparsity prior.
The reported target is therefore a tempered, sparsity-regularised BGe law, not the untempered BGe posterior.

\subsection{Neural policy and training objective}

The forward policy represents every ordered pair \(i\ne j\) by a token formed
from learned source-node, destination-node, and edge-presence embeddings. Two
linear-attention layers with model width \(64\), four heads, and feed-forward
width \(128\) share information across all \(d(d-1)\) edge tokens. A shared
edge head produces one logit per directed edge, while a mean-pooled STOP head
produces the termination logit. The resulting logits are masked to the valid
action set before the categorical softmax.

For a replay trajectory \(\tau\) terminating at \(G_\tau\), the implementation uses the residual
\begin{equation}
  \delta_\theta(\tau)
  =
  \log Z
  +\sum_t\log\pi_\theta(a_t\mid G_t)
  -N^{-1/2}\mathrm{BGe}(G_\tau\mid\mathcal D)
  +0.5|E(G_\tau)|
  -\log p_B(\tau),
  \label{eq:dag_tb_residual}
\end{equation}
and minimises the mean Huber loss in \(\delta_\theta\), with threshold \(30\).
The forward-policy gradient is clipped to Euclidean norm \(1\) before each update. The scalar \(\log Z\) is updated for every method by SGD with learning
rate \(10^{-3}\); the backward policy is fixed.

At every update, \(64\) new trajectories are sampled; at each construction
state, exploration mixes the current policy with the uniform distribution over
valid actions with weight \(0.2\). These trajectories enter a circular replay
buffer of capacity \(2{,}000\), from which \(64\) complete trajectories are
sampled uniformly without replacement for the gradient update. Policy weight decay is \(10^{-5}\).

\subsection{Sampled neural Fisher and paired updates}

Let \(J_\theta(G)\) be the Jacobian of the valid-action logits at partial graph
\(G\), and define

\[
  C_G
  =
  \operatorname{Diag}(\pi_G)-\pi_G\pi_G^\top,
  \qquad
  \pi_G=\pi_\theta(\cdot\mid G).
\]

All partial graphs are reconstructed from the \(64\) replay trajectories in the
gradient batch. The implementation selects \(M=128\) states, retaining at least one state from each represented graph depth when the budget permits and filling the remaining budget uniformly. It then uses

\begin{equation}
  \widehat F_M
  =
  \frac1M
  \sum_{G\in\mathcal S_M}
  J_\theta(G)^\top C_G J_\theta(G).
  \label{eq:dag_sampled_fisher}
\end{equation}

The expectation over valid actions is therefore exact conditional on each
selected state, while the visited-state distribution is sampled. Equation
\eqref{eq:dag_sampled_fisher} is state-normalised rather than the unnormalised
sum over trajectory steps; its scale is handled by the learning rate and
damping. In the notation of Theorem~\ref{thm:structure_dependent_approx}, it is the surrogate \(\widetilde F\).

The nodewise factorisation of the BGe reward does not make \eqref{eq:dag_sampled_fisher} block diagonal. Acyclicity and STOP couple the
valid actions, while the Linear Transformer shares parameters across edge
tokens and construction depths. The final experiment therefore does not impose
the edge-diagonal or factorised Bernoulli Fisher. Parent-set factorisation motivates possible structured surrogates, but the reported update retains neural cross-parameter coupling through matrix-free Fisher-vector products.

Given the Huber-TB gradient \(\widehat h\) from the same replay batch, the Fisher
branch approximately solves

\[
  (\widehat F_M+\lambda I)v=\widehat h
\]

by preconditioned conjugate gradients (PCG), without forming a dense matrix.
Damping starts at \(\lambda=0.03\); if PCG misses its relative-residual tolerance
\(0.05\), one retry may multiply the damping by \(2\), up to \(0.3\). Damping
decays by a factor \(0.9\) after a sufficiently fast successful solve. PCG is
warm-started from the previous direction and uses at most \(40\) iterations per
attempt.

Two Hutchinson probes estimate one mean curvature value per parameter tensor.
An exponential moving average with coefficient \(0.9\), refreshed every ten
updates, gives a block-scalar numerical preconditioner for PCG. This trace
preconditioner changes convergence of the iterative solver only: it is not the
Fisher approximation being evaluated and does not replace the non-diagonal
system in \eqref{eq:dag_sampled_fisher}. The Fisher direction uses no gradient
momentum.

The proposed Fisher step is scaled to have Euclidean norm at most \(10^{-2}\)
and predicted local KL at most \(10^{-4}\). It is then backtracked using the maximum statewise categorical KL on the \(128\) Fisher states and \(256\)
additional states selected disjointly from the same replay batch. These controls
place the empirical update in the damped local regime studied in Theorem~\ref{thm:structure_dependent_approx}. The PCG residual measures linear solve error for \(\widehat F_M\); it does not estimate the statistical surrogate error \(\rho_\lambda\).

For each seed, Adam first trains a shared checkpoint for \(4{,}000\) updates.
The policy, \(\log Z\), replay contents, NumPy and PyTorch random states, and
optimiser states are saved. Exact copies then continue for \(500\) updates as

\begin{enumerate}[label=(\roman*),nosep]
  \item \textbf{Adam}, with learning rate \(3\times10^{-4}\);
  \item \textbf{KL-controlled Adam}, which forms the same Adam proposal and
        applies the Fisher branch's step-norm and empirical statewise-KL
        safeguards; and
  \item \textbf{Damped Fisher}, implemented with PCG and direction scale
        \(5\times10^{-2}\).
\end{enumerate}

The two Adam branches restore the warm-up Adam state. Damped Fisher starts its
iterative-solver state afresh from the same policy and replay checkpoint; all
three branches restore the \(\log Z\) optimiser and random states. The
KL-controlled branch is an ablation, not a proposed Fisher method: it separates
curvature preconditioning from the shared trust-region controls. The synthetic
study uses data seeds \(0\)--\(2\), and Sachs uses training seeds \(2001\)--\(2003\).

For each seed, \(256\) trajectories are sampled without exploration from the
shared checkpoint using an independent random stream. They are never inserted
into replay. Every \(50\) continuation updates, we evaluate the mean Huber TB
loss and the median and 90th percentile of \(|\delta_\theta|\) on this fixed
probe set. Each curve is divided by its checkpoint value, and its AUC is the
average of the normalised curve over the \(500\) continuation updates. Lower
AUC therefore means greater progress per update over this local continuation;
it is not an end-to-end or wall-clock measure.

Final diagnostics use \(2{,}000\) fresh, non-exploratory DAG samples per method
and seed. If \(\widehat p_{ij}\) is the sampled inclusion probability of edge
\(i\to j\), expected SHD is

\[
  \operatorname{E\text{-}SHD}
  =
  \sum_{i\ne j}
  \left[
    A^*_{ij}(1-\widehat p_{ij})
    +(1-A^*_{ij})\widehat p_{ij}
  \right].
\]

Directed AUPRC ranks \(\widehat p_{ij}\); skeleton AUPRC ranks
\(\widehat p_{ij}+\widehat p_{ji}\) for each unordered node pair. In the
synthetic study, \(A^*\) is the generating DAG. For Sachs, these quantities are descriptive comparisons with the consensus network rather than causal identification metrics. Reported uncertainties are mean \(\pm\) standard error over the three seeds.

\subsection{Synthetic ER-2 results}

Table~\ref{tab:synthetic_fisher_appendix} summarises the continuation and
structural metrics, while Fig.~\ref{fig:synthetic_fisher_convergence} shows the
corresponding fixed-probe curves.

\begin{table}[ht!]
  \centering
  \small
  \setlength{\tabcolsep}{2.7pt}
  \caption{Synthetic \(d=10\) continuation results over three paired seeds.
  AUCs integrate checkpoint-normalised fixed-probe quantities over continuation
  updates; lower is better. Structural metrics use \(2{,}000\) final DAG
  samples per seed. Values are mean \(\pm\) standard error.}
  \label{tab:synthetic_fisher_appendix}
  \resizebox{\textwidth}{!}{%
  \begin{tabular}{lcccccc}
    \toprule
    Method
    & Loss AUC \(\downarrow\)
    & Median AUC \(\downarrow\)
    & P90 AUC \(\downarrow\)
    & Updates to 0.75 \(\downarrow\)
    & E-SHD \(\downarrow\)
    & Skel. AUPRC \(\uparrow\) \\
    \midrule
    Adam
    & \(0.856{\pm}0.023\)
    & \(0.906{\pm}0.015\)
    & \(0.928{\pm}0.013\)
    & \(417{\pm}36\)
    & \(20.27{\pm}0.53\)
    & \(0.950{\pm}0.022\) \\
    KL-controlled Adam
    & \(0.819{\pm}0.015\)
    & \(0.871{\pm}0.006\)
    & \(0.908{\pm}0.005\)
    & \(368{\pm}36\)
    & \(20.24{\pm}0.51\)
    & \(0.959{\pm}0.017\) \\
    Damped Fisher
    & \(\mathbf{0.813{\pm}0.013}\)
    & \(\mathbf{0.868{\pm}0.019}\)
    & \(\mathbf{0.901{\pm}0.004}\)
    & \(\mathbf{351{\pm}24}\)
    & \(20.23{\pm}0.50\)
    & \(0.964{\pm}0.016\) \\
    \bottomrule
  \end{tabular}%
  }
\end{table}

\FloatBarrier
\begin{figure}[ht!]
  \centering
  \includegraphics[width=\textwidth]{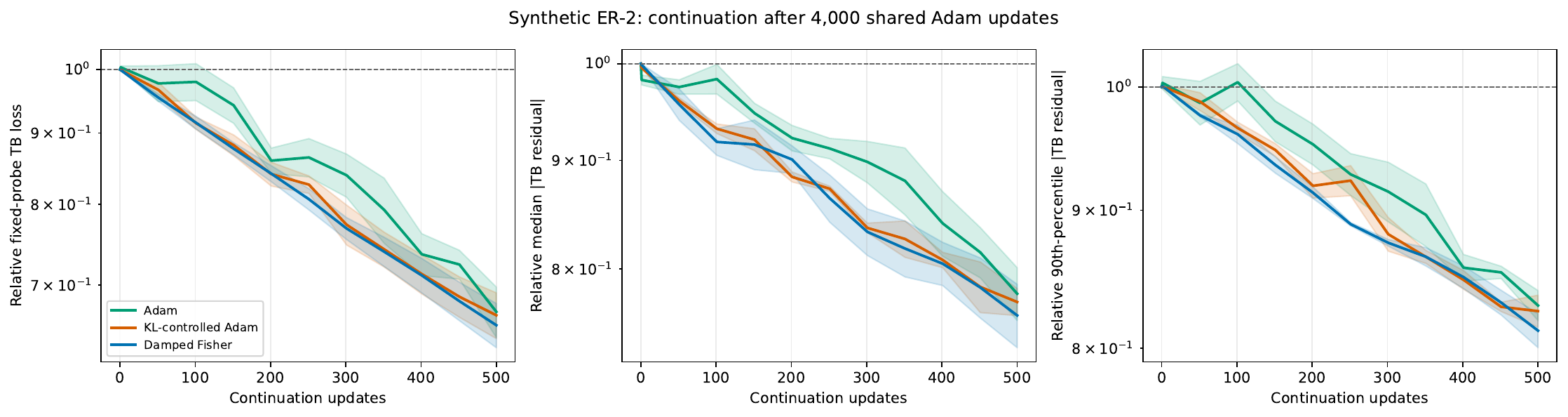}
  \caption{Synthetic ER-2 continuation from checkpoints shared after
  \(4{,}000\) Adam updates, shown as mean \(\pm\) standard error over three
  paired seeds. Damped Fisher improves the integrated fixed-probe quantities
  relative to Adam. Its smaller separation from KL-controlled Adam shows that the trust-region safeguards explain much of the gain.}
  \label{fig:synthetic_fisher_convergence}
\end{figure}
\FloatBarrier

Damped Fisher lowers loss AUC relative to Adam in every paired seed, with mean
paired difference \(-0.0429\pm0.0127\). Its difference from KL-controlled Adam
is smaller, \(-0.0055\pm0.0031\). All three Damped Fisher runs accept every
continuation update; their median PCG relative residual is approximately
\(0.045\), and the maximum statewise KL remains near the prescribed
\(10^{-4}\) cap. Final E-SHD is effectively tied, and the skeleton-AUPRC
uncertainties overlap. The synthetic result therefore supports improved local
optimisation per update over Adam; the small remaining separation from KL-controlled Adam should be viewed as proof-of-concept evidence for the Fisher direction.

\subsection{Sachs results}

The aggregate Sachs results are reported in
Table~\ref{tab:sachs_fisher_main}, with the corresponding fixed-probe
continuation curves shown in Fig.~\ref{fig:sachs_fisher_convergence}. Damped
Fisher attains relative fixed-probe loss
AUC \(0.734\pm0.004\), compared with \(0.751\pm0.005\) for Adam and
\(0.734\pm0.006\) for KL-controlled Adam. The corresponding P90-residual AUCs
are \(0.870\pm0.007\), \(0.881\pm0.009\), and \(0.876\pm0.009\), respectively.
Relative to Adam, the paired loss-AUC difference is
\(-0.0166\pm0.0072\); relative to KL-controlled Adam, it is
\(0.0009\pm0.0089\). Thus Damped Fisher improves on plain Adam, but is not
separated from the controlled ablation on mean loss convergence.

\begin{figure}[ht!]
  \centering
  \includegraphics[width=0.9\textwidth]{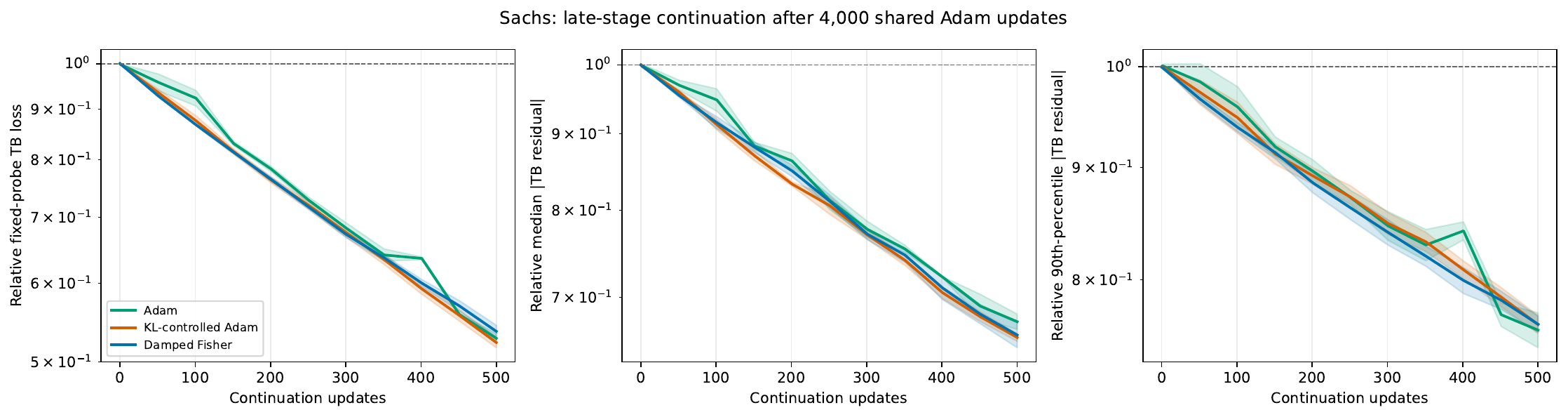}
  \caption{Late-stage Sachs continuation from checkpoints shared after
  \(4{,}000\) Adam updates, shown as mean \(\pm\) standard error over three
  paired seeds. Damped Fisher and KL-controlled Adam improve integrated
  fixed-probe loss relative to Adam, but their mean loss AUCs are essentially
  tied. The figure measures the local \(500\)-update continuation rather than
  end-to-end training.}
  \label{fig:sachs_fisher_convergence}
\end{figure}

\begin{figure}[ht!]
  \centering
  \includegraphics[width=0.8\textwidth]{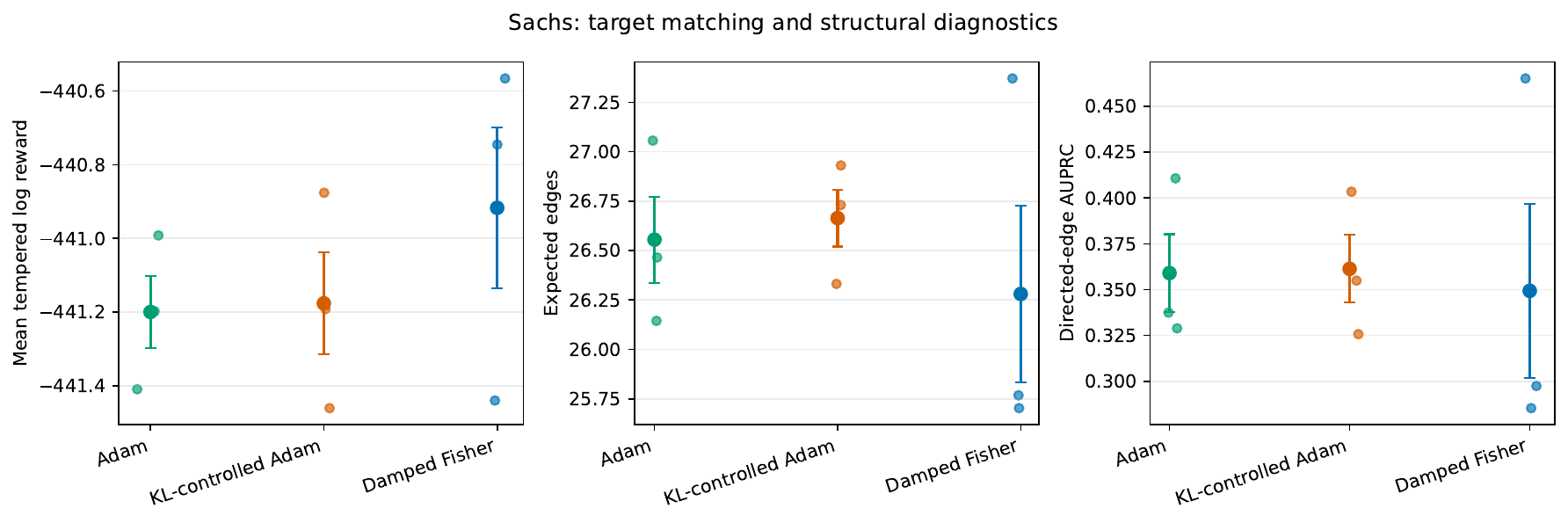}
  \caption{Final Sachs diagnostics from \(2{,}000\) DAG samples per seed.
  Small points are individual seeds; large markers and error bars are mean
  \(\pm\) standard error. Damped Fisher has slightly higher target score and
  lower graph size, but the methods overlap and directed AUPRC does not improve.
  The consensus network is only a structural reference.}
  \label{fig:sachs_final_diagnostics}
\end{figure}
\FloatBarrier

Figure~\ref{fig:sachs_final_diagnostics} summarises the final target and
structural diagnostics. Directed AUPRC is \(0.349\pm0.047\) for Damped Fisher,
\(0.359\pm0.021\) for Adam, and \(0.361\pm0.018\) for KL-controlled Adam;
E-SHD is \(28.80\pm0.19\), \(28.83\pm0.02\), and \(28.98\pm0.11\),
respectively. Expected graph size changes only from \(26.56\pm0.22\) for Adam
and \(26.66\pm0.14\) for KL-controlled Adam to \(26.28\pm0.45\) for
Damped Fisher. Figure~\ref{fig:sachs_posterior_graphs} shows the Damped Fisher
posterior marginals for illustration; its \(0.25\) threshold is not used for
evaluation.

The corrected Sachs experiment supports a modest per-update optimisation effect
relative to Adam, but no reliable structural advantage or advantage over the
KL-controlled loss update. It does not establish causal identification or
intrinsic sparsification by Fisher geometry. All methods target the same
edge-penalised law; the Fisher
models covariance among STOP and the competing valid edge additions at the
visited partial graphs. Together, the synthetic and Sachs studies are
proof-of-concept tests of a sampled, non-diagonal neural Fisher on globally
constrained graph spaces.

\end{document}